\documentclass[journal,onecolumn]{IEEEtran}

\usepackage{times}

\usepackage{fancyhdr}
\usepackage{xcolor} 
\usepackage{algorithm}
\usepackage{algorithmic}
\usepackage[numbers,sort&compress]{natbib}
\usepackage{eso-pic} 
\usepackage{forloop}
\usepackage{url}
\usepackage{caption}
\usepackage{microtype}
\usepackage{graphicx}
\usepackage{subcaption}
\usepackage{booktabs} 

\usepackage{hyperref}
\usepackage{bookmark}

\usepackage{amsmath}
\usepackage{amssymb}
\usepackage{mathtools}
\usepackage{amsthm}
\usepackage{colortbl}

\usepackage{multirow}

\usepackage{pifont}
\def \Yes {\ding{51}}
\def \No {\ding{55}}

\newcommand{\cA}{\mathcal{A}}
\newcommand{\cB}{\mathcal{B}}

\newcommand{\cD}{\mathcal{D}}

\newcommand{\cH}{\mathcal{H}}

\newcommand{\cX}{\mathcal{X}}

\usepackage[capitalize,noabbrev]{cleveref}

\theoremstyle{plain}
\newtheorem{theorem}{Theorem}[section]
\newtheorem{proposition}[theorem]{Proposition}
\newtheorem{lemma}[theorem]{Lemma}
\newtheorem{corollary}[theorem]{Corollary}
\theoremstyle{definition}
\newtheorem{definition}[theorem]{Definition}
\newtheorem{assumption}[theorem]{Assumption}
\theoremstyle{remark}
\newtheorem{remark}[theorem]{Remark}
\usepackage[textsize=tiny]{todonotes}

\usepackage{enumitem}
\usepackage{tcolorbox}
\usepackage{xcolor}
\definecolor{lightroyalblue}{HTML}{F6F8FD} 
\definecolor{lightorange}{RGB}{252,236,219}
\definecolor{royalblue}{HTML}{4169E1}
\definecolor{lighterblue}{HTML}{f2fafd}  
\newtcolorbox{abox}{colback=lightorange,frame empty}
\definecolor{LightCyan}{rgb}{.9, .95, 1.}
\definecolor{rowblue}{RGB}{232,241,255}
\newcommand{\inner}[2]{\left\langle #1, #2 \right\rangle}
\newcommand{\regret}{\mathfrak{R}}
\newcommand{\tpvr}{{\sc TP-VR-Opt}}
\newcommand{\tpvrplus}{{\sc TP-VR-Opt+}}
\newcommand{\tpvrplusplus}{{\sc TP-VR-Opt++}}
\newcommand{\argmin}{\mathop{\mathrm{argmin}}}

\title{Bandit Convex Optimization with Gradient Prediction Adaptivity}

\begin{document}

\author{\textbf{Shuche Wang}\IEEEauthorrefmark{1},  
\textbf{Adarsh Barik}\IEEEauthorrefmark{2},
and \textbf{Vincent Y.~F.~Tan}\IEEEauthorrefmark{1}\IEEEauthorrefmark{3}\\
\IEEEauthorblockA{
\IEEEauthorrefmark{1}  Department of Mathematics, National University of Singapore, Singapore \\[0.5mm]
\IEEEauthorrefmark{2} Department of Computer Science and Engineering, Indian Institute of Technology Delhi, India\\[0.5mm]
\IEEEauthorrefmark{3} Department of Electrical and Computer Engineering, National University of Singapore, Singapore\\[0.5mm]
}
{Emails:\, shuche.wang@u.nus.edu, adarshbarik1@iitd.ac.in, vtan@nus.edu.sg}}




\maketitle

\begin{abstract}
Bandit convex optimization (BCO) is a fundamental online learning framework with partial feedback, where the learner observes only the loss incurred at the chosen decision point in each round. In this work, we investigate whether optimistic gradient predictions can improve worst-case regret guarantees in a prediction-adaptive manner. Specifically, given gradient predictions $m_t$, we seek regret bounds that scale with the cumulative prediction error
$
S_T=\sum_{t=1}^T \|\nabla f_t(x_t)-m_t\|^2.
$
We first establish a negative result: under the single-point feedback protocol, an unavoidable $\Omega(\sqrt{T})$ regret lower bound persists even when $S_T=o(T)$, showing that the variance of gradient estimation fundamentally obscures the benefit of accurate predictions. To overcome this barrier, we propose \emph{Two-Point Variance-Reduced Optimistic Gradient Descent} (TP-VR-OPT) for the two-point feedback setting. The key idea is a novel variance-reduced gradient estimator whose variance scales with the prediction error rather than the gradient norm. This yields a regret bound of
$
O\big(\sqrt{d\,\mathbb{E}[S_T]}\big),
$
where $d$ is the decision dimension. Complementing this result, we establish an information-theoretic lower bound that scales as $\Omega(\sqrt{\mathbb{E}[S_T]})$, providing a fundamental characterization of the best achievable prediction-adaptive regret and showing that TP-VR-OPT is optimal up to a factor of $\sqrt d$.   We further develop adaptive variants that eliminate the need for prior knowledge of $\mathbb{E}[S_T]$ or the horizon $T$, and extend our framework to non-stationary environments, establishing dynamic regret guarantees that adapt simultaneously to the cumulative prediction error and the comparator path length.
\end{abstract}

    \section{Introduction}
\label{sec:intro} Online convex optimization (OCO) provides a powerful and unified theoretical framework for sequential decision-making problems~\cite{cesa2006prediction,hazan2016introduction,orabona2019modern,cesa2004generalization}. In the OCO setting, at each round $t\in[T]$, where $[T]=\{1,\ldots,T\}$, the learner selects a decision point $x_t\in \cX$ from a convex set $\cX\subseteq \mathbb{R}^d$. The environment then reveals a loss function $f_t:\cX\rightarrow \mathbb{R}$, after which the learner incurs a loss $f_t(x_t)$ and updates the decision point to $x_{t+1}$. The performance is measured via the \emph{regret}, defined as the cumulative loss relative to the best fixed decision in hindsight.

In many practical applications, however, the learner does not have access to the full loss function or its gradient. Instead, a more common scenario is that the environment only reveals the loss incurred at the chosen decision point, rather than the entire function. This setting, in which the learner only observes the zero-order (function value) information, is known as \emph{Bandit Convex Optimization (BCO)}~\cite{flaxman2004online,lattimore2024bandit}. The central challenge in BCO is that, due to the lack of direct gradient information, the learner must construct gradient estimates using random perturbations and a limited number of loss evaluations to update the learner's decision.

From a worst-case perspective, in the OCO setting, an $O(\sqrt{T})$ static regret is typically achievable under the full-information feedback protocol, which serves as a standard baseline in online learning. In contrast, due to the limited feedback available in BCO, regret bounds are generally worse as a consequence of the stability-variance tradeoff. Specifically, under the two-point feedback protocol, one can still achieve $O(\sqrt{T})$ regret bound~\cite{shamir2017optimal}, whereas under the single-point feedback protocol, the bounds are $O(T^{3/4})$~\cite{flaxman2004online} for Lipschitz loss functions and $O(T^{2/3})$~\cite{saha2011improved} for smooth loss functions. However, all of these guarantees are worst-case. Even when the environment is significantly more benign in practice, such analyses yield only worst-case regret bounds and do not adapt to favorable structure in the loss sequence.

In contrast, an important line of work in the full-information OCO setting shifts the focus from purely worst-case optimization to adaptivity to benign environments, while still retaining robustness to adversarial scenarios~\cite{chiang2012online,zhao2024adaptivity}. A standard approach to achieving such adaptivity is through optimism or prediction~\cite{rakhlin2013online,steinhardt2014adaptivity}. Specifically, before the beginning of each round, the learner has the ability to compute a prediction $m_t$ of the next gradient based on historical information and incorporate this prediction into the update rule. Therefore, the regret bound need not scale solely with $T$; instead, it can be expressed in terms of the cumulative prediction error, typically of the form $S_T=\sum_{t=1}^T \|\nabla f_t(x_t)-m_t\|^2$. Intuitively, when the environment becomes predictable, slowly changing, or exhibits small gradient variation, the predictions $m_t$ can be accurate, leading to regret bounds that are significantly smaller than the worst-case $O(\sqrt{T})$ guarantee. If instead   the environment is fully adversarial, such methods still recover the standard worst-case performance.

One natural question is: \emph{Can prediction-based adaptivity and acceleration in full-information OCO be extended to the BCO setting?} In other words, can we design algorithms whose regret is dominated by the prediction error, rather than being fundamentally limited by the worst-case dependence on the time horizon $T$ under  bandit feedback? Furthermore, is it possible to construct a mechanism under which the regret in BCO is significantly better than $O(\sqrt{T})$ when the prediction is accurate, and then achieve  acceleration on favorable instances?

The key challenge is that the gradient estimator in BCO inherently exhibits non-negligible randomness and variance, whereas the benefit of prediction relies on the algorithm’s ability to detect and exploit small residual errors. If the noise in the gradient estimator dominates  this residual, then even highly accurate predictions $m_t$ cannot be effectively leveraged in the update. Consequently, to achieve the adaptive regret bound in terms of prediction error, it is not sufficient to merely incorporate $m_t$ into the update rule. The gradient estimator itself must be carefully designed such that the variance of the estimator decreases as the prediction residual becomes smaller. This observation suggests that the central question in prediction-adaptive BCO is not only how to use the prediction in the update, but also whether the feedback protocol makes the prediction residual observable with sufficiently small variance.

Based on the above motivations, we study the BCO problem with optimistic gradient prediction. We define the cumulative prediction error and its expectation, respectively, as 
\begin{equation}\label{eq:defst}
    S_T=\sum_{t=1}^T \|\nabla f_t(x_t)-m_t\|^2\quad\mbox{and}\quad \bar S_T=\mathbb{E}[S_T].
\end{equation} 
Note that the expectation is with respect to the (random) choice of the action $x_t$. 
Our   objective is to obtain regret guarantees that depend purely on $\bar S_T$. When $\bar S_T=o(T)$, the regret should improve accordingly, while for large $\bar S_T$, the algorithm should inherit the $O(\sqrt{T})$ worst-case performance. Furthermore, considering   practical non-stationary environments, we also study the \emph{dynamic regret}, which allows the regret to jointly reflect both the variation of the environment and the cumulative prediction error. The  variation of the environment is captured by  the path length $P_T(u_{1:T})=\sum_{t=2}^T \|u_t -u_{t-1}\|$ of the comparator sequence $\{u_t\}_{t=1}^T$. 

\subsection{Main Contributions}

We   summarize our main contributions in this section.


\definecolor{LightCyan}{rgb}{0.88,1,1}
\newcolumntype{g}{>{\columncolor{LightCyan}}c}

\begin{table*}[t!]
    \centering
    \caption{Summary of the regret bounds of our proposed algorithms and comparisons with most related works. The notation $\widetilde{O}(\cdot)$ hides polylogarithmic factors. Our algorithms leverage the smoothness assumption to achieve prediction adaptivity, whereas baselines typically assume only Lipschitz continuity.}
    \label{tab:summary}
    \renewcommand{\arraystretch}{0.8} 
    \resizebox{\textwidth}{!}{%
    \begin{tabular}{c g ggg}
    \hline
    \toprule
    \rowcolor{white} \textbf{Measure} &  \textbf{Algorithm} &  \textbf{Prediction Adaptivity} &   \textbf{Free of $\bar S_T/T/P_T$} &  \textbf{Regret Guarantees} \\
    \midrule
    \rowcolor{white} \multirow{3}{*}{\shortstack{Static\\Regret}} 
    &  Shamir~\cite{shamir2017optimal} & \No & \No & $O(\sqrt{d T})$ \\
    &  \textbf{\tpvr} \scriptsize{(Theorem~\ref{thm:two_point})} & \Yes & \No & $O(\sqrt{d\bar S_T})$ \\
    &  \textbf{\tpvrplus} \scriptsize{(Theorem~\ref{thm:tp-vr-adaptive})} & \Yes & \Yes & $\widetilde{O}(\sqrt{d\bar S_T})$ \\
    \midrule
    \rowcolor{white} 
    &  Zhao et al.~\cite{zhao2021bandit} & \No & \No & $O(d\sqrt{T(1+ P_{T})})$ \\
    \rowcolor{white} &  He et al.~\cite{he2025non} & \No & \No & $O(\sqrt{d\,T(1+ P_{T})})$ \\
    &  \textbf{\tpvr} \scriptsize{(Theorem~\ref{thm:dynamic_regret})} & \Yes & \No & $O(\sqrt{d\,\bar{S}_{T}(1+ P_{T})})$ \\
    \multirow{-4}{*}{\shortstack{Dynamic\\Regret}} &  \textbf{\tpvrplusplus} \scriptsize{(Theorem~\ref{thm:tp-vr-opt-pp})} & \Yes & \Yes & $\widetilde{O}(\sqrt{d\,\bar{S}_{T}(1+ P_{T})})$ \\
    \bottomrule
    \end{tabular}%
    }
\end{table*}

$\bullet$ \textbf{Single-Point Feedback Barrier.} We first show that under the single-point feedback protocol, one cannot overcome the unavoidable $\Omega(\sqrt{T})$ barrier, even when the cumulative prediction error is negligible, i.e., $\bar{S}_T=o(T)$. This lower bound reveals an information-theoretic bottleneck insofar that the inherently large variance of the single-point estimator masks the benefit of the optimistic prediction, thereby preventing the establishment of an adaptive regret bound that depends purely on $S_T$. This motivates our study of the two-point feedback protocol in the following sections.

$\bullet$  \textbf{Optimal Adaptivity with Two-Point Feedback.} To overcome the $\Omega(\sqrt{T})$ barrier under the single-point feedback protocol, we turn to studying the case of two-point feedback. We propose a new algorithm, \textbf{T}wo-\textbf{P}oint \textbf{V}ariance-\textbf{R}educed \textbf{Opt}imistic Gradient Descent (\tpvr). The key difference between our proposed algorithm compared to classical two-point feedback estimators lies in how we introduce the optimistic prediction when designing the estimator. Instead of directly estimating the gradient $\nabla f_t(x_t)$, we incorporate the optimistic prediction $m_t$ as a baseline and construct an estimator for the residual $\nabla f_t(x_t)-m_t$. This design ensures that the estimation variance scales with the prediction error $\|\nabla f_t(x_t)-m_t\|^2$, rather than the gradient norm. As a consequence, our proposed algorithm \tpvr\, achieves the regret bound $O(\sqrt{d\bar S_T})$. We also show that this regret bound  matches the information-theoretic lower bound  up to a 
$\sqrt{d}$ factor. 

$\bullet$  \textbf{Adaptive Implementation via Observable Residual Doubling.} The optimal regret bound of \tpvr\, relies on  tuning certain parameters with oracle information of the expectation of the cumulative prediction error $\bar S_T$ and the horizon $T$. To make our proposed algorithm practically applicable, we propose \tpvrplus\, using a nested doubling-trick approach, which can achieve the same regret guarantee as \tpvr\, up to logarithmic factors, without requiring any prior knowledge of $\bar S_T$ and $T$. Specifically, the bandit-specific issue
is that $\bar S_T$ is unobserved; we therefore use the observable residual
$\sum_t\|\widehat g_t-m_t\|^2$ as the doubling trigger.

$\bullet$  \textbf{Applications to Dynamic Regret.} To demonstrate the effectiveness of our proposed prediction-dependent algorithms in non-stationary environments, we extend our analysis from static regret to dynamic regret. We first show that our proposed \tpvr\, can achieve a dynamic regret bound $O(\sqrt{d\,\bar{S}_{T}(D^{2}+D P_{T})})$, which depends jointly on the cumulative prediction error $\bar S_T$ and the comparator path-length $P_T$. To remove the need for prior knowledge of these quantities, we further propose \tpvrplusplus\, built upon \tpvrplus, which leverages the meta-expert framework by maintaining and aggregating a grid of candidate step sizes to adapt to the best choice in hindsight. We show that \tpvrplusplus\, attains the same dynamic regret bound $\tilde{O}(\sqrt{d\,\bar{S}_{T}(D^{2}+D P_{T})})$
up to logarithmic factors, without requiring any prior knowledge of $\bar S_T$, $P_T$, and $T$.

A summary of our results, together with  comparisons with related existing results, is included in Table~\ref{tab:summary}.

\subsection{Related Works}
\textbf{Bandit Convex Optimization (BCO).} BCO can be viewed as a fundamental extension of OCO under partial feedback~\cite{lattimore2024bandit}. Two standard feedback protocols are commonly studied in BCO: single-point feedback and two-point feedback. Under the single-point feedback protocol, the learner observes only a single loss value $f_t(x_t)$ at each round and constructs a biased or unbiased gradient estimator using the smoothed version of $f_t$ together with a random perturbation direction. The seminal work \cite{flaxman2004online} introduced the framework of gradient descent without gradient information. The static regret bounds under single-point feedback are $O(T^{3/4})$~\cite{flaxman2004online} for Lipschitz loss functions and $O(T^{2/3})$ for smooth~\cite{saha2011improved} or strongly-convex~\cite{agarwal2010optimal} loss functions. When loss functions are both smooth and strongly convex, an $O(\sqrt{T\log T})$ regret bound is achieved~\cite{hazan2014bandit}. Moreover, for general convex losses, the regret has been further improved to $O(\mathrm{poly}(\log T)\sqrt{T})$, albeit with a strong dependence on the dimension $d$~\cite{bubeck2015bandit,bubeck2021kernel,lattimore2020improved}. Under the two-point feedback protocol, the learner can query the loss at two nearby points and construct a symmetric difference estimator, which significantly reduces the estimation variance and yields an $O(\sqrt{T})$ order regret bound~\cite{shamir2017optimal}. Lower bounds for BCO under various feedback models have been studied in \cite{shamir2013complexity,hu2016bandit}. 

\textbf{Prediction/Optimism Adaptivity.} In the full information OCO
setting, the regret can be expressed in terms of the prediction error rather than the time horizon $T$ by employing optimistic updates when the environment exhibits a predictable structure. A canonical framework for this approach is the optimistic mirror descent (OMD) algorithm~\cite{rakhlin2013online,rakhlin2013optimization}. Another important line of work studies regret bounds based on gradual variation. Chiang et al.~\cite{chiang2012online} introduced bounds based on gradient variation as a problem-dependent measure. The bounds based on gradient variation imply small-loss bounds~\cite{zhao2024adaptivity}. More broadly, prediction can be viewed as additional hints provided to the learner~\cite{dekel2017online,bhaskara2020online}. Recently, the {\em Stochastically Extended Adversarial} or SEA  model~\cite{sachs2023accelerated} was proposed to bridge the gap between adversarial and stochastic OCO.

Incorporating prediction and optimism into bandit feedback presents challenges. Yang and Mohri~\cite{yang2016optimistic} studied optimistic bandit convex optimization by constructing predictions from historical gradients. Chiang et al.~\cite{chiang2013beating} obtained a regret bound in terms of gradient variation under two-point feedback. Wei and Luo~\cite{wei2018more} and Rakhlin and Sridharan~\cite{rakhlin2013online} studied the multi-arm bandits and linear bandits under this setting, respectively. Wei et al.~\cite{wei2020taking} achieved an improved regret bound by leveraging loss predictors in the contextual bandit setting. Sequential prediction and regret under probabilistic losses have also been
studied from an information-theoretic and Bayesian perspective~\cite{wu2023regret}. In contrast, our predictions are gradient-valued hints
used under bandit convex feedback, and the main challenge is to make the
estimation variance scale with the gradient-prediction residual. Recent work has also highlighted connections between regret guarantees and
time-uniform statistical guarantees in sequential decision problems, for
example through universal-portfolio-based confidence sequences~\cite{orabona2023tight}. Our focus is different that we use gradient predictions
to reduce the variance of bandit gradient estimators, so that regret scales
with the gradient-prediction residual.

\textbf{Dynamic Regret.} The dynamic regret with path-length measure was first introduced by \cite{zinkevich2003online} and studied in \cite{zinkevich2003online,gyorgy2016shifting,zhang2022no}. Zhang et al.~\cite{zhang2018adaptive} obtained dynamic regret bounds that match the lower bound using an online ensemble framework. Zhao et al.~\cite{zhao2024adaptivity} incorporated gradient variation into dynamic regret analysis and derived problem-dependent dynamic regret bounds. In the bandit feedback case, Zhao et al.~\cite{zhao2021bandit} systematically studied the dynamic regret under the single-point and two-point feedback protocols. Recently, He et al.~\cite{he2025non} improved two-point feedback dynamic regret bounds of \cite{zhao2021bandit} by a factor of $O(\sqrt{d})$. The use of expert aggregation to compete with time-varying benchmarks is also
closely related to the tracking-the-best-expert literature~\cite{gyorgy2012efficient}. Our meta-expert layer differs in that each expert
is itself a prediction-centered two-point bandit algorithm with a different
stepsize.

    \section{Problem Setup and Preliminaries}
\label{sec:setup}

In this section, we describe the problem setup of  BCO with \emph{prediction} and present notations and preliminaries.  

\textbf{Notation.}
Let $[T]=\{1,\dots,T\}$. We denote $\langle \cdot,\cdot\rangle$ for the Euclidean
inner product and $\|\cdot\|$ for the Euclidean norm. Let
$\cB=\{x\in\mathbb{R}^{d}:\|x\|\le 1\}$ be the unit ball and
$\mathbb{S}^{d-1}=\{v\in\mathbb{R}^{d}:\|v\|=1\}$ be the unit sphere.

The learner chooses points in a nonempty convex set $\cX\subset \mathbb{R}^{d}$
with finite diameter i.e., $D=\sup_{x,y\in\cX}\|x-y\|< \infty$. We assume that the
convex set $\cX$ is well-rounded, i.e., there exist $0< r\leq R<\infty$ such
that $r\cB\subseteq \cX\subseteq R\cB$. We also assume that $0\in\cX$.

At each round  $t\in[T]$, the environment selects a convex function
$f_{t}:\cX\rightarrow \mathbb{R}$. The learner incurs a loss $f_{t}(x_{t})$ when
its action is $x_{t}$. The performance metric is the {\em regret} 
\begin{equation}
    \label{eq:regret_def}\regret_{T}=\sum_{t=1}^{T}f_{t}(x_{t})-\min_{x\in\cX}\sum
    _{t=1}^{T}f_{t}(x).
\end{equation}
For a fixed comparator $u\in\cX$, we also write the regret as
$\regret_{T}(u)=\sum_{t=1}^{T}(f_{t}(x_{t})-f_{t}(u))$.

\begin{assumption}
    [Lipschitz Continuity] \label{ass:lipschitz} For all $t \in [T]$,
    the loss function $f_{t}: \cX \to \mathbb{R}$ is convex on $\cX$ and $L$-Lipschitz
    continuous. That is, $|f_{t}(x) - f_{t}(y)| \le L \|x - y\|$.
    Therefore, if $f_{t}$ is differentiable, the gradient is
    bounded as $\|\nabla f_{t}(x)\|\le L$ for all $x\in\cX$.
\end{assumption}

\begin{assumption}
    [Smoothness]\label{ass:smoothness} For all $t \in [T]$, the loss function $f_{t}$
    is differentiable on $\cX$ and $\beta$-smooth. That is, for all $x, y \in \cX$: $\|\nabla f_{t}(x) - \nabla f_{t}(y)\|\le \beta \|x - y\|$.
\end{assumption}

\begin{remark}[The necessity of the smoothness assumption]
The worst-case baselines listed in Table~\ref{tab:summary} typically rely only on the Lipschitz continuity without requiring the smoothness assumption as stated in Assumption \ref{ass:smoothness}. However, to achieve the prediction adaptive regret bound that depends solely on $\bar S_T$ defined in \eqref{eq:defst}, we impose the smoothness assumption throughout this work. This smoothness assumption is also consistent with the most closely related
variation-adaptive and prediction-adaptive literature. In the bandit
setting, gradient-variation bounds such as~\cite{chiang2013beating,yu2026improved} rely on smoothness to control the
bias introduced by local function-value queries. In full-information
settings, analogous prediction or gradient variation-adaptive bounds are also commonly obtained under smoothness assumptions~\cite{zhang2018adaptive,zhao2024adaptivity,sachs2023accelerated}.

Under the two-point feedback protocol, the ``local first-order information'' observed along a direction $v$ at scale $\delta$ is given by the local secant slope $\big(f_t(y_t+\delta v)-f_t(y_t-\delta v)\big)/(2\delta)$. In worst-case analysis with only the Lipschitz continuity assumption, this symmetric difference can result in the local secant slope being uniformly bounded by the Lipschitz constant $L$, which implies that $\|\hat{g}_t\|\leq dL$ for the gradient estimator $\hat{g}_t$. Plugging this into the standard stability–variance tradeoff yields $O(\sqrt{T})$ regret guarantee.

Our objective is substantially stronger: we wish to bound the regret in terms of only the expected gradient-prediction error $\bar S_T$. The natural observable residual signal associated with  direction $v$ is the difference between the secant slope and the predicted directional derivative $\inner{m}{v}$, i.e.,
\begin{equation*}
    \frac{f(y + \delta v)-f(y-\delta v)}{2\delta} - \langle m, v\rangle=\langle \nabla f(y)-m, v\rangle + \frac{r(v)}{2\delta},
\end{equation*}
where $r(v)=f(y+\delta v)-f(y-\delta v)-2\delta\langle \nabla f(y),v\rangle$ is the observable residual. To ensure this observable residual is small and thus to obtain the regret in terms of prediction adaptivity, we should ensure that $r(v)$ vanishes as $\delta\to 0$. Under the smoothness condition, we have $|r(v)|\leq \beta\delta^2$ implying $|r(v)|/2\delta=O(\beta\delta)$. However, under Lipschitz continuity alone, it is possible that $|r(v)|/(2\delta)=\Theta(L)$, i.e., this observable residual remains constant  and non-vanishing (as $\delta\to0$) even when $m=\nabla f(y)$, as shown in Appendix~\ref{app:smooth}. Therefore, this term introduces a constant independent of  the prediction error, leading to an  $\Omega(\sqrt{T})$ order floor akin to that in the worst-case analysis.  This justifies the need for us to impose Assumption \ref{ass:smoothness}.
\end{remark}

We now formally describe the standard BCO protocol. 

\textbf{One-point and two-point bandit feedback.}
We adopt a standard ``internal point + perturbation'' protocol throughout this work.
We denote the perturbation radius as $\delta\in(0,r)$. To ensure feasibility, we
define the shrinkage coefficient $\alpha=\delta/r$ and the shrunk decision set
$\cX_{\alpha}=(1-\alpha)\cX$.

\emph{At each round $t$}: The learner chooses a center point $y_{t}\in\cX_{\alpha}$ and samples a direction
$v_{t}\sim \text{Unif}(\mathbb{S}^{d-1})$, where $\text{Unif}(\mathbb S^{d-1})$ denotes the uniform distribution on
$\mathbb S^{d-1}$. The learner plays $x_{t}=y_{t}+\delta v_{t}$. Under the single-point feedback, the learner observes $f_{t}(x_{t})$, while the learner can observe $f_{t}(y_{t}+\delta v_{t})$ and $f_{t}(y_{t}-\delta v_{t})$ under the two-point feedback.

Since $y_{t}\in(1-\alpha)\cX$ and $\delta=\alpha r$, we have $(1-\alpha)\cX+\alpha r \cB \subseteq (1-\alpha)\cX+\alpha \cX=\cX$, which implies $y_{t}+\delta v_{t}\in \cX$ for all $v_t$ such that $\|v_{t}\|\leq 1$.
\begin{remark}
    In the two-point setting, $y_{t}-\delta v_{t}$ is an auxiliary query point, used only for gradient estimation. For computing regret, we only consider loss at the point $x_t = y_t + \delta v_t$ in this work. 
    We note that for the two-point feedback bandit convex optimization problem, different types of regret definitions
    are considered in the literature. Some works measure performance at the
    center point $y_{t}$~\citep{shamir2017optimal} as $\sum_{t=1}^{T}f_{t}(y_{t})
    - \min_{x \in \cX}\sum_{t=1}^{T}f_{t}(x)$, while others consider the average
    loss over the query pair~\citep{zhao2021bandit}:
    $\sum_{t=1}^{T}\left(\frac{1}{2}f_{t}(y_{t}+\delta v_{t}) + \frac{1}{2}f_{t}(
    y_{t}-\delta v_{t})\right) - \min_{x \in \cX}\sum_{t=1}^{T}f_{t}(x)$. However,
    since each $f_{t}$ is assumed to be $L$-Lipschitz and the perturbation
    radius is controlled by $\|x_{t}- y_{t}\| = \delta$, these definitions are equivalent
    up to lower-order terms of order $O(L \delta T)$. For convenience and to
    reflect the true cost, we adopt the definition as Eqn.~\eqref{eq:regret_def}.
\end{remark}

Below, we will formally define the prediction settings.

\textbf{History and Prediction.}
Let $\cH_{t-1}$ denote the $\sigma$-field generated by all historical information till
time $t-1$, which includes the sequence of actions, predictions, and observed feedbacks up to round $t-1$, i.e., $\mathcal{H}_{t-1} = \sigma\big(\{y_{\tau}, m_{\tau}, f_{\tau}(y_{\tau}\!+\!\delta v_{\tau})\}_{\tau=1}^{t-1}\big)$ for single-point feedback, or $\mathcal{H}_{t-1} = \sigma\big(\{y_{\tau}, m_{\tau}, f_{\tau}(y_{\tau} \pm \delta v_{\tau})\}_{\tau=1}^{t-1}\big)$ for two-point feedback. Before taking the action at round $t$, the learner generates a raw prediction
vector $\widetilde m_t\in\mathbb R^d$, which is $\mathcal H_{t-1}$-measurable.
Throughout the algorithm and the analysis, we use its clipped version $m_t = \Pi_{L\mathbb B}(\widetilde m_t)$, where $L\mathbb B=\{x\in\mathbb R^d:\|x\|\le L\}$. Hence $\|m_t\|\le L$.
This clipping is without loss of generality. By Assumption~\ref{ass:lipschitz},
$\|\nabla f_t(x)\|\le L$ for all $x\in\mathcal X$, and Euclidean projection onto
the ball $L\mathbb B$ does not increase the distance to any vector in $L\mathbb B$.
Therefore, replacing $\widetilde m_t$ by $m_t$ does not increase the prediction
error $\|\nabla f_t(x)-m_t\|$.


This question of how to exploit a history-based gradient prediction sequence to improve regret is well-studied in the full-information OCO setting. 
In particular, optimistic algorithms, such as Optimistic Mirror Descent (OMD)~\cite{rakhlin2013online}, leverage past gradient information to adapt the updates so that the regret scales with the prediction error rather than the horizon. However, in the bandit setting, it is not immediately clear that such predictors can be effectively exploited.
Nevertheless, it is straightforward to construct admissible predictors using past observations: for example, a \emph{coordinate-persistent predictor} stores the most recent finite-difference estimate for each coordinate. 
At each round, the predicted directional derivative can be compared against the observed two-point slope, allowing a variance-reduced residual estimator to be constructed whose variance scales with the prediction error $\sum_{t=1}^T \|\nabla f_t(x_t)-m_t\|^2$ rather than the gradient magnitude itself. 
This ensures that using $m_t$ remains meaningful and well-defined even in the bandit context.

This viewpoint is standard in the hints/prediction literature. 
For instance, Wei et al.~\cite{wei2020taking} study contextual bandits with predictors available before action selection, while Lyu and Cheung~\cite{lyu2023online} consider online resource allocation with bandit feedback and advice. 
Our focus is the complementary question: given an arbitrary admissible predictor, can a bandit algorithm exploit it so that the regret scales with the realized prediction error? 
Specifically, in Section~\ref{sec:comp_chiang}, we further illustrate that, under coordinate sampling, a coordinate-persistent predictor recovers the gradually evolving worlds result of \cite{chiang2013beating} as a special case. 
Thus, our framework generalizes prior constructions while remaining fully implementable in the bandit setting.

    \section{The Single-Point Feedback Barrier}
\label{sec:single_point}

The main goal of this work is to bound the regret in terms of the cumulative prediction error $S_{T}=\sum_{t=1}^{T}\|\nabla f_{t}(x_{t})-m_{t}\|^{2}$, which
effectively replaces the dependence on the horizon $T$. The worst-case
regret bound with smooth convex function under single-point feedback is $O(T^{2/3})$~\citep{saha2011improved}. Ideally, one
might expect that with the help of optimistic predictions, this regret could be
improved to an adaptive bound of order $O(\bar S_{T}^{2/3})$ in this setting where $\bar{S}_{T}= \mathbb{E}[S_{T}]$. However,
we will show that obtaining such a purely $\bar S_{T}$-dependent bound is impossible under single-point feedback. We show an unavoidable lower bound on the regret to be $\Omega(\sqrt{T})$. This holds
even in favorable environments where the cumulative prediction error is negligible, i.e., $S_{T}=o(T)$, which implies that the $\Omega(\sqrt{T})$ cost is inherent in the one-point feedback mechanism regardless
of the prediction accuracy.

We capture this single-point feedback barrier via a two-hypothesis test with linear
losses together with independent and identically distributed Gaussian noise components.

\begin{theorem}
    \label{thm:single_point_barrier} Fix a convex set $\cX\subset \mathbb{R}^{d}$
    with Euclidean diameter $D$. Given $T\ge 1$, a Lipschitz constant $L>0$, and
    a noise scale $\sigma > 0$ such that $\sigma \le 2LD\sqrt{T}$. In the single-point
    feedback setting, for any algorithm $\cA$ with optimistic prediction $m_{t}$,
    there exists a random environment for which each $f_{t}$ is convex, $L$-Lipschitz,
    and $\beta$-smooth, and the expected regret is lower bounded as
    \begin{equation*}
        \mathbb{E}[\regret_{T}]\geq \frac{3}{16}\sigma\sqrt{T}.
    \end{equation*}
\end{theorem}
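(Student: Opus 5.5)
The plan is a Le Cam two-point argument with noisy linear losses. Fix a unit vector $w$ and points $u_+,u_-\in\cX$ with $\langle w,u_+-u_-\rangle$ of order $D$; for instance, take $u_\pm$ nearly attaining the diameter and $w=(u_+-u_-)/\|u_+-u_-\|$. For a sign $s\in\{+1,-1\}$ and a magnitude $\eta>0$, define
\begin{equation*}
f_t(x)=\langle -s\eta w, x\rangle+\varepsilon_t,\qquad \varepsilon_t\overset{\text{i.i.d.}}{\sim}\mathcal N(0,\sigma^2).
\end{equation*}
Each $f_t$ is linear, hence convex and $\beta$-smooth for any $\beta\ge 0$. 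It is $\eta$-Lipschitz, so we need $\eta\le L$.

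The role of the Gaussian term is that it is a random constant: it does not affect $\nabla f_t=-s\eta w$. So with the fixed prediction $m_t=0$ we get $S_T=T\eta^2$ deterministically. Choosing $\eta=\sigma/(cD\sqrt T)$ for a constant $c$ gives $S_T=\sigma^2/(c^2D^2)=O(1)=o(T)$. The hypothesis $\sigma\le 2LD\sqrt T$ is exactly what guarantees $\eta\le L$ once $c\ge 2$. This makes precise that accurate predictions cannot help: the difficulty comes entirely from the observation noise in the function values.

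Next I would lower-bound the regret under each hypothesis. Under $s=+1$ the comparator $u_+$ yields
\begin{equation*}
\regret_T\ge \eta\sum_{t}\langle w,u_+-x_t\rangle,
\end{equation*}
and symmetrically under $s=-1$ with $u_-$. Summing the two, each round contributes at least $\eta\langle w,u_+-u_-\rangle\approx\eta D$ under one of the two hypotheses. Let $N_+$ be the number of rounds with $\langle w,x_t\rangle$ below the midpoint between $u_-$ and $u_+$. The standard reduction then gives
\begin{equation*}
\mathbb E_+[\regret_T]+\mathbb E_-[\regret_T]\ \ge\ \tfrac{\eta D}{2}\big(\mathbb E_+[N_+]+\mathbb E_-[T-N_+]\big)\ \ge\ \tfrac{\eta D T}{2}\big(1-\mathrm{TV}(P_+,P_-)\big),
\end{equation*}
where $P_\pm$ are the laws of the full transcript (internal randomness, actions $x_t$, observations $f_t(x_t)$) under each hypothesis.

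The main step is bounding $\mathrm{TV}(P_+,P_-)$ using single-point feedback. Under either hypothesis, the observation at round $t$ is Gaussian with variance $\sigma^2$ and mean $\mp\eta\langle w,x_t\rangle$. The algorithm's randomness and the predictions are the same functions of the history under both hypotheses. By the chain rule for KL,
\begin{equation*}
\mathrm{KL}(P_+\|P_-)=\sum_t \mathbb E_+\!\left[\frac{(2\eta\langle w,x_t\rangle)^2}{2\sigma^2}\right]\le \frac{2T\eta^2R'^2}{\sigma^2},
\end{equation*}
where $R'$ bounds $|\langle w,x\rangle|$ on $\cX$. After translating $\cX$ so that the segment $[u_-,u_+]$ is centred at the origin, $R'\le D$ up to a constant. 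This centering is where I expect care to be needed: the observed mean depends on the absolute position of $x_t$, not on differences.

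Then apply Pinsker, $\mathrm{TV}\le\sqrt{\mathrm{KL}/2}=\eta D\sqrt T/\sigma=1/c$. Taking $c=2$, i.e.\ $\eta=\sigma/(2D\sqrt T)$, gives $1-\mathrm{TV}\ge 1/2$. The maximum over the two hypotheses is at least their average, so
\begin{equation*}
\max_{s}\mathbb E_s[\regret_T]\ \gtrsim\ \tfrac{\eta D T}{8}=\Theta(\sigma\sqrt T).
\end{equation*}
Finally, I would tune the constants (the exact midpoint geometry, $R'$, and $c$) to reach the stated $\tfrac{3}{16}\sigma\sqrt T$, and randomize $s$ uniformly so the environment is a single random environment, as the statement requires.
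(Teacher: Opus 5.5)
Your hard instance is the paper's: a uniformly random sign, a linear loss $\mp\eta\langle w,\cdot\rangle$ plus an $\mathcal N(0,\sigma^2)$ offset, $\eta=\sigma/(2D\sqrt T)$, then the KL chain rule and Pinsker. This does prove an $\Omega(\sigma\sqrt T)$ bound. The gap is your last step. The stated constant $\tfrac{3}{16}$ cannot be reached by ``tuning the constants'' inside your reduction, because the midpoint-threshold (testing) step loses a factor of $2$. It charges each round only $\eta D/2$, and only under the hypothesis on whose wrong side $x_t$ lies. So the best it can give for the averaged regret is $\tfrac{\eta DT}{4}(1-\mathrm{TV})$.

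Even with the sharpest available bound $\mathrm{TV}\le 1/(2c)$ for $\eta=\sigma/(cD\sqrt T)$ (which needs centering), this equals $\tfrac{\sigma\sqrt T}{4c}\bigl(1-\tfrac{1}{2c}\bigr)$. That expression never exceeds $\tfrac18\sigma\sqrt T<\tfrac{3}{16}\sigma\sqrt T$ for any $c$. At $\sigma=2LD\sqrt T$, the constraint $\eta\le L$ forces $c\ge 2$, and the bound is then at most $\tfrac{3}{32}\sigma\sqrt T$. As you wrote it, with $R'\approx D$ and $\mathrm{TV}\le\tfrac12$, you get $\tfrac{1}{16}\sigma\sqrt T$.

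The fix is to use linearity exactly instead of the indicator count, and to build the centering into the loss rather than translating $\cX$.
\begin{itemize}
\item Take $x_\pm\in\cX$ with $\|x_+-x_-\|=D$ exactly. Set $w=(x_+-x_-)/D$ and $x_0=(x_++x_-)/2$, and define $f_t(x)=-s\eta\langle w,x-x_0\rangle+\varepsilon_t$.
\item Since $\|x-x_\pm\|\le D$ for every $x\in\cX$, you get $|z_t|\le D/2$ with $z_t=\langle w,x_t-x_0\rangle$. This gives $\mathrm{KL}(P_+\|P_-)\le\tfrac18$, hence $\mathrm{TV}\le\tfrac14$.
\item The same fact shows $x_s$ is the exact comparator. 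Therefore $\regret_T^{(s)}=\tfrac{\eta DT}{2}-s\eta Z$, where $Z=\sum_t z_t$ and $|Z|\le DT/2$.
\item Averaging over $s$ and using $|\mathbb E_+Z-\mathbb E_-Z|\le 2\cdot\tfrac{DT}{2}\,\mathrm{TV}$ gives $\mathbb E[\regret_T]\ge\tfrac{\eta DT}{2}(1-\mathrm{TV})\ge\tfrac38\eta DT=\tfrac{3}{16}\sigma\sqrt T$.
\end{itemize}
Your ``nearly attaining'' pair also needs replacing by an exactly attaining one, or by the support value $\max_{x\in\cX}\langle w,x\rangle$. Otherwise individual per-round regret terms can be slightly negative, and both your reduction and the exact identity break.
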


Theorem~\ref{thm:single_point_barrier} demonstrates that  additive offsets
create an information-theoretic bottleneck for single-point feedback, enforcing a
universal $\Omega(\sqrt{T})$ lower bound. Specifically, we construct a hard instance with a linear loss function $f_t(x) = -M \varepsilon \langle v, x - x_0 \rangle + \xi_t$, where $M\in \{-1,+1\}$ is sampled uniformly at random  and then  fixed for all rounds, $\xi_t \sim \mathcal{N}(0, \sigma^2)$ and $\varepsilon=\sigma/(2D\sqrt{T})$. Since the gradient is constant over time and has norm $\varepsilon$, the
best history-based cumulative prediction error is at most $T\varepsilon^2=\sigma^2/(4D^2)=O(1)$. 

Hence, Theorem~\ref{thm:single_point_barrier} shows that the obstruction under
single-point feedback is observational rather than algorithmic. Even if the
gradient sequence is highly predictable, the learner only observes an absolute
function value at the queried point. Such an observation mixes the local
first-order signal with zero-order offsets. In the hard instance above, the
linear component determines the optimal decision, but the additive offset makes
the two possible gradient directions statistically difficult to distinguish from
single-point observations. Consequently, a small prediction error $S_T$ alone does not guarantee that the
learner can exploit the prediction. The predictor may be accurate in the sense
that $\nabla f_t(x_t)-m_t$ is small, but the bandit observation may still fail to
reveal this residual with sufficiently small variance. Therefore, no regret
bound depending only on $S_T$ can hold in the single-point model. This motivates
the two-point feedback protocol studied next: by comparing two symmetric
function values, the learner can cancel common zero-order offsets and directly
estimate a local residual relative to the prediction.

    \section{Two-Point Feedback: Prediction-sensitive Variance Reduction}
\label{sec:twopoint_vr}

The lower bound established in Theorem~\ref{thm:single_point_barrier} reveals a fundamental limitation of single-point feedback. Because the learner relies heavily on absolute function values to estimate gradients and lacks a comparative baseline within each epoch, it cannot distinguish between local gradient changes and zero-order noisy bases (e.g., additive offsets $\xi_{t}$).

In this section, we overcome this information bottleneck by considering the
\emph{two-point feedback} model. The structural advantage of this protocol lies
in its use of \emph{differences}. Through the query of two relevant points, the learner
is able to filter out the zero-order noise effectively and focus on the local variations
of the function. This process overcomes the $\Omega(\sqrt{T})$ lower bound and achieves an adaptive regret bound which solely depends on $\bar S_{T}$.

\subsection{The \tpvr\, Algorithm}
\label{sec:tp-vr-alg}

Recalling the two-point feedback protocol shown in Section \ref{sec:setup}: In
round $t$, the learner selects a center point $y_{t}\in \cX_{\alpha}$, samples a
random direction $v_{t}\sim \mathrm{Unif}(\mathbb{S}^{d-1})$, and performs a perturbation
on the point $x_{t}=y_{t}+\delta v_{t}\in \cX$. Under the two-point feedback
mechanism, the learner observes the function values at two symmetrical points around
the center point: $f_{t}(y_{t}+\delta v_{t})$ and $f_{t}(y_{t}-\delta v_{t})$. 

Our main goal is to obtain the adaptive regret with the prediction quality. We
retain the definition of prediction sensitivity from \eqref{eq:defst} as
$S_{T} = \sum_{t=1}^{T}\|\nabla f_{t}(x_t)- m_{t}\|^{2}$. Since the chosen point $x_{t}$
depends on the learner's internal randomization $v_{t}$, $S_{T}$ is a random
variable. Our bounds are thus  expressed in terms of its expected value
 $\bar S_{T}= \mathbb{E}[S_{T}]$.

To achieve this goal, we incorporate the optimistic prediction $m_{t}$ into our
algorithm. Based on insights of optimistic algorithms such as OMD, we maintain and update two sequences $\{y_{t}\}$ and $\{y'_{t}\}$. However, trivially incorporating $m_t$ into the update rule is insufficient since
the classic two-point feedback estimator
$\hat{g}_{t}=\frac{d}{2\delta}\left(f_{t}(y_{t}+ \delta v_{t}) - f_{t}(y_{t}- \delta
v_{t})\right)v_{t}$
suffers from high variance that scales with the gradient norm. Instead, we introduce a variance-reduced
estimator defined as $\hat{g}_{t}= m_{t}+ \frac{d}{2\delta}\Delta_{t}v_{t}$, where
$\Delta_{t}= f_{t}(y_{t}+ \delta v_{t}) - f_{t}(y_{t}- \delta v_{t}) - 2\delta \langle
m_{t}, v_{t}\rangle$. This estimator uses $m_{t}$ as a baseline to reduce
variance, ensuring that the estimation error scales with the prediction error rather
than the gradient norm. We present the detailed procedure of our proposed Two-Point
Variance-Reduced Optimistic Gradient Descent (\tpvr) algorithm in 
Algorithm \ref{alg:tp-vr-opt}.

\begin{algorithm}
    [t]
    \caption{\textsc{TP-VR-Opt}}
    \label{alg:tp-vr-opt}

    \textbf{Input:} Convex set $\cX$ with in-radius $r$, perturbation
    $\delta \in (0, r)$, step size $\eta > 0$. Let $\Pi_{\cX_{\alpha}}(\cdot)$ denote
    Euclidean projection onto $\cX_{\alpha}$. \\
    \textbf{Initialize:} $\alpha = \delta/r$, $\cX_{\alpha}= (1-\alpha)\cX$, and
    $y'_{1}\in \cX_{\alpha}$.

    \begin{algorithmic}
        [1] \FOR{$t=1,\ldots, T$} \STATE Choose prediction
        $m_{t}\in \mathbb{R}^{d}$ (with $\|m_{t}\| \le L$). \STATE Compute center
        point $y_{t}= \Pi_{\cX_\alpha}(y'_{t}- \eta m_{t})$. \STATE Sample
        $v_{t}\sim \mathrm{Unif}(\mathbb{S}^{d-1})$, play $x_{t}= y_{t}+ \delta v
        _{t}$, and observe $f_{t}(y_{t}\pm \delta v_{t})$. \STATE Construct the variance-reduced
        estimator $\hat{g}_{t}= m_{t}+ \frac{d}{2\delta}\Delta_{t}v_{t}$, where
        $\Delta_{t}= f_{t}(y_{t}+ \delta v_{t})-f_{t}(y_{t}- \delta v_{t}) - 2\delta
        \langle m_{t}, v_{t}\rangle$. \STATE Update $y'_{t+1}= \Pi_{\cX_\alpha}(y
        '_{t}- \eta \hat{g}_{t})$. \ENDFOR
    \end{algorithmic}
\end{algorithm}

The estimator in Line 5 should be viewed as estimating the residual around
$m_t$, rather than estimating the full gradient from scratch. To see this, denote
$g_t^y=\nabla f_t(y_t)$ and $a_t=g_t^y-m_t$. Define the symmetric-difference remainder as
\begin{equation*}
    r_t=
    f_t(y_t+\delta v_t)-f_t(y_t-\delta v_t)
    -2\delta\langle g_t^y,v_t\rangle.
\end{equation*}
Then the centered difference used in Algorithm~\ref{alg:tp-vr-opt} satisfies
\begin{equation*}
    \Delta_t
    =
    2\delta\langle a_t,v_t\rangle+r_t.
\end{equation*}
Plugging the identity $\Delta_t=2\delta\langle a_t,v_t\rangle+r_t$ into the
definition of $\widehat g_t$ gives
\begin{equation*}
    \widehat g_t-m_t
    =
    d\langle a_t,v_t\rangle v_t
    +
    \frac{d}{2\delta}r_t v_t .
\end{equation*}
This decomposition makes explicit that the random part of the estimator is
centered around the residual $a_t=g_t^y-m_t$, rather than around the full
gradient $g_t^y$. In particular, the leading term
$d\langle a_t,v_t\rangle v_t=dv_tv_t^\top a_t$ is a randomized reconstruction of
the residual direction, while the second term is the finite-difference bias
caused by using a nonzero smoothing radius $\delta$.

To analyze the error of the estimator as a gradient estimator, we subtract
$g_t^y=m_t+a_t$ from both sides. This yields
\begin{equation*}
    \widehat g_t-g_t^y
    =
    (d v_t v_t^\top-I)a_t
    +
    \frac{d}{2\delta}r_t v_t .
\end{equation*}
The first term is the sampling fluctuation induced by the random direction
$v_t$, and it is conditionally zero-mean because
$\mathbb E[v_t v_t^\top]=I/d$. The second term is the bias term controlled by
smoothness. Under
$\beta$-smoothness, $|r_t|\le \beta\delta^2$, which yields
\begin{equation*}
    \mathbb E\|\widehat g_t-m_t\|^2
    \le
    2d\|g_t^y-m_t\|^2
    +
    \frac{d^2}{2}\beta^2\delta^2 .
\end{equation*}
Thus, when the prediction is accurate, the estimator itself has small second
moment. This is the key distinction from the classical two-point estimator,
whose second moment scales with the full gradient norm. The role of $m_t$ is
therefore not only to shift the optimistic update, but also to reduce the
variance of the bandit gradient estimate.

We now present the main theoretical guarantee for \textsc{TP-VR-Opt}. We first establish
a general regret bound that depends on the step size $\eta$ and the perturbation
radius $\delta$, without tuning for these parameters.
\begin{theorem}
    \label{thm:exp-regret} Suppose the functions $f_{t}$ are convex, $L$-Lipschitz,
    and $\beta$-smooth on $\cX$. If \tpvr\, is run with parameters $\eta, \delta
    > 0$ and $\alpha = \delta/r$, then the expected regret is bounded as
    \begin{align*}
        \mathbb{E}[\regret_{T}]&\le\frac{D^{2}}{2\eta}+\eta\left( 4d\,\bar S_{T}+
        \left(\frac{d^{2}}{2}+4d\right)\beta^{2}\delta^{2}T \right)+\frac{Dd\beta\delta
        T}{2}+\left(L+\frac{LD}{r}\right)\delta T. \label{eq:main-exp-bound}
    \end{align*}
\end{theorem}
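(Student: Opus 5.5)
We decompose the regret against a fixed comparator $u\in\cX$ into three pieces and bound each one.

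\textbf{Step 1 (decomposition).} Set $u_\alpha=(1-\alpha)u\in\cX_\alpha$. Since $\|x_t-y_t\|=\delta$, the Lipschitz property gives $f_t(x_t)-f_t(y_t)\le L\delta$. Since $\|u-u_\alpha\|=\alpha\|u\|\le \alpha R$, we also get $f_t(u_\alpha)-f_t(u)\le L\alpha R$; because $0\in\cX$ we may bound $R\le D$, so this is at most $LD\delta/r$. Together these account for the term $(L+LD/r)\delta T$. What remains is $\sum_t \mathbb{E}[f_t(y_t)-f_t(u_\alpha)]$, and by convexity it is at most $\sum_t\mathbb{E}\langle g_t^y,y_t-u_\alpha\rangle$.

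\textbf{Step 2 (bias).} Write $g_t^y=\hat g_t-(\hat g_t-g_t^y)$ and use the decomposition $\hat g_t-g_t^y=(dv_tv_t^\top-I)a_t+\frac{d}{2\delta}r_tv_t$ displayed above. The first term has zero conditional mean given $\cH_{t-1}$, since $y_t$, $m_t$ and $a_t$ are all $\cH_{t-1}$-measurable. For the second term, $|r_t|\le\beta\delta^2$ and $\|y_t-u_\alpha\|\le D$, so $|\mathbb{E}\langle \frac{d}{2\delta}r_tv_t,y_t-u_\alpha\rangle|\le \frac{d\beta\delta D}{2}$. Summed over $t$, this gives the term $Dd\beta\delta T/2$. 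We are left with $\sum_t\mathbb{E}\langle\hat g_t,y_t-u_\alpha\rangle$.

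\textbf{Step 3 (optimistic OGD lemma).} Apply the standard optimistic gradient descent analysis to $y_t=\Pi(y'_t-\eta m_t)$ and $y'_{t+1}=\Pi(y'_t-\eta\hat g_t)$:
\[
\langle\hat g_t,y_t-u\rangle\le \frac{\|u-y'_t\|^2-\|u-y'_{t+1}\|^2}{2\eta}+\eta\|\hat g_t-m_t\|^2-\frac{1}{2\eta}\big(\|y_t-y'_t\|^2+\|y'_{t+1}-y_t\|^2\big).
\]
The proof splits the inner product as $\langle \hat g_t-m_t,y_t-y'_{t+1}\rangle+\langle m_t,y_t-y'_{t+1}\rangle+\langle\hat g_t,y'_{t+1}-u\rangle$, handles the last two by the projection inequalities, and handles the first by the nonexpansiveness bound $\|y_t-y'_{t+1}\|\le\eta\|\hat g_t-m_t\|$. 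Dropping the negative terms and telescoping, with $\|u-y'_1\|\le D$, yields $D^2/(2\eta)+\eta\sum_t\mathbb{E}\|\hat g_t-m_t\|^2$.

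\textbf{Step 4 (second moment in terms of $\bar S_T$).} This is the main obstacle, because $\bar S_T$ is measured at $x_t$ while the estimator involves $g_t^y$. From the bound above,
\[
\mathbb{E}\|\hat g_t-m_t\|^2\le 2d\|g^y_t-m_t\|^2+\tfrac{d^2}{2}\beta^2\delta^2 .
\]
This uses $\mathbb{E}[d^2\langle a,v\rangle^2]=d\|a\|^2$ and $(a+b)^2\le 2a^2+2b^2$, the latter applied to the remainder term $\frac{d}{2\delta}r_t$ bounded by $\frac{d\beta\delta}{2}$. Smoothness gives $\|g_t^y-\nabla f_t(x_t)\|\le\beta\delta$, hence
\[
\|g_t^y-m_t\|^2\le 2\|\nabla f_t(x_t)-m_t\|^2+2\beta^2\delta^2 .
\]
This produces $4d\|\nabla f_t(x_t)-m_t\|^2+4d\beta^2\delta^2+\frac{d^2}{2}\beta^2\delta^2$. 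Taking expectations and summing over $t$ gives exactly $\eta(4d\bar S_T+(\frac{d^2}{2}+4d)\beta^2\delta^2T)$.

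Adding the three contributions from Steps 1–4 gives the stated bound.
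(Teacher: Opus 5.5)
Your proposal is correct and follows the paper's proof essentially step for step. It uses the same shrunk-comparator decomposition, the same split of $\langle g_t^y, y_t-u_\alpha\rangle$ into an optimistic-OGD term and a bias term, the same optimistic projected-gradient lemma, and the same second-moment bound transferred from $y_t$ to $x_t$ via smoothness. The differences are cosmetic: you bound the bias pathwise rather than through its conditional mean $b_t$, and the fact actually needed is $\|u\| = \|u-0\|\le D$ (from $0\in\cX$) rather than $R\le D$.
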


By optimally tuning the parameters $\eta$ and $\delta$ to minimize the bound shown in
Theorem~\ref{thm:exp-regret}, we obtain the following adaptive regret guarantee in terms of the expectation of the cumulative prediction error $\sqrt{\bar{S}_{T}}$.

\begin{theorem}
    \label{thm:two_point} Suppose the functions $f_{t}$ are convex, $L$-Lipschitz,
    and $\beta$-smooth on $\cX$. When \textsc{TP-VR-Opt} is run with the oracle-tuned
    parameters:
    \begin{equation*}
        \eta = \frac{D}{\sqrt{8d (\bar{S}_{T}+1)}}, \quad \delta = \min \left\{ \frac{\sqrt{\bar{S}_{T}+1}}{d\beta
        T}, \frac{1}{\beta\sqrt{T}}, \frac{r}{2}\right\},
    \end{equation*}
    the expected regret is bounded as $
        \mathbb{E}[\regret_{T}]=O\big( D \sqrt{d \, \bar S_{T}}\big)$.
\end{theorem}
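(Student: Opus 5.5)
The plan is to take the general bound of Theorem~\ref{thm:exp-regret} as given and plug in the stated $\eta$ and $\delta$. Each term is then bounded separately by a constant multiple of $D\sqrt{d(\bar S_T+1)}$, plus lower-order terms that do not grow with $T$. Write $S:=\bar S_T+1$. Since $\delta\le r/2$, we have $\delta<r$ and $\alpha=\delta/r\le 1/2$, so the algorithm is well defined and Theorem~\ref{thm:exp-regret} applies.

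\textbf{The $\eta$-terms.} With $\eta=D/\sqrt{8dS}$, the first term is
\[
D^2/(2\eta)=D\sqrt{2dS},
\]
and the prediction term is
\[
4d\eta\bar S_T\le 4dS\cdot D/\sqrt{8dS}=D\sqrt{2dS}.
\]
For the bias contribution inside the $\eta$-bracket, use $\delta\le \sqrt{S}/(d\beta T)$. Then $\frac{d^2}{2}\beta^2\delta^2T\le S/(2T)\le S/2$ and $4d\beta^2\delta^2T\le 4S/(dT)\le 4S$. Multiplying by $\eta$ gives $O(D\sqrt{S/d})=O(D\sqrt{dS})$.

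\textbf{The linear-in-$\delta$ terms.} For the smoothness bias, $\frac{Dd\beta\delta T}{2}\le \frac{D\sqrt S}{2}$, again using the first branch of the min. For the Lipschitz/shrinkage term, use the second branch $\delta\le 1/(\beta\sqrt T)$:
\[
(L+LD/r)\delta T\le (L+LD/r)\sqrt T/\beta .
\]
This is where I expect the main obstacle. This term grows like $\sqrt T$, not like $\sqrt{S}$. So I would use instead the first branch: $(L+LD/r)\delta T\le (L+LD/r)\sqrt{S}/(d\beta)$, which is $O(\sqrt{S})$ with constants depending on $L,D,r,\beta$. I would verify that the min was chosen exactly so that every $\delta T$ term is $O(\sqrt S)$. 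The $1/(\beta\sqrt T)$ branch appears to serve only to keep the $\delta^2T$ terms bounded independently of $d$.

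\textbf{Conclusion.} Summing the pieces gives
\[
\mathbb E[\regret_T]\le C_1 D\sqrt{dS}+C_2\sqrt S,
\]
with $C_2$ depending on $L,r,\beta,D$. Treating the problem constants as fixed, as in the $O(\cdot)$ statement, this is $O(D\sqrt{d(\bar S_T+1)})$. Interpreting the $+1$ as absorbed when $\bar S_T\gtrsim 1$ yields $O(D\sqrt{d\bar S_T})$.
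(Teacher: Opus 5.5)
Your proposal is correct and matches the paper's proof. Like the paper, you plug the tuned $\eta,\delta$ into Theorem~\ref{thm:exp-regret} and bound every $\delta$-dependent term using only the first branch $\delta\le\sqrt{\bar S_T+1}/(d\beta T)$ of the minimum (the paper justifies this the same way you eventually do: those terms increase in $\delta$). The step where you absorb the $+1$ and the constants $L,r,\beta$ into the $O(\cdot)$ is the same simplification the paper makes, only stated explicitly.
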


The parameter choice in Theorem~\ref{thm:two_point} reflects two separate
tradeoffs. The stepsize $\eta$ balances the usual stability term
$D^2/(2\eta)$ with the residual-variance term $\eta d\bar S_T$. This is the
same stability--variance tradeoff as in optimistic online gradient methods, but
with the cumulative prediction error $\bar S_T$ replacing the horizon-dependent
quantity that would arise from a worst-case second-moment bound. The perturbation radius $\delta$ plays a different role. It controls the price
of using a finite difference to approximate local first-order information. In
the bound of Theorem~\ref{thm:exp-regret}, the $\delta$-dependent terms
come from the smoothness bias of the symmetric difference and from the
perturbation/shrinkage cost. The choice $\delta$ ensures that these bias terms are dominated by the leading
$D\sqrt{d\bar S_T}$ term, while also preserving feasibility of the perturbed
queries. Thus, after tuning, the regret is driven by the residual variance
rather than by the full gradient magnitude or the time horizon.

From Theorem~\ref{thm:two_point}, the regret upper bound is determined by the
predictive error $\bar S_{T}$ when the feedback is given by two points, overcoming
the barrier of $\Omega(\sqrt{T})$ in the single-point feedback setting. When the
optimistic predictions are good, i.e., when $\bar S_{T}$ is small (e.g., $\bar{S}_T=o(T)$), the performance of the algorithm improves based on the predictions.

\subsection{Connection to Gradually Evolving Worlds~\cite{chiang2013beating}}\label{sec:comp_chiang}

With the same Lipschitz continuity and smoothness assumption, Chiang et al.~\cite{chiang2013beating} obtained the regret bound $\widetilde{O}(d^2\sqrt{D_T})$ in terms of gradient variation $D_T$ under   two-point feedback, where $D_T=\sum_{t=1}^T \max_{x\in\cX}\|\nabla f_t(x)-\nabla f_{t-1}(x)\|^2$ characterizes the gradually evolving worlds. Interestingly, when we specialize our proposed \tpvr\, by restricting the sample distribution over $\mathbb{S}^{d-1}$ to the standard basis vectors, i.e, $v_t\sim \mathrm{Unif}(\{e_1,\dotsc,e_d\})$ and employing the coordinate-persistent predictor $m_t$ whose $i$-th coordinate stores the most
recent finite-difference observation on that coordinate, \tpvr\  recovers the results in~\cite{chiang2013beating}. In this subsection, we formalize this specialization and highlight how the general framework encompasses the gradually evolving worlds construction as a special case. The variant of \tpvr\, can be written shown as the following Algorithm~\ref{alg:variant_tpvr}. Our notation $\Delta_{t,i_t}$ in \tpvr\, is written as $\Delta_{t,i_t}=2\delta(\widehat{v}_{t,i_t}-m_{t,i_t})$ for convenience here.

\begin{algorithm}
    [t]
    \caption{Variant of \tpvr\, with coordinate sampling}
    \label{alg:variant_tpvr}

    \textbf{Input:} Convex set $\cX$ with in-radius $r$, perturbation
    $\delta \in (0, r)$, step size $\eta > 0$. Let $\Pi_{\alpha}(\cdot)$ denote
    Euclidean projection onto $\cX_{\alpha}$. \\
    \textbf{Initialize:} $\alpha = \delta/r$, $\cX_{\alpha}= (1-\alpha)\cX$, and
    $y'_{1}\in \cX_{\alpha}$.

    \begin{algorithmic}
        [1] \FOR{$t=1,\ldots, T$} 
        \STATE Given
        $m_{t}\in \mathbb{R}^{d}$. 
        \STATE Compute center
        point $y_{t}= \Pi_{\cX_\alpha}(y'_{t}- \eta m_{t})$. 
        \STATE Sample
        $i_t\sim \mathrm{Unif}([d])$, play $x_{t}= y_{t}+ \delta e
        _{i_t}$, and observe $f_{t}(y_{t}\pm \delta e
        _{i_t})$. 
        \STATE Construct the variance-reduced
        estimator $\widehat{g}_{t}= m_{t}+ d(\widehat{v}_{t,i_t}-m_{t,i_t})e_{i_t}$, where
        $\widehat{v}_{t,i_t}= \frac{1}{2\delta}( f_{t}(y_{t}+ \delta e
        _{i_t})-f_{t}(y_{t}- \delta e
        _{i_t}))$. \STATE Update $y'_{t+1}= \Pi_{\cX_\alpha}(y
        '_{t}- \eta \widehat{g}_{t})$. 
        \STATE Predictor update: $m_{t+1}=m_t+(\widehat{v}_{t,i_t}-m_{t,i_t})e_{i_t}$
        \ENDFOR
    \end{algorithmic}
\end{algorithm}

To recover the result shown in~\cite{chiang2013beating}, the key idea is that for each coordinate $i_t$, the finite-difference slope $\widehat{v}_{t,i_t}$ approximates the true gradient $\nabla_{i_t} f_t(y_t)$ up to a smoothness-dependent bias of order $\beta \delta$. 
By using the variance-reduced estimator $\widehat{g}_t$ and updating $m_t$ coordinate-wise, the accumulated estimation error over the trajectory can be controlled in terms of the gradient variation $D_T = \sum_{t=1}^T \max_{x\in \cX}\|\nabla f_t(x)-\nabla f_{t-1}(x)\|^2$. 
The full technical derivation is deferred to Appendix~\ref{app:comp_chiang}.

\begin{corollary}[Recovery of Gradually Evolving Worlds \cite{chiang2013beating}]
Suppose the functions $f_{t}$ are convex, $L$-Lipschitz, and $\beta$-smooth on $\cX$. When Algorithm~\ref{alg:variant_tpvr} is run with the oracle-tuned parameters:
\begin{equation*}
    \eta=\min\left\{\frac{D}{4d^2\sqrt{D_T+1}},\frac{1}{16\beta d^{3/2}\sqrt{\log T}}\right\},\qquad 
    \delta=\min\left\{\frac{\sqrt{D_T+1}}{\beta T},\frac{1}{\beta\sqrt{T}},\frac{r}{2}\right\},
\end{equation*}
we have
\begin{equation*}
    \mathbb{E}[\regret_{T}]\le\widetilde{O}(Dd^2 \sqrt{D_T}),
\end{equation*}
matching the regret bound in \cite{chiang2013beating}.\end{corollary}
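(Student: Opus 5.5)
The plan is to rerun the argument behind Theorem~\ref{thm:exp-regret} with coordinate sampling replacing sphere sampling, and then bound the resulting residual term $\sum_t \mathbb{E}\|g^y_t-m_t\|^2$ by $D_T$. Sampling $i_t\sim\mathrm{Unif}([d])$ gives $\mathbb{E}[d\,e_{i_t}e_{i_t}^\top]=I$, so the decomposition $\widehat g_t-g_t^y=(d e_{i_t}e_{i_t}^\top-I)a_t+\frac{d}{2\delta}r_te_{i_t}$ still holds with $a_t=g_t^y-m_t$. The first term is conditionally mean-zero and the bias satisfies $|r_t|\le\beta\delta^2$. Hence $\mathbb{E}\|\widehat g_t-m_t\|^2\le 2d\|a_t\|^2+O(d^2\beta^2\delta^2)$. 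The optimistic-OGD analysis, together with the shrinkage and perturbation costs, then gives exactly as before
\[
\mathbb{E}[\regret_T]\le \frac{D^2}{2\eta}+O(\eta d)\sum_t\mathbb{E}\|\nabla f_t(y_t)-m_t\|^2+O(\eta d^2\beta^2\delta^2T+Dd\beta\delta T+(L+LD/r)\delta T).
\]
With the stated $\delta$, every $\delta$-term is $O(\sqrt{D_T+1})$ up to factors of $d$.

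The main obstacle is controlling $A_T:=\sum_t\mathbb{E}\|\nabla f_t(y_t)-m_t\|^2$. Coordinate $i$ of $m_t$ equals $\widehat v_{\tau,i}$, where $\tau=\tau_i(t)<t$ is the last round at which coordinate $i$ was sampled, and $\widehat v_{\tau,i}=\nabla_i f_\tau(y_\tau)+O(\beta\delta)$. I would split
\[
\nabla_i f_t(y_t)-m_{t,i}=\big[\nabla_i f_t(y_t)-\nabla_i f_\tau(y_t)\big]+\big[\nabla_i f_\tau(y_t)-\nabla_i f_\tau(y_\tau)\big]+O(\beta\delta).
\]
The first bracket telescopes over $s\in(\tau,t]$ and is bounded by $\sum_{s=\tau+1}^t\max_x\|\nabla f_s-\nabla f_{s-1}\|$. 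By Cauchy--Schwarz its square is at most $(t-\tau)\sum_s(\cdot)^2$. Since the gap $t-\tau$ is geometric with mean $d$, summing over $t$ and $i$ and taking expectations should give $O(d^2\log T)\,D_T$, where the $\log T$ comes from the tail of the gap; this is consistent with the $\widetilde O$ in the statement. The second bracket is bounded via smoothness by $\beta\|y_t-y_\tau\|\le\beta\sum_{s=\tau}^{t-1}\|y_{s+1}-y_s\|$.

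The movements satisfy $\|y_{s+1}-y_s\|\le\eta(\|\widehat g_s-m_s\|+\|m_{s+1}-m_s\|)$ up to constants, using nonexpansiveness of the projection in the two-sequence form. Both quantities are proportional to coordinate residuals, so the squared movement sum is bounded by $O(\eta^2\beta^2 d^3\log T)\,A_T$. The step-size cap $\eta\le 1/(16\beta d^{3/2}\sqrt{\log T})$ is chosen precisely to make this coefficient at most $1/2$. That term can then be absorbed into the left-hand side, leaving $A_T\le O(d^2\log T)(D_T+1)+O(\beta^2\delta^2 dT)$.

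Finally, substitute $A_T$ into the regret bound. The main term becomes $\frac{D^2}{2\eta}+\eta d\cdot d^2\log T\,(D_T+1)$. With $\eta=D/(4d^2\sqrt{D_T+1})$, or alternatively the smoothness cap, this yields $\widetilde O(Dd^2\sqrt{D_T})$, where the cap contributes only a $D^2\beta d^{3/2}\sqrt{\log T}$ additive term that is lower order. I expect the delicate step to be the absorption argument, particularly making the log factor from the random gaps rigorous, for instance by truncating the gaps at $O(d\log T)$ with a union bound.
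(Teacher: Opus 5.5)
\textbf{The gap is your movement bound.} You claim $\|y_{s+1}-y_s\|\le\eta(\|\widehat g_s-m_s\|+\|m_{s+1}-m_s\|)$, and this is false. Nonexpansiveness of $\Pi_{\cX_\alpha}$ does give $\|y'_{s+1}-y_s\|\le\eta\|\widehat g_s-m_s\|$, because both points are projections of $y'_s$ shifted by $\eta\widehat g_s$ and by $\eta m_s$. The other leg, however, is only bounded by $\|y_{s+1}-y'_{s+1}\|\le\eta\|m_{s+1}\|$.

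When no projection is active, one has exactly $y_{s+1}-y_s=-\eta\bigl(m_{s+1}+(\widehat g_s-m_s)\bigr)$. So consecutive centers move by an amount of the order of the gradient itself, not of the residual. For example, take $f_t(x)=\langle g,x\rangle$ and $m_1=g$. Algorithm~\ref{alg:variant_tpvr} then has $m_t=\widehat g_t=g$ for every $t$, so every residual is zero. Yet $\|y_{s+1}-y_s\|=\eta\|g\|$ until the iterates reach the boundary.

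Consequently, the smoothness contribution $\beta^2\sum_s\|y_{s+1}-y_s\|^2$ from your second bracket cannot be absorbed into $A_T$. If you bound the movement correctly by $\eta\|\widehat g_s+m_{s+1}-m_s\|$, you are left with a regret term of order $\eta^3\beta^2\,\mathrm{poly}(d)\,L^2T$. Even with both caps on $\eta$ this is linear in $T$, so it is not $\widetilde O(\sqrt{D_T})$ when $D_T$ is small.

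\textbf{The missing idea is the stability cancellation the paper uses, following Chiang et al.} Do not drop the negative terms in Lemma~\ref{lem:optimistic-ogd}, as the analysis "exactly as before" does. Keeping them adds $-\frac{1}{2\eta}\sum_t(\|y_t-y'_t\|^2+\|y'_{t+1}-y_t\|^2)\le-\frac{1}{4\eta}\sum_{t\ge2}\|y_t-y_{t-1}\|^2$ to the regret bound. The movement term that enters through the residual is then cancelled against this directly, with no self-bounding of $A_T$. In the paper, that term is $\eta\cdot 3d^2\cdot 8d\beta^2\log T\sum_t\|y_t-y_{t-1}\|^2$, via \eqref{eq:hatD}.

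This is the real role of the cap $\eta\le 1/(16\beta d^{3/2}\sqrt{\log T})$: it makes $24\eta^2 d^3\beta^2\log T\le 1/4$, so the net coefficient of the movement sum is nonpositive. Your split of the residual into a telescoped gradient-variation part and a $\beta\|y_t-y_\tau\|$ part is essentially the paper's route through $\widehat D_t$. Once you replace the absorption step by this cancellation, your final tuning of $\eta$ and $\delta$ goes through and gives $\widetilde O(Dd^2\sqrt{D_T})$.
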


Note that \cite{chiang2013beating} readily provides a scheme to generate hints in the BCO setup, this corollary demonstrates that our general \tpvr\ framework naturally encompasses the gradually evolving worlds setting as a special case. 
While the original analysis in \cite{chiang2013beating} explicitly considers the coordinate-persistent predictor and basis-vector sampling, our framework allows arbitrary direction sampling and gradient predictions. 

Importantly, the $d^2$ dependence arises from two sources: (i) the dimension $d$ itself due to coordinate-wise updates, and (ii) the second-moment scaling in the variance-reduced estimator when tracking the residual along each coordinate. In particular, for each sampled coordinate, the variance contribution is proportional to $d$, and summing across $d$ coordinates gives the $d^2$ factor. For sufficiently smooth losses and bounded gradient variation, the bias terms remain lower order and do not affect the leading-order regret. Recent work~\cite{yu2026improved} demonstrates that this bound can be further tightened by refining the analysis of non-consecutive gradient variations. 
In this setting, each coordinate is sampled intermittently, so the gradient difference is measured between the current and the most recent observation of the same coordinate. 
By carefully analyzing the expected maximal interval between consecutive samplings (using a coupon-collector argument) and stabilizing the step-size schedule, the dimension dependence of the regret is further reduced from $O(d^2 \sqrt{D_T})$ to $O(d^{3/2} \sqrt{D_T})$ for convex functions, and from $O(d^2 \log(d D_T)/\lambda)$ to $O(d \log(D_T)/\lambda)$ for $\lambda$-strongly convex functions.

Thus, the corollary highlights both the \textit{generality} of \tpvr\ and the \textit{compatibility} with previously studied special cases. Any alternative sampling scheme or improved gradient prediction immediately extends to gradually evolving worlds, providing a unified perspective on prediction-adaptive bandit convex optimization.

\subsection{Parameter Tuning with Unknown Prediction Error and Time Horizon}
\label{sec:tp-vr-adaptive}

The regret bound in Theorem \ref{thm:exp-regret} holds for any valid choice of $\eta$
and $\delta$. To achieve the desired adaptive regret bound, these parameters need
to be tuned according to the complexity of the problem instance, specifically
the expected prediction error $\bar S_{T}$ and time horizon $T$. By optimally tuning
the parameters $\eta$ and $\delta$ to minimize the bound in Theorem~\ref{thm:exp-regret},
we obtain the adaptive regret bound shown in Theorem~\ref{thm:two_point}.

While Theorem \ref{thm:two_point} derives a general adaptive bound in terms of $\bar{S}_T$, one could see that the optimal parameters for $\eta$ and $\delta$ cannot
be determined without knowledge of $\bar S_{T}$ and time horizon $T$. In this subsection, we present an algorithm \tpvrplus\, based on a nested doubling-trick
approach to adaptively tune the parameters $\eta$ and $\delta$ without prior
knowledge of $\bar S_{T}$ and $T$. Let $H$ denote the current time budget. We
initialize $H=1$ and double $H$ whenever the current time step $t$ exceeds $H$. Let
$I$ denote the set of rounds within the current time phase with $|I|\leq H$ and define
the phase prediction sensitivity as $\bar S(I)=\mathbb{E}\left[\sum_{t\in I}\|\nabla
f_{t}( x_{t})-m_{t}\|^{2}\right]$. We denote $S$ as the budget guess for the
phase prediction sensitivity, initialized to $S=1$ and doubled whenever $\sum_{t\in
I}\| \hat g_{t}-m_{t}\|^{2}>8dS$. Within each phase, we run \tpvr\, with
parameters tuned using $H$ and the local prediction sensitivity guess $S$. The nontrivial issue here is that the target
quantity $\bar S_T$ is trajectory-dependent and unobserved under bandit feedback.
We therefore use the observable residual $\sum_t\|\widehat g_t-m_t\|^2$ as a
proxy for the prediction error and apply a standard doubling scheme to this
observable quantity. We summarize
the complete procedure in Algorithm~\ref{alg:tp-vr-adaptive} and the regret guarantee of \tpvrplus\, is presented in the following Theorem.

\begin{algorithm}
    [t]
    \caption{\tpvrplus}
    \label{alg:tp-vr-adaptive}
    \begin{algorithmic}
        [1] \STATE Initialize time budget $H\leftarrow 1$ and $y'_1\in \cX$.
        \WHILE{$t\geq 1$} \STATE Start a phase with a time of length at most $H$ and initialize a counter $c$ to 0. 
        \STATE Initialize prediction-sensitivity budget $S\leftarrow
        S_{\min}=\max\{1,L^2\}$ and residual $R\leftarrow 0$. 
        \STATE Set parameters: $\eta \leftarrow
        \frac{D}{\sqrt{8d\,S}}$, $\delta \leftarrow \min\Big\{\frac{\sqrt{S}}{d\beta
        H}, \frac{1}{\beta\sqrt{H}},\ \frac{r}{2}\Big\}$, $\alpha\leftarrow \delta/
        r.$ 
        \STATE Project to the current shrunken set:
        $y'_t\leftarrow \Pi_{\cX_\alpha}(y'_t)$. \WHILE{$c < H$} \STATE Run one round
        of \tpvr\, with fixed $(\eta,\delta,\alpha)$. \STATE Update: $R\leftarrow\!
        R+\|\hat g_{t}-m_{t}\|^{2}$, $c\leftarrow c+1$, $t\leftarrow t+1$.
        \IF{$R > 8d\,S$}
        \STATE \textbf{Prediction-Sensitivity doubling:} $S\leftarrow 2S$, \ $R\leftarrow
        0$. \STATE Update parameters $(\eta, \delta,\alpha)$ according to Line~5. \STATE
        Project to new shrunken set: $y'_t\leftarrow \Pi_{\cX_\alpha}(y'_t)$. \ENDIF \ENDWHILE
        \STATE \textbf{Time doubling:} $H \leftarrow 2H$. \ENDWHILE
    \end{algorithmic}
\end{algorithm}

\begin{theorem}
    \label{thm:tp-vr-adaptive} Suppose the functions $f_{t}$ are convex, $L$-Lipschitz,
    and $\beta$-smooth on $\cX$. When \tpvrplus\, is run as Algorithm~\ref{alg:tp-vr-adaptive},
    the expected regret is bounded as
    \begin{equation*}
        \mathbb{E}[\regret_{T}]\le \widetilde{O}\left( D\sqrt{d\,\bar S_{T}}\right).
    \end{equation*}
\end{theorem}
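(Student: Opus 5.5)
We decompose the horizon into time phases (indexed by $H=1,2,4,\dots$) and, within each time phase, into sub-epochs delimited by the prediction-sensitivity doublings. Within a single sub-epoch the parameters $(\eta,\delta,\alpha)$ are fixed, so the one-step analysis underlying Theorem~\ref{thm:exp-regret} applies verbatim on that block of rounds. The only changes are that the comparator $u$ is replaced by $(1-\alpha)u\in\cX_\alpha$, paying $LD\delta/r$ per round as in the original proof, and that the initial point is the projected $y'_t$. Hence the block regret is at most
\[
\frac{D^2}{2\eta}+\eta\,\mathbb{E}\Big[\sum_{t\in\text{block}}\|\hat g_t-m_t\|^2\Big]+\frac{Dd\beta\delta}{2}|\text{block}|+\Big(L+\frac{LD}{r}\Big)\delta|\text{block}|.
\]
Here we deliberately keep the \emph{observed} second moment $\sum\|\hat g_t-m_t\|^2$ rather than replacing it by $4d\bar S+\dots$ at this stage. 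The optimistic-gradient inequality holds pathwise before taking expectations: only $\mathbb{E}[\hat g_t\mid\mathcal H_{t-1}]$ enters the linearization, so this is legitimate. The stopping rule is also harmless, because the block boundaries are stopping times with respect to $\mathcal H_t$.

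\textbf{Bounding each block.} Before the doubling trigger fires, the block residual is at most $8dS$ plus one last term $\|\hat g_t-m_t\|^2$. That last term is bounded deterministically by $(2L+dL+d\beta\delta)^2$ via $|\Delta_t|\le 2L\delta+2\delta L$, which is where $S_{\min}=\max\{1,L^2\}$ helps absorb it up to a factor of $d^2$. With $\eta=D/\sqrt{8dS}$, the first two terms are therefore $O(D\sqrt{dS})$. The $\delta$ terms are bounded using $\delta\le\sqrt S/(d\beta H)$ and $\delta\le 1/(\beta\sqrt H)$ together with block length $\le H$, which gives $O(D\sqrt S+ (L+LD/r)\sqrt H/\beta)$. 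The $\sqrt H$ bias term is lower order only if it is dominated, so we accept it as part of the $\widetilde O$ (the paper's Theorem~\ref{thm:two_point} carries the same type of term).

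\textbf{Summing within and across phases.} Within a time phase, the budgets $S=S_{\min}2^k$ are geometric, so the sum over sub-epochs is $O(D\sqrt{d S_{\text{final}}})$, where $S_{\text{final}}$ is the last budget reached. The number of time phases is $\lceil\log_2 T\rceil+1$, contributing a log factor.

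\textbf{Main obstacle.} The key step is to relate $S_{\text{final}}$ to the unobserved $\bar S(I)$ in expectation. A doubling from $S$ to $2S$ means the observed residual in a block exceeded $8dS$. By the second-moment bound $\mathbb{E}[\|\hat g_t-m_t\|^2\mid\mathcal H_{t-1}]\le 2d\|g^y_t-m_t\|^2+\frac{d^2}{2}\beta^2\delta^2$, combined with $\|g_t^y-m_t\|^2\le 2\|\nabla f_t(x_t)-m_t\|^2+2\beta^2\delta^2$, the expected observed residual over the phase is at most $4d\,\bar S(I)+O(d^2\beta^2\delta^2H)=4d\bar S(I)+O(dS_{\min})$.

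The plan is to show $\mathbb{E}[\sqrt{S_{\text{final}}}]\lesssim\sqrt{S_{\min}}+\sqrt{\mathbb{E}[\sum_I\|\hat g_t-m_t\|^2]/d}$. This follows because on reaching budget $2^kS_{\min}$, the observed residual in the phase is at least $8d\cdot 2^{k-1}S_{\min}$, so $S_{\text{final}}\le S_{\min}+\frac{1}{4d}\sum_{I}\|\hat g_t-m_t\|^2$ pathwise. Then Jensen gives $\mathbb{E}\sqrt{S_{\text{final}}}\le\sqrt{S_{\min}+\bar S(I)+O(S_{\min})}$.

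A final Cauchy--Schwarz over the $O(\log T)$ phases, using $\sum_I\bar S(I)=\bar S_T$, yields $\widetilde O(D\sqrt{d\bar S_T})$, treating $L,\beta,r$ as constants. The delicate point I would check carefully is the interaction between the random stopping and the step size: $\eta$ itself is random, being $\mathcal H$-measurable at block start. I would handle this by writing the per-block bound conditionally on the block's start time and summing expectations via optional stopping.
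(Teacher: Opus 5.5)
Your architecture matches the paper's. You use time phases $H_k=2^{k-1}$, sensitivity epochs with fixed $(\eta,\delta,\alpha)$, and a per-epoch optimistic-gradient bound. You then use the geometric sum $\sum_e\sqrt{S_{k,e}}\le 4\sqrt{S_{k,E_k}}$, a pathwise bound of the final budget by the observed residual, the second-moment bound plus Jensen, and Cauchy--Schwarz over phases. However, one step fails as written.

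\textbf{The gap.} You bound the perturbation/shrinkage cost $(L+LD/r)\delta|J|$ through the branch $\delta\le 1/(\beta\sqrt H)$, getting $(L+LD/r)\sqrt H/\beta$ per block, and then declare it absorbed into $\widetilde O(\cdot)$. It is not absorbed. Since $\sum_k\sqrt{H_k}=\Theta(\sqrt T)$, your resulting bound contains an additive term of order $(L+LD/r)\sqrt T/\beta$. That term dominates $D\sqrt{d\bar S_T}$ whenever $\bar S_T=o(T)$, which is exactly the regime the theorem is about. Treating $L,\beta,r$ as constants does not remove a $\sqrt T$.

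Theorem~\ref{thm:two_point} also carries no such term. Its proof always substitutes the first branch $\delta\le\sqrt{\bar S_T+1}/(d\beta T)$ into Term (C).

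\textbf{The repair.} Use the first branch here as well. Then $\delta\le\sqrt S/(d\beta H)$ and $|J|\le H$ give $(L+LD/r)\sqrt S/(d\beta)$ per epoch. This sums geometrically to $O(\sqrt{S_{k,E_k}})$, just like the $\frac D2\sqrt S$ term, as in Lemma~\ref{lem:epoch-regret-anytime}. With this change your remaining steps go through.

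\textbf{Smaller corrections.} First, from $|\Delta_t|\le 4L\delta$ the deterministic bound is $\|\hat g_t-m_t\|=\frac{d}{2\delta}|\Delta_t|\le 2dL$. Your $(2L+dL+d\beta\delta)^2$ is not a valid bound when $d>2$ (take a linear loss with $m_t=-\nabla f_t$ and $v_t$ aligned with $\nabla f_t$).

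Second, do not absorb the trigger-round term $4d^2L^2$ into $8dS$ via $L^2\le S$. That inflates the block bound to $O(d^{3/2}D\sqrt S)$ and spoils the $\sqrt d$ in the leading term. Instead, since $\sqrt S\ge\sqrt{S_{\min}}\ge L$, you have $\eta\cdot 4d^2L^2\le\sqrt2\,d^{3/2}LD$, which is an $S$-independent additive constant. Keeping the trigger round inside the block, as you do, is convenient: block membership is then $\mathcal H_{t-1}$-measurable, so the conditional-bias argument applies directly. The paper instead peels that round off and charges it $LD$.

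Third, via $\delta\le 1/(\beta\sqrt H)$ the smoothness contribution to the expected phase residual is $O(d^2)$, not $O(dS_{\min})$. This only shifts additive constants. It is a valid alternative to the paper's self-referential closing in Lemma~\ref{lem:phase-residual-vs-Sbar}.
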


Theorem \ref{thm:tp-vr-adaptive} establishes that \tpvrplus\, achieves a regret
bound comparable to the oracle bound in Theorem~\ref{thm:two_point}, up to logarithmic
factors. This result confirms that the benefits of prediction can be
realized in an adaptive manner, without requiring prior knowledge of the cumulative
prediction error $\bar S_{T}$ or the time horizon $T$.

\subsection{Lower Bound in Terms of Prediction Error}
\label{sec:tp-vr-minimax}

In previous subsections, we have first established that the \tpvr\, algorithm
achieves an adaptive regret bound that scales with the expected cumulative prediction error
$\bar S_{T}$. Then, we also presented a variant \tpvrplus\, which does not require
prior knowledge of $\bar S_{T}$ and $T$ while still achieving a similar adaptive
regret rate up to logarithmic factors. In this subsection, we establish an information-theoretic lower bound showing
that the $\sqrt{\bar S_T}$-dependence in the above upper bounds is unavoidable. Information-theoretic lower bounds are a standard tool for characterizing
the oracle complexity of convex optimization~\cite{agarwal2012information};
our lower bound is different in that the hardness is parameterized by the
irreducible prediction error rather than by the number of oracle queries. Specifically, we show
that no algorithm can guarantee regret $o(D\sqrt{\bar S_{T}})$ even
with full-information feedback. Equivalently, for any target prediction error
level $S \in [0, L^{2}T]$, there exist problem instances whose expected cumulative prediction errors
satisfy $\bar S_{T}= S$ while the expected regret is at least $\Omega(D\sqrt{S})$.
As a consequence, the dependence on $O(D\sqrt{d\bar S_{T}})$ in Theorem~\ref{thm:two_point}
is optimal up to a $\sqrt{d}$ factor.

We begin our analysis by establishing an information-theoretic lower bound via a
simple linear function construction. We show that a regret scaling as   $\Omega(D\sqrt{S})$
is unavoidable for any feedback model (whether single-point, two-point feedback,
or even full-information). This result highlights the inherent difficulty of the
problem with the cumulative prediction errors $S$. For a distribution $\mathcal D$ over loss sequences, a learning algorithm
$A$, and a prediction policy $\pi\in\Pi$, define $\bar S_T^\pi(\mathcal D,A)=
\mathbb E_{\mathcal D,A,\pi}
\sum_{t=1}^T\|\nabla f_t(x_t^\pi)-m_t^\pi\|^2$. We further define the intrinsic prediction error as $\bar S_T^\star(\mathcal D,A)=\inf_{\pi\in\Pi}\bar S_T^\pi(\mathcal D,A)$.

\begin{theorem}
    \label{thm:lb-linear} 
    Let $\mathcal X\subset\mathbb R^d$ be a convex set with Euclidean diameter $D$.
Fix any horizon $T\ge1$ and any learning algorithm $A$ under any feedback model
(one-point, two-point, or full-information). Let $\Pi$ denote the class of all
admissible non-anticipating prediction policies, i.e., for every $\pi =   \{m_t^\pi\}_{t=1}^T\in\Pi$,
$m_t^\pi$ is $\mathcal H_{t-1}$-measurable and $\|m_t^\pi\|\le L$ for all $t\in [T]$. Then, for every $S\in[0,L^2T]$, there exists a distribution $\mathcal D_S$ over
sequences of convex, $L$-Lipschitz, $0$-smooth linear losses
$\{f_t\}_{t=1}^T$ such that $\bar S_T^\star(\mathcal D_S,A)=S$, and for every admissible prediction policy $\pi\in\Pi$,
\begin{equation}
\mathbb E_{\mathcal D_S,A,\pi}[\regret_T]
    \ge
    cD\sqrt S, \label{eqn:lower_bd_S}
\end{equation}
where $c>0$ is a universal constant.
    
\end{theorem}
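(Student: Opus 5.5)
We use linear losses with i.i.d. random signs along a fixed direction, so that the gradients are unpredictable. Fix two points $x^+,x^-\in\cX$ with $\|x^+-x^-\|=D$ (or arbitrarily close to $D$, which changes only constants). Set $w=(x^+-x^-)/D$ and let $\varepsilon_1,\dots,\varepsilon_T$ be i.i.d.\ Rademacher signs, independent of everything else. Define $f_t(x)=\varepsilon_t\,\theta\,\langle w,x\rangle$ with $\theta\in[0,L]$, to be chosen as a function of $S$. Each $f_t$ is linear, hence convex, $L$-Lipschitz and $0$-smooth, and $\nabla f_t\equiv \varepsilon_t\theta w$ does not depend on $x_t$. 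To cover the whole range $S\in[0,L^2T]$ we will also use only the first $k\le T$ rounds, setting $f_t\equiv0$ for $t>k$. The two parameters are $(\theta,k)$.

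\emph{Step 1: computing the intrinsic prediction error.} Since $m_t^\pi$ is $\mathcal H_{t-1}$-measurable and $\varepsilon_t$ is independent of $\mathcal H_{t-1}$ and mean zero,
\[
\mathbb E\|\varepsilon_t\theta w-m_t^\pi\|^2=\theta^2+\mathbb E\|m_t^\pi\|^2\ge\theta^2 ,
\]
with equality for $m_t^\pi=0$, which is admissible. Hence $\bar S_T^\star(\mathcal D,A)=k\theta^2$ exactly, for every algorithm $A$. Given $S$, choose $k=\min\{T,\lceil S/L^2\rceil\}$ when $S\geq L^2$ and set $\theta=\sqrt{S/k}\le L$, so that $k\theta^2=S$ exactly. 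For $S<L^2$, take $k=1$ and $\theta=\sqrt S$. For $S=0$ the bound is trivial.

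\emph{Step 2: the regret lower bound.} Because $\varepsilon_t$ is independent of $x_t$ (whatever the feedback model, $x_t$ depends only on information up to $t-1$ and internal randomness), we have
\[
\mathbb E[f_t(x_t)]=\theta\,\mathbb E[\varepsilon_t]\,\mathbb E\langle w,x_t\rangle=0 .
\]
Hence the expected cumulative loss of the learner is $0$, and
\[
\mathbb E[\regret_T]=\mathbb E\Big[-\min_{x\in\cX}\theta\Big(\sum_{t\le k}\varepsilon_t\Big)\langle w,x\rangle\Big].
\]
Restricting the minimum to $\{x^+,x^-\}$ gives
\[
\mathbb E[\regret_T]\ge \theta\,\mathbb E\max\{-Z\langle w,x^+\rangle,\,-Z\langle w,x^-\rangle\}\ge \tfrac{\theta D}{2}\,\mathbb E|Z| ,
\]
where $Z=\sum_{t\le k}\varepsilon_t$. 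The last inequality holds because the maximum of two numbers is at least their average plus half their gap, the average has zero mean by symmetry of $Z$, and the gap is $|Z|\,D$. Khintchine's inequality gives $\mathbb E|Z|\ge\sqrt{k/2}$. Therefore
\[
\mathbb E[\regret_T]\ge \tfrac{D}{2\sqrt2}\,\theta\sqrt k=\tfrac{D}{2\sqrt2}\sqrt S ,
\]
which holds for every prediction policy $\pi$, since the predictions play no role in this step.

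\emph{Main obstacle.} The delicate point is reconciling exactness ($\bar S_T^\star=S$) with the discreteness of $k$ and the constraint $\theta\le L$. The choice above handles this: $\theta=\sqrt{S/k}\le L$ holds because $k\ge S/L^2$, and $k\le T$ is available because $S\le L^2T$. A second point to check is that $\mathcal H_{t-1}$ never contains $\varepsilon_t$, even under full information, where $f_t$ is revealed only after $x_t$ is played. This holds under the stated protocol, so both the independence in Step 1 and the zero-mean learner loss in Step 2 go through.
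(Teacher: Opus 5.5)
Your proof is correct and follows essentially the paper's argument: i.i.d.\ Rademacher linear losses along a diameter direction, the zero predictor attaining $\bar S_T^\star$, zero expected learner loss by independence, and Khintchine applied to the best endpoint in hindsight (your two-point max identity and the constant $1/\sqrt2$ just make the paper's $c_0$ explicit). The truncation to $k$ rounds is harmless but unnecessary: the paper takes $k=T$ and $\mu=\sqrt{S/T}$, which already satisfies $\mu\le L$ for every $S\in[0,L^2T]$, so the ``main obstacle'' you identify does not arise.
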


Then, we specialize our analysis to the two-point feedback setting. Comparing Theorem~\ref{thm:lb-linear} with Theorem~\ref{thm:two_point}, we see
that the $\sqrt{\bar S_T}$ dependence is unavoidable (due to the lower bound $\Omega(D\sqrt{S})$ in~\eqref{eqn:lower_bd_S}). Even if the
prediction error is interpreted in the strongest possible sense as the
irreducible prediction error $\bar S_T^\star$, no algorithm can improve the
dependence below $D\sqrt{\bar S_T^\star}$ in general. The only remaining gap is
the $\sqrt d$ factor. Theorem~\ref{thm:lb-linear} does not rule out removing this
factor; it only shows that there is no additional lower bound depending solely
on $T$ in the two-point/full-information comparison.

\begin{remark}[The $\sqrt{d}$ Gap]
Theorem~\ref{thm:lb-linear} establishes a universal lower bound on the
prediction-error dependence. Even with full-information feedback, no algorithm
can improve the $\sqrt S$ scaling in general. The factor $\sqrt d$
in Theorem~\ref{thm:two_point} arises from the second-moment analysis of the
two-point residual estimator, whose variance scales as
$d\|\nabla f_t(x_t)-m_t\|^2$. Whether this factor is information-theoretically
unavoidable for the full class of two-point algorithms is a separate question.
\end{remark}
    \section{Dynamic Regret for \tpvr}
\label{sec:implications}

In this section, we extend our proposed \tpvr\, algorithm to the \emph{dynamic
regret} setting. Throughout this section, except considering the non-stationarity
of the comparator sequence, we keep the same two-point feedback framework as previously
discussed within Section \ref{sec:setup} and Section \ref{sec:twopoint_vr}.

\textbf{Dynamic Regret.}
The performance in the non-stationary environment is measured with respect to a fixed sequence of comparators $u_{1:T}=
(u_{1},\ldots,u_{T}) \in  \cX^T$ leading to the  the dynamic regret being defined as 
\begin{equation*}
\regret^{\mathrm{dyn}}_{T}(u_{1:T})=\sum_{t=1}^{T}\bigl(f_{t}(x_{t})-f_{t}(u_{t})\bigr).
\end{equation*}
In order to estimate the level of non-stationarity exhibited by the comparator
sequence, the {\em path length} of the comparator sequence is defined as $P_{T}(u_{1:T}) = \sum_{t=2}^{T}\|u_{t}-u_{t-1}\|.$

We now present our main result for the dynamic regret in a non-stationary environment. The following theorem
establishes a dynamic regret bound that depends on both the path-length $P_{T}$, which
measures the fluctuation of the comparator sequence, and the expectation of cumulative prediction error $\bar
{S}_{T}$.

\begin{theorem}
    \label{thm:dynamic_regret} Assume that each loss function $f_{t}$ is convex,
    $L$-Lipschitz, and $\beta$-smooth on $\cX$. Run \tpvr\, with fixed parameters
    $\eta>0$ and $\delta\in(0,r)$. Then, for any comparator sequence $u_{1:T}\in
    \cX^T$, the expected dynamic regret is bounded by
    \begin{align*}
        &\mathbb{E}\bigl[\regret^{\mathrm{dyn}}_{T}(u_{1:T})\bigr]\le  \frac{D^{2}+ 2D P_{T}}{2\eta}+\eta \Bigl( 4d\,\bar{S}_{T}+ (d^{2}/2 + 4d)\beta^{2}\delta^{2}T \Bigr)  +\frac{D d \beta \delta T}{2}+ \Bigl( L + \frac{LD}{r}\Bigr) \delta T,
    \end{align*}
    where $P_{T}= P_{T}(u_{1:T})$ is the path-length. In particular, if
    $\bar{S}_{T}$ and $P_{T}$ are known, choosing
    \begin{equation*}
        \eta = \sqrt{\frac{D^{2}+ 2D P_{T}}{8d(\bar{S}_{T}+1)}},\quad
        \delta = \min\left\{ \frac{\sqrt{\bar{S}_{T}+1}}{d \beta T}, \; \frac{1}{\beta
        \sqrt{T}}, \; \frac{r}{2}\right\}
    \end{equation*}
    yields the regret bound as
    \begin{equation*}
        \mathbb{E}\bigl[\regret^{\mathrm{dyn}}_{T}(u_{1:T})\bigr] =O\Bigl( \sqrt{d\,\bar{S}_{T}(D^{2}+
        D P_{T})}\Bigr).
    \end{equation*}
\end{theorem}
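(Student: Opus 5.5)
The plan is to rerun the static-regret argument behind Theorem~\ref{thm:exp-regret} with time-varying comparators; the only term that changes is the telescoping of distances. First I reduce to the shrunk set. For each $t$, set $\tilde u_t=(1-\alpha)u_t\in\cX_\alpha$. Lipschitzness gives $f_t(x_t)-f_t(y_t)\le L\delta$ and $f_t(\tilde u_t)-f_t(u_t)\le L\alpha\|u_t\|\le LD\delta/r$, using $\|u_t\|\le D$ because $0\in\cX$. These two terms account for the $(L+LD/r)\delta T$ contribution. It then remains to bound $\sum_t \mathbb{E}[f_t(y_t)-f_t(\tilde u_t)]$. By convexity this is at most $\sum_t\mathbb{E}\langle g_t^y, y_t-\tilde u_t\rangle$. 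Writing $g_t^y=\mathbb{E}_t[\hat g_t]-\frac{d}{2\delta}\mathbb{E}_t[r_tv_t]$ and using $|r_t|\le\beta\delta^2$ and $\|y_t-\tilde u_t\|\le D$, the bias contributes at most $\frac{Dd\beta\delta}{2}$ per round. This is the $\frac{Dd\beta\delta T}{2}$ term. Since $y_t$ and $\tilde u_t$ are $\mathcal H_{t-1}$-measurable, we can replace $g_t^y$ by $\hat g_t$ inside the expectation.

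Next comes the standard optimistic gradient descent one-step inequality for the two projections $y_t=\Pi(y'_t-\eta m_t)$ and $y'_{t+1}=\Pi(y'_t-\eta\hat g_t)$, applied with the comparator $\tilde u_t$:
\begin{equation*}
\langle \hat g_t, y_t-\tilde u_t\rangle\le \frac{\|y'_t-\tilde u_t\|^2-\|y'_{t+1}-\tilde u_t\|^2}{2\eta}+\eta\|\hat g_t-m_t\|^2-\frac{\|y_t-y'_t\|^2+\|y_t-y'_{t+1}\|^2}{2\eta}.
\end{equation*}
I drop the negative terms. The variance term is handled exactly as in Theorem~\ref{thm:exp-regret}. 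Start from $\mathbb E\|\hat g_t-m_t\|^2\le 2d\|g_t^y-m_t\|^2+\frac{d^2}{2}\beta^2\delta^2$. Then convert $g_t^y$ to $\nabla f_t(x_t)$ by smoothness: $\|g_t^y-m_t\|^2\le 2\|\nabla f_t(x_t)-m_t\|^2+2\beta^2\delta^2$. This yields $\eta(4d\bar S_T+(d^2/2+4d)\beta^2\delta^2T)$, and I reuse this verbatim from the static proof.

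The new step, and the main point of care, is the telescoping with moving comparators. I write
\begin{equation*}
\|y'_{t+1}-\tilde u_t\|^2-\|y'_{t+1}-\tilde u_{t+1}\|^2=\langle \tilde u_{t+1}-\tilde u_t,\,2y'_{t+1}-\tilde u_t-\tilde u_{t+1}\rangle.
\end{equation*}
Both $2y'_{t+1}-\tilde u_t-\tilde u_{t+1}$ terms are differences of points in $\cX_\alpha$, so each pair has norm at most $D$. Hence the left side is at most $2D\|\tilde u_{t+1}-\tilde u_t\|\le 2D\|u_{t+1}-u_t\|$. Summing over $t$ gives
\begin{equation*}
\sum_t\frac{\|y'_t-\tilde u_t\|^2-\|y'_{t+1}-\tilde u_t\|^2}{2\eta}\le\frac{\|y'_1-\tilde u_1\|^2+2DP_T}{2\eta}\le\frac{D^2+2DP_T}{2\eta}.
\end{equation*}
Adding the pieces gives the stated general bound.

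For the tuned statement, I substitute the given $\eta$ and $\delta$. This choice of $\eta$ balances $\frac{D^2+2DP_T}{2\eta}$ against $4d\eta(\bar S_T+1)$, giving $O(\sqrt{d(\bar S_T+1)(D^2+DP_T)})$. The choice of $\delta$ makes the remaining terms lower order, just as in Theorem~\ref{thm:two_point}. The first branch $\delta\le\sqrt{\bar S_T+1}/(d\beta T)$ bounds $Dd\beta\delta T$ by $D\sqrt{\bar S_T+1}$. The second branch bounds $\eta d^2\beta^2\delta^2T$ by $\eta d^2\beta^2\delta^2 T\le \eta d(\bar S_T+1)$. The term $(L+LD/r)\delta T\le (L+LD/r)\sqrt{\bar S_T+1}/(d\beta)$ is treated as a problem-dependent constant absorbed into the $O(\cdot)$, matching the static case. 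The only delicate bookkeeping is making sure these lower-order terms are dominated by the leading term. Since the leading term is at least as large as in the static case (because $D^2+DP_T\ge D^2$), the static argument carries over directly.
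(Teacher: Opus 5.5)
Your proposal is correct and follows essentially the same route as the paper. You shrink the comparators to $(1-\alpha)u_t$, control the bias via Lemma~\ref{lem:bias}, and apply the optimistic one-step inequality. You then telescope with moving comparators via $\|a\|^2-\|b\|^2=\langle a-b,a+b\rangle$ exactly as in Lemma~\ref{lem:dynamic_optimistic_ineq}, and reuse Corollary~\ref{cor:second-moment-play} and the static tuning.

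There are two cosmetic slips, neither affecting the result. First, the cross term produced by the telescoping is $\|y'_{t+1}-\tilde u_{t+1}\|^2-\|y'_{t+1}-\tilde u_t\|^2$, the negative of what you wrote; this is harmless because you bound it in absolute value. Second, $\eta d^2\beta^2\delta^2T\le\eta d(\bar S_T+1)$ comes from the first branch of $\delta$, which even gives $\eta(\bar S_T+1)/T$. The second branch alone only gives $\eta d^2$.
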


Theorem \ref{thm:dynamic_regret} demonstrates that \tpvr\, achieves a dynamic
regret that scales with the path-length $P_{T}$ and the prediction error
$\bar{S}_{T}$ as $O\big(\sqrt{ d\bar{S}_T  (1+P_T)}\big)$. However, similar to the static regret result shown in Theorem~\ref{thm:two_point},
the dynamic regret bound in Theorem~\ref{thm:dynamic_regret} relies on prior
knowledge of the path-length $P_{T}$ to tune the step size $\eta$.

To achieve the optimal dynamic regret rate without prior knowledge of the path-length
$P_{T}$, we introduce the \tpvrplusplus\, algorithm. This algorithm builds on the
adaptive framework of \tpvrplus\, (Algorithm~\ref{alg:tp-vr-adaptive}) by
inheriting its nested phase-epoch structure along with the doubling schedule for
the sensitivity budget $S$ and time horizon $H$. However, to effectively handle
unknown non-stationarity, \tpvrplusplus\, replaces the fixed-step update within
each epoch with a \emph{Meta-Expert} subroutine. The learning model maintains a
geometric grid of expert examples and combines their predictions using an
optimistic Hedge meta-algorithm. The oracle choice of $\eta$ in Theorem~\ref{thm:dynamic_regret}
depends on the unknown path-length $P_T$. A single fixed stepsize cannot be
simultaneously optimal for all levels of comparator variation: small $P_T$
requires a more aggressive stepsize, while large $P_T$ requires a more
conservative one. TP-VR-OPT++ addresses this issue by maintaining a geometric
grid of candidate stepsizes and aggregating the corresponding experts.

Importantly, the meta layer is used only to adapt to the unknown dynamic scale.
Each expert still uses the same prediction-centered two-point estimator, and
therefore retains the residual-variance control in terms of $\bar S_T$. The
optimistic meta-update uses the predicted linear losses induced by $m_t$ before
observing feedback and then updates the weights using the estimated losses
induced by $\widehat g_t$. This preserves the prediction-adaptive nature of the
bound that the price of aggregation is logarithmic, rather than introducing a new
$\sqrt{T}$-type term. Therefore, the learning model automatically adapts
to the optimal step size $\eta^{\star}$ with regard to local path-length using this
technique. The specifics of the meta-expert approach are presented in Algorithm~\ref{alg:tp-vr-opt-pp}.

We note that, in contrast to the parameter-free Bandit Gradient Descent algorithm proposed in \cite{zhao2021bandit}, our approach employs an optimistic meta-algorithm, which ensures that the meta-regret depends on $\bar S_T$ rather than relying on the standard Hedge-based aggregation used in~\cite{zhao2021bandit}. Moreover, each expert in our framework is instantiated with our proposed \tpvrplus\, algorithm, rather than the classical Bandit Gradient Descent procedure.

\begin{algorithm}
    [t]
    \caption{\tpvrplusplus}
    \label{alg:tp-vr-opt-pp}
    \begin{algorithmic}
        [1] \STATE Run the outer double-loop structure (Phases \& Epochs) from \textbf{Algorithm~\ref{alg:tp-vr-adaptive}}.

        \STATE \textbf{Initialize Meta-Learner (at start of each Epoch):} 
        \STATE Input: Current time budget $H$, Sensitivity budget $S$, Convex set $\mathcal{X}_\alpha$, current point $y'_t$
        \STATE Set parameters $\delta, \alpha$ as in Algorithm~\ref{alg:tp-vr-adaptive}.
        \STATE Set base step size $\eta_{0}\leftarrow \frac{D}{\sqrt{16dS}}$.
        \STATE Set experts number
        $N \leftarrow \max\{2, \lceil \log_{2}(DH/\eta_{0}) \rceil + 1\}$. 
        \STATE Set meta step size $\varepsilon \leftarrow \frac{\sqrt{\log N}}{D\sqrt{8dS}}$. 
        \STATE \textbf{Experts:} For $i=1,\dots,N$:
        $\eta_{i}\leftarrow 2^{i-1}\eta_{0}$, $y'_{t,i}\leftarrow \Pi_{\mathcal{X}_\alpha}
        (y'_t)$. 
        \STATE\textbf{Weights:} Initialize $w_{i}\leftarrow \frac{N+1}{N}\cdot\frac{1}{i(i+1)}$.

        \STATE \textbf{Run Round $t$ (Meta-Aggregation):} \STATE \quad Choose
        prediction $m_{t}$. \STATE \quad \textbf{Expert Predictions:} For each
        $i$, $y_{t,i}\leftarrow \Pi_{\mathcal{X}_\alpha}(y'_{t,i}- \eta_{i}m_{t})$
        and $\hat{\ell}_{t,i}\leftarrow \langle m_{t}, y_{t,i}\rangle$. 
        \STATE
        \quad \textbf{Aggregation:} Compute $p_{t,i}\propto w_{i}\exp(-\varepsilon
        \hat{\ell}_{t,i})$ and play $y_{t}\leftarrow \sum_{i=1}^N p_{t,i}y_{t,i}$.
        \STATE \quad Sample $v_{t}$, observe feedback $f_{t}( y_{t}\pm \delta v_{t}
        )$, construct~$\hat{g}_{t}$. \STATE \quad \textbf{Expert Updates:} For
        each $i$, update $y'_{t,i}\leftarrow \Pi_{\mathcal{X}_\alpha}(y'_{t,i}- \eta_{i}
        \hat{g}_{t})$. \STATE \quad \textbf{Weight Update:} Receive cost $\ell_{t,i}
        \leftarrow \langle \hat{g}_{t}, y_{t,i}\rangle$; update
        $w_{i}\leftarrow w_{i}\exp(-\varepsilon \ell_{t,i})$. \STATE \quad \textbf{Output:}
        Return accumulated residual $\|\hat{g}_{t}- m_{t}\|^{2}$ to the outer
        loop. \STATE \quad Update $y' \leftarrow y_{t}$.
    \end{algorithmic}
\end{algorithm}

\begin{theorem}
    \label{thm:tp-vr-opt-pp} Assume that the loss functions $f_{t}$ are convex,
    $L$-Lipschitz, and $\beta$-smooth on the convex set $\cX$ with diameter $D$. Running \tpvrplusplus\,
    guarantees that for any comparator sequence $u_{1:T}\in \cX^T$ of path-length
    $P_{T} = \sum_{t=2}^{T} \|u_{t} - u_{t-1}\|$, the expected dynamic regret is
    bounded as:
    \begin{equation*}
        \mathbb{E}\bigl[\regret_{T}^{\mathrm{dyn}}(u_{1:T})\bigr] \le\widetilde{O}
        \left( \sqrt{d \bar{S}_{T} (D^{2} + D P_{T})}\right).
    \end{equation*}
\end{theorem}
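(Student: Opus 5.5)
We prove Theorem~\ref{thm:tp-vr-opt-pp} by decomposing the regret inside each block of the nested doubling structure inherited from \tpvrplus, and then summing over blocks.

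\textbf{Block decomposition.} Fix one epoch $I$ inside a phase, i.e.\ a maximal run of rounds with fixed $(H,S,\delta,\alpha)$. Let $u^\alpha_t=(1-\alpha)u_t$, which lies in $\cX_\alpha$, has the same path length up to a factor $(1-\alpha)$, and satisfies $f_t(u^\alpha_t)-f_t(u_t)\le L\alpha R$. Using convexity and $\|x_t-y_t\|=\delta$, we write
\[
f_t(x_t)-f_t(u_t)\le \inner{\nabla f_t(y_t)}{y_t-u^\alpha_t}+L\delta+\tfrac{LD}{r}\delta .
\]
We then replace $\nabla f_t(y_t)$ by $\hat g_t$. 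The zero-mean fluctuation $(dv_tv_t^\top-I)a_t$ vanishes in conditional expectation because $y_t$ and $u_t$ are $\mathcal H_{t-1}$-measurable. The bias term $\frac{d}{2\delta}r_tv_t$ contributes at most $Dd\beta\delta/2$ per round, exactly as in Theorem~\ref{thm:exp-regret}. Finally, we split $\inner{\hat g_t}{y_t-u^\alpha_t}$ as
\[
\underbrace{\inner{\hat g_t}{y_t-y_{t,i}}}_{\text{meta}}+\underbrace{\inner{\hat g_t}{y_{t,i}-u^\alpha_t}}_{\text{expert } i},
\]
for an expert index $i$ chosen later.

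\textbf{Expert term.} Each expert is exactly \tpvr\ with step size $\eta_i$ driven by the shared $\hat g_t$ and $m_t$. The deterministic optimistic-OGD dynamic analysis behind Theorem~\ref{thm:dynamic_regret}, applied before taking expectations, therefore gives
\[
\frac{D^2+2DP_I}{2\eta_i}+\eta_i\sum_{t\in I}\|\hat g_t-m_t\|^2 .
\]
Because the epoch is never terminated by the doubling test, $\sum_{t\in I}\|\hat g_t-m_t\|^2\le 8dS+4L^2+O(d^2L^2)$; the extra terms account for the last round, and we use $\|\hat g_t-m_t\|\le dL+d\beta\delta/2$. The grid $\eta_i=2^{i-1}\eta_0$ covers the interval $[D/\sqrt{16dS},\,DH]$, which contains $\eta^\star=\sqrt{(D^2+2DP_I)/(8dS)}$ up to a factor $2$ because $P_I\le DH$. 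Choosing $i$ to be that grid point yields $O\big(\sqrt{dS(D^2+DP_I)}\big)$.

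\textbf{Meta term.} The aggregation step is optimistic Hedge with hint $\hat\ell_{t,i}=\inner{m_t}{y_{t,i}}$ and loss $\ell_{t,i}=\inner{\hat g_t}{y_{t,i}}$. Since $y_t=\sum_i p_{t,i}y_{t,i}$, the standard optimistic Hedge bound gives
\[
\sum_{t\in I}\big(\inner{p_t}{\ell_t}-\ell_{t,i}\big)\le \frac{\log(1/w_i)}{\varepsilon}+\varepsilon\sum_{t\in I}\max_j(\ell_{t,j}-\hat\ell_{t,j})^2 .
\]
There is a centering subtlety here: $\ell_{t,j}-\hat\ell_{t,j}=\inner{\hat g_t-m_t}{y_{t,j}}$, which is at most $\|\hat g_t-m_t\|\,R$ rather than $D$. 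We handle this by translating coordinates so that $0\in\cX$ and using $R\le D$. The prior weights satisfy $\log(1/w_i)=O(\log N)$, and $\varepsilon\asymp\sqrt{\log N}/(D\sqrt{dS})$. With these choices the meta regret is $O\big(D\sqrt{dS\log N}\big)$.

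\textbf{Obstacle and summation.} The main difficulty is converting the per-epoch guarantee, stated in terms of the budget $S$, into a bound in terms of $\bar S_T$. For the last budget $S_K$ in a phase we need $S_K\lesssim \sum_t\|\hat g_t-m_t\|^2/(8d)+S_{\min}$, since the previous epoch triggered the test. We also need
\[
\mathbb E\|\hat g_t-m_t\|^2\le 2d\|\nabla f_t(y_t)-m_t\|^2+\tfrac{d^2}{2}\beta^2\delta^2 ,
\]
and we must relate $\nabla f_t(y_t)$ to $\nabla f_t(x_t)$, which costs $2\beta^2\delta^2$ by smoothness. Because the epoch lengths are random, we would bound $\mathbb E\sqrt{S_K}\le\sqrt{\mathbb E S_K}$ by Jensen's inequality. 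The geometric growth of $S$ makes the sum over epochs in a phase dominated by the last epoch, at a cost of $O(\log)$ factors. Next, $\sum_I P_I\le P_T$, and $\sum_I\sqrt{a_I b_I}\le\sqrt{\sum_I a_I\sum_I b_I}$ by Cauchy--Schwarz. The $\delta T$ terms are controlled by the choice $\delta\le\min\{\sqrt S/(d\beta H),\,1/(\beta\sqrt H)\}$ and are lower order. Summing over the $O(\log T)$ time phases then gives $\widetilde O\big(\sqrt{d\bar S_T(D^2+DP_T)}\big)$.
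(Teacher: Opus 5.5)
Your proposal is correct and follows the paper's proof essentially step for step. You use the same per-epoch split into an optimistic-Hedge meta term (with $\|\ell_t-\hat\ell_t\|_\infty\le D\|\hat g_t-m_t\|$) and a best-expert dynamic optimistic-OGD term on the geometric stepsize grid. You then convert the budget $S$ into $\bar S_k$ through the last-epoch budget bound and Jensen, and finish with Cauchy--Schwarz over epochs and phases.

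Your deviations are minor. First, you locate $\eta^\star$ via the budget $S$ rather than the realized residual, so the paper's beyond-grid case is unnecessary. Second, you keep the threshold-triggering round inside the epoch. This keeps epoch membership $\mathcal{H}_{t-1}$-measurable for the zero-mean step, which the paper's good prefix $J_{k,e}^{\circ}$ does not enjoy. The cost is an extra $O(d^2L^2)$ in the residual, and hence a larger additive lower-order term. Also, the per-round bound should read $\|\hat g_t-m_t\|\le 2dL$, since $\|\nabla f_t(y_t)-m_t\|\le 2L$.
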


Theorem \ref{thm:tp-vr-opt-pp} highlights the adaptivity of \tpvrplusplus. The algorithm provides a dynamic regret bound of the form
$\widetilde{O}(\sqrt{d \bar{S}_{T} (1+P_{T})})$ without requiring prior knowledge of
the path-length $P_{T}$. The result matches the optimally adjusted $P_T$ of Theorem~\ref{thm:dynamic_regret} for dynamic regret to within logarithmic terms, thereby resolving the problem caused by the non-stationarity.

    \section{Conclusion and Future Work}
    Our work studies the bandit convex optimization problem with gradient prediction adaptivity, moving beyond worst-case regret guarantees to achieve improved performance. By establishing the fundamental limitations of single-point feedback, we design \tpvr\, to overcome these barriers. We further extend our analysis to non-stationary environments via dynamic regret and develop fully adaptive variants that require less prior knowledge.

    There are several avenues for future research. First, our lower bound in Theorem~\ref{thm:single_point_barrier} shows that under single-point feedback, the regret is fundamentally lower bounded by  $\Omega( \sqrt{T})$ even when the cumulative prediction error is negligible. However, this barrier does not preclude a \emph{partial} benefit from predictions. Since the worst-case regret for smooth convex losses under single-point feedback is $O(T^{2/3})$, which already improves upon the $O(T^{3/4})$ rate for Lipschitz losses, a natural question is whether predictions can further interpolate between this rate and the $\Omega(\sqrt{T})$ floor. That is, whether there exist algorithms achieving $\widetilde{O}(\min\{\sqrt{T},\, \bar{S}_T^a T^b\})$ with $a + b = 2/3$. The main difficulty is that the competing terms $\delta T + T/\delta^2$ in the single-point bias--variance tradeoff cannot be reduced by the prediction $m_t$, since the absolute function value observation mixes the local first-order signal with zero-order offsets. Another promising direction is to extend this work to the strongly convex setting.

    \clearpage

    {\appendices
    \section{Proposition~\ref{prop:linear-remainder-no-smooth}}\label{app:smooth}
\begin{proposition}
\label{prop:linear-remainder-no-smooth}
Let $L>0$ and $\delta\in(0,1)$. There exists a convex,
$L$-Lipschitz function $f:[-1,1]\to\mathbb{R}$ such that, for $y=0$ and direction $v=1$,
the symmetric difference remainder   defined as
\begin{equation*}
r(v)= f(y+\delta v)-f(y-\delta v)-2\delta \langle \nabla f(y),v\rangle
\end{equation*}
satisfies
\begin{equation*}
|r(v)|=\frac{2}{3}L\delta.
\end{equation*}
In other words, the normalized difference between the secant slope and the point gradient is constant:
\begin{equation*}
\left|\frac{f(\delta)-f(-\delta)}{2\delta}-f'(0)\right|=\frac{L}{3}.
\end{equation*}
\end{proposition}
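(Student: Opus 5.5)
The plan is to give an explicit one-dimensional construction, with the function allowed to depend on the fixed $\delta$. The construction should make the derivative at $0$ vanish while the secant slope over $[-\delta,\delta]$ stays proportional to $L$. A one-sided Huber-type function does this. I will take a parameter $a\in(0,\delta]$ and define
\begin{equation*}
f(x)=\begin{cases}0, & -1\le x\le 0,\\[0.5mm] \dfrac{L}{2a}x^{2}, & 0\le x\le a,\\[1.5mm] L\Bigl(x-\dfrac{a}{2}\Bigr), & a\le x\le 1.\end{cases}
\end{equation*}
I will then choose $a$ so that the remainder is exactly $\frac{2}{3}L\delta$.

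\emph{Step 1 (regularity).} The derivative of $f$ is $f'(x)=0$ for $x\le 0$, $f'(x)=Lx/a$ on $[0,a]$, and $f'(x)=L$ for $x\ge a$. The pieces agree at the breakpoints $0$ and $a$, so $f$ is continuously differentiable. Moreover $f'$ is nondecreasing with values in $[0,L]$, so $f$ is convex and $L$-Lipschitz. In particular $f'(0)=0$, so $\nabla f(y)$ is well defined at $y=0$ and equals $0$.

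\emph{Step 2 (evaluation).} Since $f(-\delta)=0$ and $a\le\delta$, we get $f(\delta)=L(\delta-a/2)$. Hence
\begin{equation*}
r(v)=f(\delta)-f(-\delta)-2\delta f'(0)=L\Bigl(\delta-\frac{a}{2}\Bigr).
\end{equation*}
Setting $a=\frac{2}{3}\delta$ gives $r(v)=\frac{2}{3}L\delta$. This $a$ lies in $(0,\delta)\subset(0,1)$, so the three pieces of $f$ fit inside $[-1,1]$. Dividing by $2\delta$ gives the normalized statement $\bigl|\frac{f(\delta)-f(-\delta)}{2\delta}-f'(0)\bigr|=\frac{L}{3}$.

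The only step needing care is choosing $a$ small enough that the linear piece is reached inside $[0,\delta]$, while keeping $f$ differentiable at $0$; this is the main design obstacle. It is also the point of the example: the smoothness constant of $f$ is $L/a=\frac{3L}{2\delta}$, which blows up as $\delta\to0$. This is consistent with the bound $|r(v)|\le\beta\delta^{2}$ holding only under Assumption~\ref{ass:smoothness}. So Lipschitz continuity alone cannot make the remainder vanish, even though $f$ is $C^{1}$ and the prediction $m=f'(0)$ is exact.
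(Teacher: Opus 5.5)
Your construction is correct and follows essentially the same route as the paper: both build a $C^1$ convex function from a continuous, nondecreasing, piecewise-linear derivative bounded by $L$ in absolute value that ramps over a window of width $\Theta(\delta)$, and then evaluate the secant directly. The only difference is a mirror-image choice: you take $f'(0)=0$ with secant slope $L/3$ (ramp from $0$ to $L$ on $[0,\tfrac{2}{3}\delta]$), whereas the paper takes $f'(0)=L/3$ with $f(\delta)=f(-\delta)$ (ramp from $-L$ to $L/3$ on $[-\delta,0]$).
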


\begin{proof}
Let $g:[-1,1]\to\mathbb{R}$ with
\begin{equation*}
g(x)=
\begin{cases}
-L, & x\in[-1,-\delta],\\[1mm]
\frac{L}{3}+\frac{4L}{3}\frac{x}{\delta}, & x\in[-\delta,0],\\[2mm]
\frac{L}{3}, & x\in[0,1].
\end{cases}
\end{equation*}
Then $g$ is continuous, nondecreasing, and satisfies that for all $x$ it holds that $|g(x)|\le L$.
Let
\begin{equation*}
f(x)=\int_{-\delta}^{x} g(s)\,ds.
\end{equation*}
Since $f\in C^1([-1,1])$ and $g(x)=f'(x)$ is nondecreasing, $f$ is convex. Additionally, since $|f'(x)|\le L$, the function $f$ is $L$-Lipschitz.

We have $f(-\delta)=0$ and $f'(0)=g(0)=L/3$. Also,
\begin{equation*}
f(\delta)-f(-\delta)=\int_{-\delta}^{\delta} g(s)\,ds
=\int_{-\delta}^{0}\Big(\frac{L}{3}+\frac{4L}{3}\frac{s}{\delta}\Big)\,ds
+\int_{0}^{\delta}\frac{L}{3}\,ds.
\end{equation*}
It follows that $f(\delta)=f(-\delta)$ which implies that
\begin{equation*}
r(1)=f(\delta)-f(-\delta)-2\delta f'(0)=0-2\delta\cdot \frac{L}{3}=-\frac{2}{3}L\delta.
\end{equation*}
Thus, we have $|r(1)|=\frac{2}{3}L\delta$ and $\Big|\frac{f(\delta)-f(-\delta)}{2\delta}-f'(0)\Big|=\frac{L}{3}$.
\end{proof}

    \section{Omitted Details of Section~\ref{sec:single_point}}\label{app:single_point}

\subsection{Useful definitions and notations}


\begin{definition}[Total Variation Distance]
\label{def:tv}
Given two probability distributions $P$ and $Q$ on the measurable space $ (\Omega,\mathcal{F})$, the total variation distance (denoted by $\mathrm{TV}(P,Q)$) can be computed as follows:
\begin{equation*}
    \mathrm{TV}(P, Q) = \sup_{A \in \mathcal{F}} |P(A) - Q(A)|.
\end{equation*}
In the case where the distributions $P$ and $Q$ can be described by their density functions $p$ and $q$ with respect to a base measure $\mu$, then we can express $\mathrm{TV}(P,Q)$ as
\begin{equation*}
    \mathrm{TV}(P, Q) = \frac{1}{2} \int |p(x) - q(x)| \, d\mu(x).
\end{equation*}
\end{definition}

\begin{definition}[Kullback-Leibler Divergence]
\label{def:kl}
Given two probability distributions $P$ and $Q$, the Kullback-Leibler (KL) divergence between these distributions can be calculated as follows:
\begin{equation*}
    \mathrm{KL}(P \| Q) = \int \log\left(\frac{dP}{dQ}\right) dP.
\end{equation*}
\end{definition}

Pinsker's Inequality is used to provide a standard upper bound for the total variation distance in terms of the Kullback-Leibler Divergence.

\begin{lemma}[Pinsker's Inequality]
\label{lem:pinsker}
Consider two probability distributions $P$ and $Q$, the following inequality holds:
\begin{equation*}
    \mathrm{TV}(P, Q) \le \sqrt{\frac{1}{2} \mathrm{KL}(P \| Q)}.
\end{equation*}
\end{lemma}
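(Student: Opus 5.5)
The plan is the standard two-step argument: reduce to two-point distributions via the data-processing inequality, then prove the binary case by a one-variable calculus argument. If $\mathrm{KL}(P\|Q)=\infty$ there is nothing to prove, so assume it is finite. Then $P\ll Q$, and both measures have densities $p,q$ with respect to $\mu=P+Q$ (if $P=Q$ both sides are zero).

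\textbf{Step 1 (reduction to a Bernoulli pair).} Let $A^\star=\{x: p(x)>q(x)\}$. By the density form in Definition~\ref{def:tv}, $\mathrm{TV}(P,Q)=P(A^\star)-Q(A^\star)$. Set $a=P(A^\star)$ and $b=Q(A^\star)$, and let $P',Q'$ be the Bernoulli laws with parameters $a,b$. Then $\mathrm{TV}(P,Q)=a-b$.

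Next I would show $\mathrm{KL}(P\|Q)\ge \mathrm{KL}(P'\|Q')$. Split the integral $\int p\log(p/q)\,d\mu$ over $A^\star$ and its complement. On each piece apply the log-sum inequality, which is Jensen's inequality for the convex map $t\mapsto t\log t$ under the normalized measure $q\,d\mu/Q(A)$:
\begin{equation*}
\int_A p\log\frac{p}{q}\,d\mu\;\ge\; P(A)\log\frac{P(A)}{Q(A)}.
\end{equation*}
Summing the two pieces gives exactly $a\log\frac{a}{b}+(1-a)\log\frac{1-a}{1-b}$. I would handle the degenerate cases $Q(A)=0$ (which forces $P(A)=0$ by absolute continuity) with the convention $0\log 0=0$.

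\textbf{Step 2 (binary Pinsker).} It remains to show that for $a,b\in[0,1]$,
\begin{equation*}
\phi(b)=a\log\frac{a}{b}+(1-a)\log\frac{1-a}{1-b}-2(a-b)^2\ge 0 .
\end{equation*}
Fix $a$ and treat $b\in(0,1)$ as the variable. Differentiating gives
\begin{equation*}
\phi'(b)=(b-a)\left(\frac{1}{b(1-b)}-4\right).
\end{equation*}
Since $b(1-b)\le 1/4$, the second factor is nonnegative. So $\phi$ is nonincreasing for $b<a$ and nondecreasing for $b>a$, hence minimized at $b=a$ where $\phi(a)=0$. The endpoints $b\in\{0,1\}$ either make the KL term infinite or are handled by continuity.

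Combining the two steps gives $\mathrm{KL}(P\|Q)\ge 2(a-b)^2=2\,\mathrm{TV}(P,Q)^2$, which rearranges to the claim. The only delicate part is Step 1: correctly applying Jensen/log-sum on each piece and keeping track of zero-measure and absolute-continuity edge cases. The calculus in Step 2 is routine.
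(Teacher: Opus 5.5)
Your proof is correct. It is the classical argument: you reduce to the Bernoulli pair with parameters $a=P(A^\star)$ and $b=Q(A^\star)$ via the log-sum (Jensen) inequality on $A^\star$ and on its complement. You then show $\phi(b)\ge\phi(a)=0$ from $\phi'(b)=(b-a)\bigl(\frac{1}{b(1-b)}-4\bigr)$ together with $b(1-b)\le\frac14$.

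The paper gives no proof of this lemma. It quotes Pinsker's inequality as a standard fact and uses it only once, in the proof of Theorem~\ref{thm:single_point_barrier}. So there is no competing argument to compare against, and your write-up simply makes that step self-contained.

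Two details are worth stating explicitly. First, the Jensen step uses $\int_A (p/q)\,q\,d\mu=P(A)$, which relies on $p=0$ holding $\mu$-a.e.\ on $\{q=0\}$; this is the same absolute continuity you already invoke for the degenerate case. Second, the constant $\frac12$ presumes the natural logarithm in $\mathrm{KL}$, which is what your derivative computation assumes.
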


In the following, we will provide a corollary about the Kullback-Leibler divergence between two Gaussian distributions. This will be useful for proving Theorem~\ref{thm:single_point_barrier}.

\begin{corollary}[KL Divergence for Gaussians]
\label{fact:kl-gaussian}
Suppose that the two Gaussian distributions $ \mathcal{N}(\mu_1,\Sigma)$ and $ \mathcal{N}(\mu_2,\Sigma)$ have the same covariance matrix (i.e., $ \Sigma$). Then, we have:
\begin{equation*}
    \mathrm{KL}(\mathcal{N}(\mu_1, \Sigma) \| \mathcal{N}(\mu_2, \Sigma)) = \frac{1}{2} (\mu_1 - \mu_2)^\top \Sigma^{-1} (\mu_1 - \mu_2).
\end{equation*}
In the case of univariate Gaussians with common variance $\sigma^2$, this reduces to $\frac{(\mu_1 - \mu_2)^2}{2\sigma^2}$.
\end{corollary}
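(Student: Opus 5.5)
The statement to prove is the Gaussian KL formula $\mathrm{KL}(\mathcal N(\mu_1,\Sigma)\|\mathcal N(\mu_2,\Sigma))=\tfrac12(\mu_1-\mu_2)^\top\Sigma^{-1}(\mu_1-\mu_2)$, with $\Sigma$ positive definite. The plan is to compute the expectation of the log-density ratio directly under $P=\mathcal N(\mu_1,\Sigma)$.

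First, write the two densities. Because the covariances coincide, the normalizing constants $(2\pi)^{-d/2}\det(\Sigma)^{-1/2}$ cancel in the ratio, leaving
\begin{equation*}
\log\frac{dP}{dQ}(x)=-\tfrac12(x-\mu_1)^\top\Sigma^{-1}(x-\mu_1)+\tfrac12(x-\mu_2)^\top\Sigma^{-1}(x-\mu_2).
\end{equation*}

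Second, rewrite the second quadratic around $\mu_1$. Set $z=x-\mu_1$ and $\Delta=\mu_1-\mu_2$, so that $x-\mu_2=z+\Delta$. Expanding gives
\begin{equation*}
\log\frac{dP}{dQ}(x)=z^\top\Sigma^{-1}\Delta+\tfrac12\Delta^\top\Sigma^{-1}\Delta.
\end{equation*}
The terms quadratic in $z$ cancel, and symmetry of $\Sigma^{-1}$ merges the two cross terms into one.

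Third, take the expectation under $P$. Since $\mathbb E_P[z]=0$, the linear term vanishes, and
\begin{equation*}
\mathrm{KL}(P\|Q)=\tfrac12\Delta^\top\Sigma^{-1}\Delta.
\end{equation*}
This is the claimed formula.

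Finally, setting $d=1$ and $\Sigma=\sigma^2$ gives the univariate case $(\mu_1-\mu_2)^2/(2\sigma^2)$.

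No step is a real obstacle. The only points needing care are that the covariance is shared, so the log-determinant and trace terms of the general Gaussian KL formula drop out, and that the cross terms are combined correctly using symmetry of $\Sigma^{-1}$. Integrability holds because the log-ratio is affine in $x$ and Gaussians have finite first moments.
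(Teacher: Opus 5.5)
Your proof is correct. With $z=x-\mu_1$ and $\Delta=\mu_1-\mu_2$, the log-likelihood ratio is exactly $z^\top\Sigma^{-1}\Delta+\tfrac12\Delta^\top\Sigma^{-1}\Delta$, its $P$-expectation is $\tfrac12\Delta^\top\Sigma^{-1}\Delta$, and the univariate case follows by setting $d=1$, $\Sigma=\sigma^2$.

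The paper gives no proof of its own. It records this as a standard fact and uses only the univariate case, for the per-round term $\mathrm{KL}(\mathcal N(-\varepsilon z_t,\sigma^2)\|\mathcal N(\varepsilon z_t,\sigma^2))$ in the proof of Theorem~\ref{thm:single_point_barrier}. Your direct computation is the standard argument that fills this in.
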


\subsection{Proof of Theorem~\ref{thm:single_point_barrier}}

Let $x_{+}$ and $x_{-}$ be two points satisfying $\|x_{+}-x_{-}\|=D$ in $\cX$, where the Euclidean diameter of $\cX$ is $D$. Then define the direction vector $v=(x_{+}-x_{-})/D$ and the midpoint $x_0=\frac{x_{+}+x_{-}}{2}$. Thus, for all $x\in \cX$, we see that 
\[
|\inner{v}{x-x_0}| \leq \frac{D}{2}.
\]

Let $M$ be a uniform Rademacher variable taking values in $\{+1,-1\}$. Assume that the Gaussian noise sequence $\{\xi_t\}_{t=1}^T$ are i.i.d (independent and identically distributed) Gaussian $\mathcal{N}(0,\sigma^2)$. Define $\varepsilon=\frac{\sigma}{2D\sqrt{T}}$. The assumption $\sigma \leq 2LD\sqrt{T}$ implies $\varepsilon \leq L$. Fix $t\in[T]$, then conditioned on the event that we observe $M=m$, we can consider the loss function as
\[
f_t^{(m)}(x)=-m\,\varepsilon\langle v,x-x_0\rangle+\xi_t .
\]
The loss function $f_t$ has $L$-Lipschitz continuity since $\|\nabla f_t^{(m)}(x)\|=\varepsilon \le L$ and is $0$-smooth (hence satisfies $\beta$-smoothness properties) since $\nabla^{2}f_t^{(m)}(x)=0$.

Let $P_+$ and $P_-$ be the distributions over the sequence of observations in the worlds $M=+1$ and $M=-1$, respectively.
Conditioned on the history $\mathcal{H}_{t-1}$, the observation $Y_t = f_t^{(M)}(x_t)$ is distributed as:
\begin{equation*}
Y_t \mid \mathcal{H}_{t-1} \sim \mathcal{N}(-m\varepsilon z_t, \sigma^2), \quad \text{where } z_t= \langle v, x_t - x_0 \rangle.
\end{equation*}

Utilizing Corollary~\ref{fact:kl-gaussian}, the conditional term at step $t$ is computed as follows:
\begin{align*}
\mathrm{KL}( \mathcal{N}(-\varepsilon z_t,\sigma^2) \| \mathcal{N}(+\varepsilon z_t,\sigma^2))  =
\frac{((- \varepsilon z_t ) - (+ \varepsilon z_t ))^2}{2 \sigma^2} =
\frac{(-2 \varepsilon z_t )^2}{2 \sigma^2} =
\frac{2 \varepsilon^2 z_t^2}{\sigma^2}.
\end{align*}
Substituting this Gaussian KL divergence back into the chain rule decomposition gives
\begin{equation*}
\mathrm{KL}(P_+ \| P_-) =\sum_{t=1}^T \mathbb{E}_{P_+}\left[ \mathrm{KL}(P_+(Y_t \ | \ \mathcal{H}_{t-1})\| P_-(Y_t \ | \ \mathcal{H}_{t-1})) \right] = \sum_{t=1}^T \mathbb{E}_{P_+}\left[ \frac{2 \varepsilon^2 z_t^2}{\sigma^2} \right] .
\end{equation*}
Then, we rewrite this expression as follows:
\begin{equation*}
\mathrm{KL}(P_+ \| P_-) =\mathbb{E}_{P_+}\left[ \sum_{t=1}^T \frac{2 \varepsilon^2 z_t^2}{\sigma^2} \right].
\end{equation*}
From the definition of $|z_t| \leq \frac{D}{2}$,
\begin{equation*}
\mathrm{KL}(P_+ \| P_-) \leq \frac{2 \varepsilon^2}{\sigma^2} \cdot \frac{D^2}{4} \cdot T=\frac{\varepsilon^2 D^2 T}{2\sigma^2}.
\end{equation*}
Substituting back in $ \varepsilon = \frac{\sigma}{2D\sqrt{T}} $, we get $ \mathrm{KL}(P_+ \| P_-) \leq \frac{1}{8}$, thus by Pinsker's inequality shown in Lemma~\ref{lem:pinsker} then
\begin{equation}\label{eq:pinsker}
\mathrm{TV}(P_+,P_-)\leq \sqrt{ \frac{1}{2} \mathrm{KL}(P_+ \|P_-)}\leq \sqrt{ \frac{1}{16}} =\frac{1}{4} .
\end{equation}

Static regret is defined by measuring against the best possible decision made in hindsight, i.e., the decision that minimizes cumulative losses,
\begin{equation*}
x^{(m)} \in \text{argmin}_{x\in \mathcal{X}} \sum_{t=1}^{T} f_t^{(m)}(x) ,
\end{equation*}
and since
\begin{equation*}
\sum_{t=1}^T f_t^{(m)}(x) = \sum_{t=1}^T \Big[ -m\varepsilon \langle v,x-x_0 \rangle + \xi_t \Big] = -m\varepsilon T\langle v,x-x_0 \rangle + \sum_{t=1}^T \xi_t,
\end{equation*}
the term \( \sum_{t=1}^T \xi_t \) is unaffected by changing \( x \); hence the minimization of cumulative losses through the term \( \sum_{t=1}^T f_t^{(m)}(x) \) results in maximizing the projection in the direction of \( mv \), therefore the points \( x_m^* = x_0 +m \cdot \frac{D}{2} v \) represent a maximum.

Returning to our previous definition of regret,
\begin{align*}
\regret_T^{(m)} &= \sum_{t=1}^T \Big[ -m\varepsilon z_t + \xi_t \Big] - \left(-m\varepsilon T\langle v,x_m^*-x_0\rangle + \sum_{t=1}^T\xi_t\right) \\
&= -m\varepsilon \sum_{t=1}^T z_t + m\varepsilon T \langle v ,x_m^* - x_0\rangle \\
&= -m\varepsilon Z + m\varepsilon T\cdot \Big(m\frac{D}{2}\Big) \\&= \frac{\varepsilon DT}{2} - m\varepsilon Z ,
\end{align*}
where $ Z = \sum_{t=1}^{T}z_t , z_t = \langle v,x_t - x_0\rangle$. Also note since $|z_t| \leq \frac{D}{2} , |Z| \leq \frac{DT}{2}$.

Thus, we have found that the average regret over $M$ is given by the following equation for the average regret:
\begin{equation*}
\mathbb{E}[\regret_T]
= \frac{1}{2} \mathbb{E}_{P_+}\left[\regret_T^{(+)}\right] + \frac{1}{2} \mathbb{E}_{P_-}\left[\regret_T^{(-)}\right]
= \frac{\varepsilon D T}{2} - \varepsilon \underbrace{\frac{1}{2}\left( \mathbb{E}_{P_+}[Z] - \mathbb{E}_{P_-}[Z]\right)}_{\mathbb{E}[M Z]}.
\end{equation*}

Let $Z$ be a random variable with values in $[-K,K]$, then for $|Z|\leq K$, $|\mathbb{E}_{P_+}[Z]-\mathbb{E}_{P_-}\left[Z\right]|\leq 2K\cdot\mathrm{TV}(P_+,P_-)$. Since $Z$ has a maximum absolute value of $\frac{DT}{2}$, we put $K=\frac{DT}{2}$. By applying the inequality $\mathrm{TV}(P_+,P_-) \leq \frac{1}{4}$ from \eqref{eq:pinsker} gives:
\begin{equation*}
\left|\mathbb{E}_{P_+}[Z]-\mathbb{E}_{P_-}\left[Z\right]\right| \leq 2\cdot\left(\frac{DT}{2}\right)\cdot \frac{1}{4} = \frac{DT}{4}.
\end{equation*}
Plugging this result into our previous equation, we find,
\begin{equation*}
\mathbb{E}[\regret_T] \ge \frac{\varepsilon D T}{2} - \varepsilon\left(\frac{1}{2}\cdot\frac{DT}{4}\right) = \frac{3}{8}\varepsilon D T.
\end{equation*}
Substituting $\varepsilon = \frac{\sigma}{2D\sqrt{T}}$ results in:
\begin{equation*}
\mathbb{E}[\regret_T] \geq \frac{3}{8}D T\cdot\left(\frac{\sigma}{2D\sqrt{T}}\right) = \frac{3}{16}\sigma\sqrt{T}.
\end{equation*}
    \section{Omitted Details of Section~\ref{sec:twopoint_vr}}
\label{app:two_point}

\subsection{Useful Lemmas in Section~\ref{sec:tp-vr-alg}}

\begin{lemma}[Difference Approximation Error]
    \label{lem:smoothness-remainder} Assume that $f_{t}$ is $\beta$-smooth. For any point $y_{t}$,
    direction $v_{t}\in \mathbb{S}^{d-1}$ and perturbation radius $\delta > 0$, the remainder in the difference approximation between points is defined as:
    \begin{equation*}
        r_{t}= f_{t}(y_{t}+ \delta v_{t}) - f_{t}(y_{t}- \delta v_{t}) - 2\delta
        \langle \nabla f_{t}(y_{t}), v_{t}\rangle.
    \end{equation*}
    Then, the magnitude of this remainder is bounded by:
    \begin{equation*}
        |r_{t}| \le \beta \delta^{2}.
    \end{equation*}
\end{lemma}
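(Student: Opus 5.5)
The plan is to split the symmetric difference into two one-sided Taylor remainders around $y_t$ and bound each using the standard descent-lemma consequence of $\beta$-smoothness. That consequence says that for all $x,z\in\cX$,
\begin{equation*}
\bigl|f_t(z)-f_t(x)-\langle \nabla f_t(x), z-x\rangle\bigr|\le \frac{\beta}{2}\|z-x\|^2 .
\end{equation*}
It follows from writing $f_t(z)-f_t(x)-\langle \nabla f_t(x),z-x\rangle=\int_0^1\langle \nabla f_t(x+s(z-x))-\nabla f_t(x),z-x\rangle\,ds$. One then applies Cauchy--Schwarz and the Lipschitz-gradient bound of Assumption~\ref{ass:smoothness}, which gives an integrand of at most $\beta s\|z-x\|^2$. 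Integrating over $s\in[0,1]$ gives the factor $\beta/2$. The segment between $x$ and $z$ stays in $\cX$ by convexity, so the smoothness assumption applies along it.

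Next, set $x=y_t$ and $z=y_t\pm\delta v_t$. Both query points lie in $\cX$ by the feasibility argument of Section~\ref{sec:setup}, because $y_t\in\cX_\alpha$ and $\delta=\alpha r$. Define the one-sided remainders
\begin{equation*}
e_\pm = f_t(y_t\pm\delta v_t)-f_t(y_t)\mp\delta\langle \nabla f_t(y_t),v_t\rangle .
\end{equation*}
Since $\|v_t\|=1$, the bound above gives $|e_\pm|\le \beta\delta^2/2$.

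Subtracting the two expansions, the $f_t(y_t)$ terms cancel and the linear terms add up to $2\delta\langle\nabla f_t(y_t),v_t\rangle$. This gives $r_t=e_+-e_-$. The triangle inequality then yields $|r_t|\le \beta\delta^2/2+\beta\delta^2/2=\beta\delta^2$, which is the claimed bound.

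The only step needing care is justifying the integral form, since differentiability is assumed only on $\cX$. This is fine because the whole segment lies in $\cX$, and the fundamental theorem of calculus applies to $s\mapsto f_t(x+s(z-x))$ because its derivative is continuous.
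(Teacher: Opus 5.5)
Your proof is correct and is essentially the paper's argument. The paper writes $r_t=\int_{-\delta}^{\delta}\langle\nabla f_t(y_t+sv_t)-\nabla f_t(y_t),v_t\rangle\,ds$ and bounds the integrand by $\beta|s|$; your one-sided remainders $e_+$ and $e_-$ are just this integral split over $[0,\delta]$ and $[-\delta,0]$, each bounded by $\beta\delta^2/2$ and then combined by the triangle inequality.
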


\begin{proof}
    Let $\phi(s)=f_t(y_t+sv_t)$. The difference in values of the functions along the direction of the unit vector $v_{t}$ expressed as an integral of the directional derivative is:
    \begin{equation*}
        f_{t}(y_{t}+ \delta v_{t}) - f_{t}(y_{t}- \delta v_{t}) = \phi(\delta) -
        \phi(-\delta) = \int_{-\delta}^{\delta}\phi'(s) ds = \int_{-\delta}^{\delta}
        \langle \nabla f_{t}(y_{t}+ s v_{t}), v_{t}\rangle ds.
    \end{equation*}
    The $2\delta \langle \nabla f_{t}(y_{t}), v_{t}\rangle$ term can similarly be
    expressed as:
    \begin{equation*}
        2\delta \langle \nabla f_{t}(y_{t}), v_{t}\rangle = \int_{-\delta}^{\delta}
        \langle \nabla f_{t}(y_{t}), v_{t}\rangle ds.
    \end{equation*}
    The above equations give us:
    \begin{equation*}
        r_{t}= \int_{-\delta}^{\delta}\left\langle \nabla f_{t}(y_{t}+ s v_{t}) -
        \nabla f_{t}(y_{t}), v_{t}\right\rangle ds.
    \end{equation*}
    Applying the Cauchy-Schwarz inequality gives:
    \begin{align*}
        |r_{t}| & \le \int_{-\delta}^{\delta}\left| \left\langle \nabla f_{t}(y_{t}+ s v_{t}) - \nabla f_{t}(y_{t}), v_{t}\right\rangle \right| ds \\
                & \le \int_{-\delta}^{\delta}\left\| \nabla f_{t}(y_{t}+ s v_{t}) - \nabla f_{t}(y_{t}) \right\| \|v_{t}\| ds.
    \end{align*}
    Because $\|\nabla f_{t}(y_{t}+ sv_{t}) - \nabla f_{t}(y_{t})\| \le \beta \| y_{t}+ sv_{t} - y_{t}\| = \beta|s|$ for the $\beta$-smoothness of $f_{t}$, and $\|v_{t}\|=1$, we have:
    \begin{equation*}
        |r_{t}| \le \int_{-\delta}^{\delta}\beta |s| ds = 2\beta \int_{0}^{\delta}
        s ds = \beta \delta^{2}.
    \end{equation*}
    This proves the lemma.
\end{proof}

\begin{lemma}[Gradient Difference Between Query and Center Points]
    \label{lem:gradient-stability} Denote $g_t^x=\nabla f_t(x_t)$ and $g_t^y=\nabla f_t(y_t)$. Assume $f_{t}$ is $\beta$-smooth on $\cX$.
    Then the difference between gradients sampled at the query point $x_{t}=y_{t}+\delta v_{t}$ and the center point $y_{t}$ is proportionally smaller than $\beta\delta$. In particular, we have bounds:
    \begin{align*}
        \|g_{t}^{y}- m_{t}\|^{2} & \le 2\|g_{t}^{x}- m_{t}\|^{2}+ 2\beta^{2}\delta^{2}, \\
        \|g_{t}^{x}- m_{t}\|^{2} & \le 2\|g_{t}^{y}- m_{t}\|^{2}+ 2\beta^{2}\delta^{2}.
    \end{align*}
\end{lemma}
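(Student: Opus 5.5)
The proof will be a direct application of the smoothness assumption (Assumption~\ref{ass:smoothness}) combined with the elementary inequality $\|a+b\|^2\le 2\|a\|^2+2\|b\|^2$.

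\textbf{Step 1 (gradient gap).} Both points lie in $\cX$: $y_t\in\cX_\alpha\subseteq\cX$, and $x_t=y_t+\delta v_t\in\cX$ by the feasibility argument in Section~\ref{sec:setup}. Smoothness therefore applies and gives
\[
\|g_t^x-g_t^y\|\le\beta\|x_t-y_t\|=\beta\delta\|v_t\|=\beta\delta,
\]
since $v_t\in\mathbb{S}^{d-1}$.

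\textbf{Step 2 (first inequality).} Write $g_t^y-m_t=(g_t^x-m_t)+(g_t^y-g_t^x)$. Applying $\|a+b\|^2\le 2\|a\|^2+2\|b\|^2$, which follows from Young's inequality or the parallelogram law, yields
\[
\|g_t^y-m_t\|^2\le 2\|g_t^x-m_t\|^2+2\|g_t^y-g_t^x\|^2\le 2\|g_t^x-m_t\|^2+2\beta^2\delta^2 .
\]

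\textbf{Step 3 (second inequality).} This is symmetric. Decompose $g_t^x-m_t=(g_t^y-m_t)+(g_t^x-g_t^y)$ and apply the same bound to obtain
\[
\|g_t^x-m_t\|^2\le 2\|g_t^y-m_t\|^2+2\beta^2\delta^2 .
\]

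There is no substantive obstacle here. The only point that needs care is confirming that $x_t\in\cX$, so that the smoothness hypothesis stated on $\cX$ applies to the pair $(x_t,y_t)$; the shrinkage $\alpha=\delta/r$ guarantees this.
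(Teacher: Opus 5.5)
Your proposal is correct and follows the paper's proof: smoothness gives $\|g_t^x-g_t^y\|\le\beta\delta$, and applying $\|a+b\|^2\le 2\|a\|^2+2\|b\|^2$ to each decomposition yields both bounds. Your explicit check that $x_t\in\cX$, which follows from the shrinkage $\alpha=\delta/r$ and is needed for the smoothness hypothesis on $\cX$ to apply, is a detail the paper leaves implicit.
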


\begin{proof}
    First, since $f_{t}$ is $\beta$-smooth and
    $\|x_{t}- y_{t}\| = \|\delta v_{t}\| = \delta$, we have:
    \begin{equation*}
        \|g_{t}^{x}- g_{t}^{y}\| = \|\nabla f_{t}(x_{t}) - \nabla f_{t}(y_{t})\|
        \le \beta\|x_{t}- y_{t}\| = \beta\delta.
    \end{equation*}
    Then, by applying the inequality $\|a+b\|^{2}\le 2\|a\|^{2}+ 2\|b\|^{2}$ on the error term, we can write the following:
    \begin{align*}
        \|g_{t}^{y}- m_{t}\|^{2} & = \|(g_{t}^{y}- g_{t}^{x}) + (g_{t}^{x}- m_{t})\|^{2}        \\
        & \le 2\|g_{t}^{y}- g_{t}^{x}\|^{2}+ 2\|g_{t}^{x}- m_{t}\|^{2} \\
        & \le 2\beta^{2}\delta^{2}+ 2\|g_{t}^{x}- m_{t}\|^{2}.
    \end{align*}
    The bound on $\|g_{t}^{x}- m_{t}\|^{2}$ follows symmetrically.
\end{proof}

\paragraph{Estimator Decomposition.}
In the analysis of the variance-reduced estimator, we consider its error as consisting of two parts: a zero-mean term and a bias term. The prediction error at the center point is denoted as $a_{t}= g_{t}^{y}- m_{t}$. Using the definitions of $\hat{g}_{t}$ and $r_{t}$ set forth in Lemma~\ref{lem:smoothness-remainder} above, we have:
\begin{equation}
    \hat g_{t}=m_{t}+ \frac{d}{2\delta}\left( 2\delta \langle a_{t}, v_{t}\rangle
    + r_{t}\right) v_{t}=m_{t}+ d\langle a_{t},v_{t}\rangle v_{t}+ \frac{d}{2\delta}
    r_{t}v_{t}. \label{eq:estimator-expanded}
\end{equation}
By simply rearranging, we have:
\begin{equation*}
    \hat g_{t}- g_{t}^{y}= \underbrace{(d v_tv_t^\top - I)a_t}_{= Z_t
    \text{ (Zero-mean Term)}}+ \underbrace{\frac{d}{2\delta}r_t v_t}_{= B_t
    \text{ (Bias)}}. \label{eq:ZB}
\end{equation*}
Here, we denote two terms as $Z_{t} = (dv_{t}v_{t}^{\top}-I)a_{t}$ and $B_{t}=\frac{d}{2\delta}r_{t}v_{t}$, respectively.

\begin{lemma}[Covariance of Uniform Sphere Sampling]
    \label{lem:sphere-covariance} Let $v_{t}$ be sampled uniformly from the unit
    sphere $\mathbb{S}^{d-1}$. Then
    $\mathbb{E}[v_{t}v_{t}^{\top}] = \frac{1}{d}I_{d}$.
\end{lemma}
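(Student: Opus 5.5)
The plan is to combine rotational invariance of the uniform distribution on $\mathbb{S}^{d-1}$ with the constraint $\|v_t\|=1$.

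First, off-diagonal entries. For $i\neq j$, the map that flips the sign of coordinate $i$ is an orthogonal transformation. It therefore preserves $\mathrm{Unif}(\mathbb{S}^{d-1})$, so $v_t$ and its reflected copy have the same law. Under this reflection the product $v_{t,i}v_{t,j}$ changes sign. Hence $\mathbb{E}[v_{t,i}v_{t,j}] = -\mathbb{E}[v_{t,i}v_{t,j}]$, which gives $\mathbb{E}[v_{t,i}v_{t,j}]=0$.

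Next, diagonal entries. The transposition swapping coordinates $i$ and $j$ is also orthogonal, so every coordinate has the same distribution and $\mathbb{E}[v_{t,i}^2]$ is the same for all $i$. Since $\sum_{i=1}^d v_{t,i}^2=\|v_t\|^2=1$ almost surely, taking expectations gives $d\,\mathbb{E}[v_{t,1}^2]=1$, so each diagonal entry equals $1/d$.

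Combining the two steps gives $\mathbb{E}[v_tv_t^\top]=\frac1d I_d$. Both expectations exist because every entry of $v_tv_t^\top$ is bounded by $1$.

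A slicker alternative uses the matrix $\Sigma=\mathbb{E}[v_tv_t^\top]$ directly. For every orthogonal $Q$, invariance gives $Q\Sigma Q^\top=\Sigma$, so $\Sigma$ commutes with all orthogonal matrices. By Schur's lemma, or by testing against reflections and permutations as above, $\Sigma=cI$. Then $\mathrm{tr}\,\Sigma=\mathbb{E}\|v_t\|^2=1$ forces $c=1/d$.

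There is no real obstacle here. The only point that needs justification is that $\mathrm{Unif}(\mathbb{S}^{d-1})$ is invariant under orthogonal maps, which holds by definition as the normalized surface measure. Equivalently, one can write $v_t=g/\|g\|$ with $g\sim\mathcal N(0,I_d)$ and use the invariance of the standard Gaussian.
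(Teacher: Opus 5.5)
Your proof is correct and follows essentially the same route as the paper. A coordinate sign-flip reflection kills the off-diagonal entries, symmetry makes the diagonal entries equal, and $\sum_i v_{t,i}^2=1$ fixes each at $1/d$; you are slightly more explicit than the paper in justifying equal diagonals via the coordinate transposition.
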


\begin{proof}
    Let $A= \mathbb{E}[v_{t}v_{t}^{\top}]$ with entries $A_{ij}=\mathbb{E}[v_{t,i}v_{t,j}]$. Since the uniform distribution on $\mathbb{S}^{d-1}$ is invariant to any orthogonal transformation applied to it, then for any distinct indices $i,j$, the transformation that maps $v_{t,j}$ to $-v_{t,j}$ while keeping the remaining coordinates fixed produces the same distribution of $v_{t}$. Consequently, 
    \begin{equation*}
        \mathbb{E}[v_{t,i}v_{t,j}] = \mathbb{E}[v_{t,i}(-v_{t,j})] = -\mathbb{E}[v_{t,i}v_{t,j}],
    \end{equation*} this implies all off-diagonal entries of $A_{ij}$ are zero.
    
    For the diagonal entries of $A$, the symmetry implies that the marginal distribution for each coordinate individually is the same, and therefore for all $i,j$, we have
    \begin{equation*}
        \mathbb{E}[v_{t,i}^{2}] = \mathbb{E}[v_{t,j}^{2}].
    \end{equation*}
    Let $c$ denote
    this value, then since $v_{t}$ lies on the surface of a unit sphere, we have with probability $1$ that; 
    \begin{equation*}
        1 = \mathbb{E}\left[\sum_{i=1}^{d}v_{t,i}^{2}\right] = \sum_{i=1}^{d}\mathbb{E}
        [v_{t,i}^{2}] = \sum_{i=1}^{d}c = d \cdot c.
    \end{equation*}
    Hence, we can conclude that $c=1/d$. Therefore, $A$ becomes the diagonal matrix $A = \frac{1}{d}I_{d}$.
\end{proof}

\begin{lemma}[Bias Analysis]
    \label{lem:bias} Assume $f_{t}$ is $\beta$-smooth. Conditioned on the
    history $\mathcal{H}_{t-1}$, $Z_{t}=(d v_tv_t^\top - I)a_t$ is zero-mean  $\mathbb{E}[Z_{t}\mid \mathcal{H}_{t-1}] = 0$, and the bias term $B_{t}=\frac{d}{2\delta}r_t v_t$
    is bounded as:
    \begin{equation*}
        \|b_{t}\| \le \frac{d}{2}
        \beta\delta, \quad \text{where }b_{t}= \mathbb{E}[B_{t}\mid \mathcal{H}_{t-1}
        ].
    \end{equation*}
    Hence, the expectation of estimated gradient conditioned on the history is:
    \begin{equation*}
        \mathbb{E}[\hat g_{t}\mid \mathcal{H}_{t-1}] = g_{t}^{y}+ b_{t}.
    \end{equation*}
\end{lemma}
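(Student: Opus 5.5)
The plan has three parts: show that $Z_t$ has conditional mean zero, bound the conditional mean of $B_t$ by a pointwise bound, and combine the two through the decomposition $\hat g_t - g_t^y = Z_t + B_t$.

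\emph{Conditional structure.} Conditioned on $\mathcal{H}_{t-1}$, the prediction $m_t$ is fixed, since it is $\mathcal{H}_{t-1}$-measurable by assumption. The center point $y_t = \Pi_{\cX_\alpha}(y'_t-\eta m_t)$ is fixed too, because $y'_t$ is built from $\mathcal{H}_{t-1}$-measurable quantities. For the loss $f_t$, I treat it as fixed given the history, as the adversary is oblivious or at least chooses $f_t$ before $v_t$ is drawn. Hence $g_t^y$ and $a_t = g_t^y - m_t$ are $\mathcal{H}_{t-1}$-measurable. The only remaining randomness is $v_t \sim \mathrm{Unif}(\mathbb{S}^{d-1})$, which is independent of $\mathcal{H}_{t-1}$.

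\emph{Zero mean of $Z_t$.} Pulling the measurable $a_t$ out of the conditional expectation gives $\mathbb{E}[Z_t\mid\mathcal{H}_{t-1}] = (d\,\mathbb{E}[v_tv_t^\top]-I)a_t$. By Lemma~\ref{lem:sphere-covariance}, $\mathbb{E}[v_tv_t^\top]=I/d$, so this equals $(I-I)a_t = 0$.

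\emph{Bias bound.} I will use Jensen's inequality for the norm, $\|b_t\| \le \mathbb{E}[\|B_t\|\mid\mathcal{H}_{t-1}]$. Pointwise, $\|B_t\| = \frac{d}{2\delta}|r_t|\,\|v_t\|$ with $\|v_t\|=1$. Lemma~\ref{lem:smoothness-remainder} gives $|r_t|\le\beta\delta^2$ for every realization of $v_t$. Therefore $\|B_t\|\le \frac{d}{2\delta}\beta\delta^2 = \frac{d}{2}\beta\delta$, and the same bound holds for $\|b_t\|$.

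\emph{Final identity.} Taking the conditional expectation of $\hat g_t = g_t^y + Z_t + B_t$ yields $\mathbb{E}[\hat g_t\mid\mathcal{H}_{t-1}] = g_t^y + 0 + b_t$.

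The only delicate point is the measurability of $f_t$ and $y_t$ with respect to $\mathcal{H}_{t-1}$. This requires that the environment does not depend on the current $v_t$, which I take as part of the protocol. Otherwise everything is routine.
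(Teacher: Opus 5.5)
Your proposal is correct and matches the paper's proof essentially step for step. You show $\mathbb{E}[Z_t\mid\mathcal{H}_{t-1}]=0$ by pulling out the $\mathcal{H}_{t-1}$-measurable $a_t$ and using $\mathbb{E}[v_tv_t^\top]=I/d$; you get $\|b_t\|\le\mathbb{E}[\|B_t\|\mid\mathcal{H}_{t-1}]\le\frac{d}{2}\beta\delta$ from the pointwise bound $|r_t|\le\beta\delta^2$; and you obtain the identity by taking the conditional expectation of $\hat g_t=g_t^y+Z_t+B_t$. You also state explicitly that $f_t$ and $y_t$ must be fixed given $\mathcal{H}_{t-1}$, so that $f_t$ does not depend on the current $v_t$; the paper uses this assumption only implicitly, so making it explicit is a helpful clarification.
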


\begin{proof}
    Since $v_{t}$ is sampled uniformly from $\mathbb{S}^{d-1}$ independently of $\mathcal{H}
    _{t-1}$, we have the identity
    $\mathbb{E}[v_{t}v_{t}^{\top}] = \frac{1}{d}I_{d}$ from Lemma~\ref{lem:sphere-covariance}.
    For the term $Z_{t}$:
    \begin{equation*}
        \mathbb{E}[Z_{t}\mid \mathcal{H}_{t-1}]= \big(d\mathbb{E}[v_{t}v_{t}^{\top}
        ]-I_{d}\big)a_{t}= 0.
    \end{equation*}
    For the bias term $b_{t}$, we have $|r_{t}|\le \beta\delta^{2}$ from Lemma~\ref{lem:smoothness-remainder}
    and the fact that $\|v_{t}\|=1$:
    \begin{equation*}
        \|b_{t}\|\le\mathbb{E}\big[\|B_{t}\|\mid \mathcal{H}_{t-1}\big]= \mathbb{E}
        \left[ \frac{d}{2\delta}|r_{t}| \|v_{t}\| \ \Big|\ \mathcal{H}_{t-1}\right]\le \frac{d}{2\delta}\cdot \beta\delta^{2}=\frac{d}{2}\beta\delta.
    \end{equation*}
    Also, we have:
    \begin{equation*}
        \mathbb{E}[\hat g_{t}\mid \mathcal{H}_{t-1}] = g_{t}^{y}+ b_{t}.
    \end{equation*}
\end{proof}

\begin{lemma}[Second Moment Bound]
    \label{lem:second-moment-center} Assume $f_{t}$ is $\beta$-smooth. Then the
    second moment of the estimator error around the prediction $m_{t}$ is
    bounded by:
    \begin{equation*}
        \mathbb{E}\left[\|\hat g_{t}-m_{t}\|^{2}\mid \mathcal{H}_{t-1}\right]\le
        2d\|g_{t}^{y}-m_{t}\|^{2}+ \frac{d^{2}}{2}\beta^{2}\delta^{2}.
    \end{equation*}
\end{lemma}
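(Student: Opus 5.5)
We will use the expansion \eqref{eq:estimator-expanded}, $\hat g_t-m_t = d\langle a_t,v_t\rangle v_t + \frac{d}{2\delta}r_t v_t$ with $a_t=g_t^y-m_t$, and bound the two pieces separately. By $\|u+w\|^2\le 2\|u\|^2+2\|w\|^2$,
\begin{equation*}
\|\hat g_t-m_t\|^2 \le 2d^2\langle a_t,v_t\rangle^2\|v_t\|^2 + 2\cdot\frac{d^2}{4\delta^2}r_t^2\|v_t\|^2 .
\end{equation*}
Since $\|v_t\|=1$, the first term equals $2d^2\, a_t^\top v_tv_t^\top a_t$. The second term is at most $\frac{d^2}{2\delta^2}\beta^2\delta^4=\frac{d^2}{2}\beta^2\delta^2$ by Lemma~\ref{lem:smoothness-remainder}, and this bound holds deterministically.

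Next we take conditional expectations given $\mathcal H_{t-1}$. The vector $a_t$ is $\mathcal H_{t-1}$-measurable, because $y_t$ and $m_t$ are determined before $v_t$ is drawn, and $v_t$ is independent of $\mathcal H_{t-1}$. So Lemma~\ref{lem:sphere-covariance} gives
\begin{equation*}
\mathbb E[a_t^\top v_tv_t^\top a_t\mid\mathcal H_{t-1}]=\|a_t\|^2/d .
\end{equation*}
The first term therefore contributes $2d\|a_t\|^2=2d\|g_t^y-m_t\|^2$. Adding the bias bound yields exactly the claimed inequality.

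The only delicate point is that the factor $2$ from the inequality $\|u+w\|^2\le2\|u\|^2+2\|w\|^2$ must land on the variance term and not inflate the bias constant beyond $d^2/2$. The arithmetic above confirms it does: $2\cdot d^2/(4\delta^2)\cdot\beta^2\delta^4=d^2\beta^2\delta^2/2$. The remaining care is the measurability of $y_t$ with respect to $\mathcal H_{t-1}$, which is routine.
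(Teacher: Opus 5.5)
Your proof is correct and follows essentially the same route as the paper. Both expand via \eqref{eq:estimator-expanded}, split with $(x+y)^2\le 2x^2+2y^2$, use $\mathbb{E}[\langle a_t,v_t\rangle^2\mid\mathcal H_{t-1}]=\|a_t\|^2/d$ from Lemma~\ref{lem:sphere-covariance}, and bound $r_t^2\le\beta^2\delta^4$ by Lemma~\ref{lem:smoothness-remainder}; your explicit note that $a_t$ is $\mathcal H_{t-1}$-measurable is a step the paper leaves implicit.
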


\begin{proof}
    From~\eqref{eq:estimator-expanded},  we can expand on the definition of the estimator as follows: 
    \begin{equation*}
        \hat g_{t}- m_{t}\;=\; \left( d\langle a_{t}, v_{t}\rangle + \frac{d}{2\delta}
        r_{t}\right) v_{t}.
    \end{equation*}
    Taking the squared norm and $\|v_{t}\|=1$, we get the following:
    \begin{equation*}
        \|\hat g_{t}- m_{t}\|^{2}\;=\; \left( d\langle a_{t}, v_{t}\rangle + \frac{d}{2\delta}
        r_{t}\right)^{2}.
    \end{equation*}
    By the fact that $(x+y)^{2}\le 2x^{2}+ 2y^{2}$, the right-hand side of the above equation is written as:
    \begin{equation*}
        \|\hat g_{t}- m_{t}\|^{2}\le 2d^{2}\langle a_{t}, v_{t}\rangle^{2}\;+\; 2
        \left(\frac{d}{2\delta}\right)^{2}r_{t}^{2}.
    \end{equation*}
    Since
    $\mathbb{E}[\langle a_{t}, v_{t}\rangle^{2}] = \frac{1}{d}\|a_{t}\|^{2}$ and
    $r_{t}^{2}\le \beta^{2}\delta^{4}$ from $|r_{t}| \le \beta\delta^{2}$. Hence,
    we have
    \begin{align*}
        \mathbb{E}\left[\|\hat g_{t}-m_{t}\|^{2}\mid \mathcal{H}_{t-1}\right] & \;\le\; 2d^{2}\left( \frac{\|a_{t}\|^{2}}{d}\right) \;+\; \frac{d^{2}}{2\delta^{2}}(\beta^{2}\delta^{4}) \\
        & \;=\; 2d\|a_{t}\|^{2}\;+\; \frac{d^{2}}{2}\beta^{2}\delta^{2}.
    \end{align*}
    By substituting $a_{t}= g_{t}^{y}- m_{t}$, we complete this proof.
\end{proof}

\begin{corollary}[Second Moment Bound with Gradient at $x_{t}$]
    \label{cor:second-moment-play} Under the assumptions of Lemma~\ref{lem:second-moment-center},
    \begin{equation*}
        \mathbb{E}\left[\|\hat g_{t}-m_{t}\|^{2}\middle| \mathcal{H}_{t-1}\right
        ] \le 4d\,\mathbb{E}\left[\|g_{t}^{x}-m_{t}\|^{2}\;\middle|\; \mathcal{H}
        _{t-1}\right] + (d^{2}/2+4d)\beta^{2}\delta^{2}.
    \end{equation*}
    And in particular,
    \begin{equation*}
        \sum_{t=1}^{T}\mathbb{E}\left[\|\hat g_{t}-m_{t}\|^{2}\right]\le 4d\bar S
        _{T}+ (d^{2}/2+4d)\beta^{2}\delta^{2}T.
    \end{equation*}
\end{corollary}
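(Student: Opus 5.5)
The plan is to combine Lemma~\ref{lem:second-moment-center} with Lemma~\ref{lem:gradient-stability}, and then take total expectations. We work conditionally on $\mathcal{H}_{t-1}$ throughout. The key observation is that $y_t$ and $m_t$ are $\mathcal{H}_{t-1}$-measurable, since $y_t=\Pi_{\cX_\alpha}(y'_t-\eta m_t)$ and $y'_t$ depends only on past rounds. Hence $a_t=g_t^y-m_t$ is a fixed vector given the history.

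\textbf{Step 1: convert the center-point bound to a query-point bound.} Lemma~\ref{lem:second-moment-center} gives
\[
\mathbb{E}[\|\hat g_t-m_t\|^2\mid\mathcal{H}_{t-1}]\le 2d\|g_t^y-m_t\|^2+\tfrac{d^2}{2}\beta^2\delta^2 .
\]
The first inequality of Lemma~\ref{lem:gradient-stability} gives, pointwise for every realization of $v_t$,
\[
\|g_t^y-m_t\|^2\le 2\|g_t^x-m_t\|^2+2\beta^2\delta^2 .
\]
The left-hand side of this inequality is $\mathcal{H}_{t-1}$-measurable, so it equals its own conditional expectation. Taking $\mathbb{E}[\cdot\mid\mathcal{H}_{t-1}]$ of both sides therefore yields
\[
\|g_t^y-m_t\|^2\le 2\,\mathbb{E}[\|g_t^x-m_t\|^2\mid\mathcal{H}_{t-1}]+2\beta^2\delta^2 .
\]
Note that $x_t=y_t+\delta v_t$ is random given the history.

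\textbf{Step 2: substitute.} Plugging Step 1 into Lemma~\ref{lem:second-moment-center} gives
\[
\mathbb{E}[\|\hat g_t-m_t\|^2\mid\mathcal{H}_{t-1}]\le 4d\,\mathbb{E}[\|g_t^x-m_t\|^2\mid\mathcal{H}_{t-1}]+4d\beta^2\delta^2+\tfrac{d^2}{2}\beta^2\delta^2 .
\]
This is exactly the first claim, with constant $(d^2/2+4d)\beta^2\delta^2$.

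\textbf{Step 3: sum over rounds.} Taking total expectations via the tower property and summing over $t\in[T]$, we use linearity together with $\sum_t\mathbb{E}\|g_t^x-m_t\|^2=\mathbb{E}[S_T]=\bar S_T$. This yields
\[
\sum_{t=1}^T\mathbb{E}\|\hat g_t-m_t\|^2\le 4d\bar S_T+(d^2/2+4d)\beta^2\delta^2T .
\]

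The only delicate point is the measurability argument in Step 1. We must justify that $y_t$, $m_t$ (and hence $g_t^y$) are determined before $v_t$ is drawn, and that $v_t$ is independent of $\mathcal{H}_{t-1}$; this follows from the algorithm's construction. Everything else is direct substitution.
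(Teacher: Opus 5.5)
Your proposal is correct and follows the paper's proof essentially line for line. You bound $\|g_t^y-m_t\|^2$ via Lemma~\ref{lem:gradient-stability}, take $\mathbb{E}[\cdot\mid\mathcal{H}_{t-1}]$ using that $g_t^y$ and $m_t$ are fixed given the history, substitute into Lemma~\ref{lem:second-moment-center}, and then apply the tower property and sum over $t$. Your explicit measurability remark makes precise an assumption the paper uses implicitly (e.g.\ in Lemma~\ref{lem:bias}), so nothing is missing.
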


\begin{proof}
    We know that from Lemma~\ref{lem:gradient-stability}, we have:
    \begin{equation*}
        \|g_{t}^{y}-m_{t}\|^{2}\le 2\|g_{t}^{x}-m_{t}\|^{2}+ 2\beta^{2}\delta^{2}.
    \end{equation*}
    Taking the conditional expectation and substituting this into Lemma~\ref{lem:second-moment-center},
    we get:
    \begin{align*}
        \mathbb{E}\left[\|\hat g_{t}-m_{t}\|^{2}\;\middle| \mathcal{H}_{t-1}\right] & \le 2d\left( 2\mathbb{E}\left[\|g_{t}^{x}-m_{t}\|^{2}\;\middle| \mathcal{H}_{t-1}\right] + 2\beta^{2}\delta^{2}\right) + \frac{d^{2}}{2}\beta^{2}\delta^{2} \\
        & = 4d\,\mathbb{E}\left[\|g_{t}^{x}-m_{t}\|^{2}\;\middle| \mathcal{H}_{t-1}\right] + (d^{2}/2+4d)\beta^{2}\delta^{2}.
    \end{align*}
    To get the total over the time horizon $T$, we take two iterated expectations over the history $\mathcal{H}_{t-1}$ on both sides by applying the law of total expectation. We have: $\mathbb{E}[\mathbb{E}[\cdot|\mathcal{H}_{t-1}]] = \mathbb{E}[\cdot]$:
    \begin{equation*}
        \mathbb{E}\left[\|\hat g_{t}-m_{t}\|^{2}\right]\le 4d\,\mathbb{E}\left[\|
        g_{t}^{x}-m_{t}\|^{2}\right] + (d^{2}/2+4d)\beta^{2}\delta^{2}.
    \end{equation*}
    Finally, adding this inequality up for all $t=1,\dots,T$ gives us the final conclusion.
\end{proof}

\begin{lemma}[Regret Decomposition for \tpvr]
    \label{lem:regret-decomposition} Assume that $0 \in \cX$ and each function $f_{t}$
    is convex and $L$-Lipschitz. Let
    $x^{\star}\in \arg\min_{x\in \cX}\sum_{t=1}^{T}f_{t}(x)$ denote the optimal
    comparator in the original set, and let $y^{\star}= (1-\alpha)x^{\star}$ be
    its projection onto the set $\cX_{\alpha}$. Then, the regret of \tpvr\, is bounded by:
    \begin{equation}
        \regret_{T}\le \sum_{t=1}^{T}\big(f_{t}(y_{t})-f_{t}(y^{\star})\big)+\left(L+\frac{LD}{r}\right)\delta T. \label{eq:regret-reduce}
    \end{equation}
\end{lemma}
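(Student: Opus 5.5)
The plan is to split the per-round regret into two pieces. Fix $x^\star$ and $y^\star=(1-\alpha)x^\star$. For each $t$ write
\begin{equation*}
f_t(x_t)-f_t(x^\star)=\big(f_t(x_t)-f_t(y_t)\big)+\big(f_t(y_t)-f_t(y^\star)\big)+\big(f_t(y^\star)-f_t(x^\star)\big),
\end{equation*}
and sum over $t\in[T]$. The middle term is exactly what appears on the right-hand side of \eqref{eq:regret-reduce}. It therefore suffices to show that the first and third terms together contribute at most $(L+LD/r)\delta$ per round.

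\textbf{First term (perturbation cost).} Since $x_t=y_t+\delta v_t$ with $\|v_t\|=1$, both $x_t,y_t\in\cX$. The first of these holds by the feasibility argument for $\cX_\alpha$ in Section~\ref{sec:setup}, and the second because $y_t\in\cX_\alpha\subseteq\cX$; the latter uses $0\in\cX$ and convexity, since $(1-\alpha)x=(1-\alpha)x+\alpha\cdot 0$. Assumption~\ref{ass:lipschitz} then gives $f_t(x_t)-f_t(y_t)\le L\|x_t-y_t\|=L\delta$.

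\textbf{Third term (shrinkage cost).} The point $y^\star=(1-\alpha)x^\star$ lies in $\cX_\alpha\subseteq\cX$, and $\|y^\star-x^\star\|=\alpha\|x^\star\|$. The one mildly delicate point is bounding $\|x^\star\|$ by $D$. This works because $0\in\cX$, so $\|x^\star\|=\|x^\star-0\|\le D$ by the definition of the diameter. Lipschitzness then gives $f_t(y^\star)-f_t(x^\star)\le L\alpha D=LD\delta/r$, using $\alpha=\delta/r$. Alternatively, convexity yields $f_t(y^\star)\le(1-\alpha)f_t(x^\star)+\alpha f_t(0)$, which gives the same order, but the Lipschitz route matches the stated constant directly.

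Summing the three pieces over $t$ gives $\regret_T\le\sum_t(f_t(y_t)-f_t(y^\star))+(L+LD/r)\delta T$ pathwise, so no expectation is needed. I do not expect any real obstacle here. The only care required is in verifying that every point involved lies in $\cX$, where Lipschitzness is assumed, and in using $0\in\cX$ to control $\|x^\star\|$ by $D$ rather than by $R$.
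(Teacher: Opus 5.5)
Your proposal is correct and follows the paper's proof: the same three-term split into perturbation cost ($\le L\delta$ by Lipschitzness), regret against the shrunken comparator $y^\star$, and shrinkage cost ($\le L\alpha\|x^\star\|\le LD\delta/r$, using $0\in\cX$ to get $\|x^\star\|\le D$). Your explicit checks that $x_t$, $y_t$ and $y^\star$ all lie in $\cX$ are a harmless addition that the paper leaves implicit.
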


\begin{proof}
      By definition, we can write the regret of a player at time $t$ as the sum of three terms: the perturbation cost, the regret relative to the shrunken comparator, and the approximation error of the comparator.
    \begin{equation*}
        f_{t}(x_{t}) - f_{t}(x^{\star})= \underbrace{\big(f_t(x_t) - f_t(y_t)\big)}
        _{\text{Term (A)}}+ \big(f_{t}(y_{t}) - f_{t}(y^{\star})\big)+ \underbrace{\big(f_t(y^\star)
        - f_t(x^\star)\big)}_{\text{Term (B)}}.
    \end{equation*}
    For term (A), since $\|x_{t}- y_{t}\| = \delta$, we have:
    \begin{equation*}
        f_{t}(x_{t}) - f_{t}(y_{t})\le L\|x_{t}- y_{t}\| = L\delta.
    \end{equation*}
    For term (B), since $y^{\star}= (1-\alpha)x^{\star}$ and $0 \in \cX$, we have
    $\|y^{\star}- x^{\star}\| = \alpha \|x^{\star}\| \le \alpha D$. Then, due to $\alpha
    = \delta/r$, we obtain:
    \begin{equation*}
        f_{t}(y^{\star}) - f_{t}(x^{\star}) \;\le\; L\|y^{\star}- x^{\star}\| \;
        \le\; L\alpha D \;=\; \frac{LD}{r}\delta.
    \end{equation*}
    Combining the above for $t=1,\dots,T$ gives us all terms on the right-hand side of \eqref{eq:regret-reduce}. 
\end{proof}

\begin{lemma}[Optimistic projected-gradient inequality]
    \label{lem:optimistic-ogd} Given that any closed convex set $C\subset\mathbb{R}^{d}$ is defined, $\Pi_{C}$ is the projection onto set $C$ and $\eta\ >\ 0$ is fixed constant. 
    Let two update sequences be determined as follows for each value of $t$:
    \begin{equation*}
        y_{t}= \Pi_{C}(y'_{t}-\eta m_{t}), \qquad y'_{t+1}= \Pi_{C}(y'_{t}-\eta \hat
        g_{t}).
    \end{equation*}
    Assume that $y'_{t}\in C$. For any fixed comparator $y^{\star}\in C$ and an arbitrary horizon $T\ge 1$, we get the following inequality:
    \begin{equation*}
        \sum_{t=1}^{T}\langle \hat g_{t}, y_{t}-y^{\star}\rangle \le \frac{\|y^{\star}-y'_{1}\|^{2}}{2\eta}
        + \eta\sum_{t=1}^{T}\|\hat g_{t}-m_{t}\|^{2}. \label{eq:optimistic-ogd}
    \end{equation*}
\end{lemma}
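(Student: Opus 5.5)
The plan is the standard one-step optimistic mirror-descent analysis in the Euclidean case, built on the three-point (projection) inequality. For a closed convex $C$, $z\in\mathbb{R}^d$ and $w=\Pi_C(z)$, every $u\in C$ satisfies $\langle w-z, w-u\rangle\le 0$. For $z=x-\eta g$ this becomes
\[
\eta\langle g, w-u\rangle\le \tfrac12\|u-x\|^2-\tfrac12\|u-w\|^2-\tfrac12\|w-x\|^2 .
\]
This inequality will be applied twice per round and the results telescoped.

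First, split
\[
\langle \hat g_t, y_t-y^\star\rangle=\langle \hat g_t-m_t, y_t-y'_{t+1}\rangle+\langle m_t, y_t-y'_{t+1}\rangle+\langle \hat g_t, y'_{t+1}-y^\star\rangle .
\]
Apply the projection inequality to $y'_{t+1}=\Pi_C(y'_t-\eta\hat g_t)$ with comparator $y^\star$. This gives
\[
\eta\langle \hat g_t,y'_{t+1}-y^\star\rangle\le\tfrac12\|y^\star-y'_t\|^2-\tfrac12\|y^\star-y'_{t+1}\|^2-\tfrac12\|y'_{t+1}-y'_t\|^2 .
\]
Apply it again to $y_t=\Pi_C(y'_t-\eta m_t)$ with comparator $y'_{t+1}\in C$. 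This gives
\[
\eta\langle m_t,y_t-y'_{t+1}\rangle\le\tfrac12\|y'_{t+1}-y'_t\|^2-\tfrac12\|y'_{t+1}-y_t\|^2-\tfrac12\|y_t-y'_t\|^2 .
\]
When these two bounds are added, the $\|y'_{t+1}-y'_t\|^2$ terms cancel. What remains, after dropping $-\tfrac12\|y_t-y'_t\|^2$, is
\[
\langle \hat g_t,y_t-y^\star\rangle\le \tfrac{1}{2\eta}\big(\|y^\star-y'_t\|^2-\|y^\star-y'_{t+1}\|^2\big)+\langle \hat g_t-m_t,y_t-y'_{t+1}\rangle-\tfrac{1}{2\eta}\|y_t-y'_{t+1}\|^2 .
\]

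The key step is to control the cross term $\langle \hat g_t-m_t,y_t-y'_{t+1}\rangle$. One option is Young's inequality,
\[
\langle \hat g_t-m_t,y_t-y'_{t+1}\rangle\le \tfrac{\eta}{2}\|\hat g_t-m_t\|^2+\tfrac{1}{2\eta}\|y_t-y'_{t+1}\|^2 ,
\]
which exactly absorbs the negative quadratic. Alternatively, nonexpansiveness of $\Pi_C$ gives $\|y_t-y'_{t+1}\|\le\eta\|\hat g_t-m_t\|$, and then Cauchy--Schwarz bounds the cross term by $\eta\|\hat g_t-m_t\|^2$. Either route yields a per-round bound of at most $\eta\|\hat g_t-m_t\|^2$, which matches the claimed constant. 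The first route even gives the sharper constant $\eta/2$.

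Finally, sum over $t=1,\dots,T$. The distance terms telescope to $\tfrac{1}{2\eta}\big(\|y^\star-y'_1\|^2-\|y^\star-y'_{T+1}\|^2\big)\le \tfrac{1}{2\eta}\|y^\star-y'_1\|^2$, which gives the stated inequality. The only delicate point is getting the bookkeeping of the two projection inequalities right so that the $\|y'_{t+1}-y'_t\|^2$ terms cancel. Beyond that there is no real obstacle. The hypothesis $y'_t\in C$ holds for every $t$ automatically, because each $y'_t$ is a projection onto $C$ (and $y'_1\in C$ by assumption). This is what makes $y'_{t+1}$ a valid comparator in the second projection inequality.
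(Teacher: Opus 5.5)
Your proof is correct and follows the paper's argument step for step. It uses the same three-term decomposition, the same two applications of the projection inequality with comparators $y^\star$ and $y'_{t+1}$, and the same cancellation of $\|y'_{t+1}-y'_t\|^2$; your second route for the cross term (nonexpansiveness of $\Pi_C$ plus Cauchy--Schwarz) is exactly the paper's, while your Young's-inequality alternative keeps the $-\frac{1}{2\eta}\|y_t-y'_{t+1}\|^2$ term that the paper discards and validly gives the sharper constant $\eta/2$.
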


\begin{proof}
    Let $x^{+}= \Pi_{C}(x-\eta g)$. By using the first-order conditions of optimality, for all $z\in C$:
    \begin{equation*}
        \langle (x - \eta g) - x^{+}, z - x^{+}\rangle \le 0.
    \end{equation*}
    The above inequality leads to:
    \begin{equation*}
        \eta \langle g, x^{+}- z \rangle \le \langle x - x^{+}, x^{+}- z \rangle.
    \end{equation*}
    We can rewrite the inner product on the right-hand side as: 
     $2\langle a, b\rangle = \|a+b\|^{2}- \|a\|^{2}- \|b\|^{2}$, where $a = x - x^{+}$ and $b = x^{+}- z$. Therefore, $a+b = x-z$, and we have:
    \begin{equation*}
        \langle x - x^{+}, x^{+}- z \rangle = \frac{1}{2}\left( \|x - z\|^{2}- \|
        x - x^{+}\|^{2}- \|x^{+}- z\|^{2}\right).
    \end{equation*}
    By combining the two inequalities above, we obtain the following bound:
    \begin{equation}
        \langle g, x^{+}-z\rangle \le \frac{\|z-x\|^{2}- \|z-x^{+}\|^{2}- \|x^{+}-x\|^{2}}{2\eta}. \label{eq:proj-step-ineq}
    \end{equation}

    We apply \eqref{eq:proj-step-ineq} twice at each round $t$. Applying \eqref{eq:proj-step-ineq}
    with $(x, g, x^{+}, z) = (y'_{t}, m_{t}, y_{t}, y'_{t+1})$, we obtain:
    \begin{equation}
        \langle m_{t}, y_{t}-y'_{t+1}\rangle \le \frac{\|y'_{t+1}-y'_{t}\|^{2}-
        \|y'_{t+1}-y_{t}\|^{2}- \|y_{t}-y'_{t}\|^{2}}{2\eta}. \label{eq:step1}
    \end{equation}

    Applying \eqref{eq:proj-step-ineq} with
    $(x, g, x^{+}, z) = (y'_{t}, \hat g_{t}, y'_{t+1}, y^{\star})$, we obtain:
    \begin{equation}
        \langle \hat g_{t}, y'_{t+1}-y^{\star}\rangle \le \frac{\|y^{\star}-y'_{t}\|^{2}-
        \|y^{\star}-y'_{t+1}\|^{2}- \|y'_{t+1}-y'_{t}\|^{2}}{2\eta}. \label{eq:step2}
    \end{equation}

    On the other hand, 
    \begin{equation*}
        \|y_{t}-y'_{t+1}\| =\|\Pi_{C}(y'_{t}-\eta m_{t}) - \Pi_{C}(y'_{t}-\eta \hat
        g_{t})\| \le \|(y'_{t}-\eta m_{t}) - (y'_{t}-\eta \hat g_{t})\|= \eta\|\hat
        g_{t}-m_{t}\|.
    \end{equation*}
    From the Cauchy-Schwarz inequality, we have:
    \begin{equation}
        \langle \hat g_{t}-m_{t}, y_{t}-y'_{t+1}\rangle \le \|\hat g_{t}-m_{t}\|
        \cdot \|y_{t}-y'_{t+1}\| \le \eta\|\hat g_{t}-m_{t}\|^{2}. \label{eq:cross}
    \end{equation}

    From the above equations, we can express the regret term at time $t$ as:
    \begin{equation*}
        \langle \hat g_{t}, y_{t}-y^{\star}\rangle = \underbrace{\langle \hat
        g_t, y'_{t+1}-y^\star\rangle}_{\text{via } \eqref{eq:step2}}+ \underbrace{\langle
        \hat g_t-m_t, y_t-y'_{t+1}\rangle}_{\text{via } \eqref{eq:cross}}+ \underbrace{\langle
        m_t, y_t-y'_{t+1}\rangle}_{\text{via } \eqref{eq:step1}}.
    \end{equation*}
    We note the term $\frac{1}{2\eta} \|y'_{t+1}-y'_{t}\|^{2}$ appears in \eqref{eq:step2} with a negative sign and in \eqref{eq:step1} with a positive sign, leading to cancellation. Removing the remaining negative terms ($-\|y'_{t+1}-y_{t}\|^{2}$ and $-\|y_{t}-y'_{t}\|^{2}$), we obtain:
    \begin{equation*}
        \langle \hat g_{t}, y_{t}-y^{\star}\rangle \le \frac{\|y^{\star}-y'_{t}\|^{2}-
        \|y^{\star}-y'_{t+1}\|^{2}}{2\eta}+ \eta\|\hat g_{t}-m_{t}\|^{2}.
    \end{equation*}
    Finally, we can sum over $t=1,\ldots,T$ to obtain:
    \begin{equation*}
        \sum_{t=1}^{T}\langle \hat g_{t}, y_{t}-y^{\star}\rangle \le \frac{\|y^{\star}-y'_{1}\|^{2}-
        \|y^{\star}-y'_{T+1}\|^{2}}{2\eta}+ \eta\sum_{t=1}^{T}\|\hat g_{t}-m_{t}\|
        ^{2}.
    \end{equation*}
    Dropping the non-positive term $-\frac{\|y^{\star}-y'_{T+1}\|^{2}}{2\eta}$ completes this
    proof.
\end{proof}

\subsection{Proof of Theorem~\ref{thm:exp-regret}}
\label{app:proof-expregret}
\begin{proof}
    We start with the regret decomposition given by Lemma~\ref{lem:regret-decomposition}:
    \begin{equation*}
        \regret_{T}\le \sum_{t=1}^{T}\big(f_{t}(y_{t})-f_{t}(y^{\star})\big) + \left(L+\frac{LD}{r}\right)\delta T.
    \end{equation*}
    Using the convexity of $f_{t}$, we establish the following:
    \begin{equation*}
        f_{t}(y_{t})-f_{t}(y^{\star}) \le \langle \nabla f_{t}(y_{t}), y_{t}-y^{\star}
        \rangle = \langle g_{t}^{y}, y_{t}-y^{\star}\rangle.
    \end{equation*}
    Next, we decompose the term $\langle g_{t}^{y}, y_{t}-y^{\star}\rangle$ into the following two terms:
    \begin{equation*}
        \langle g_{t}^{y}, y_{t}-y^{\star}\rangle=\underbrace{\langle \hat g_t, y_t-y^\star\rangle}
        _{\text{Algorithmic Regret}}+\underbrace{\langle g_t^y-\hat g_t, y_t-y^\star\rangle}
        _{\text{Estimation Bias}}.
    \end{equation*}

    Applying the bound from Lemma~\ref{lem:optimistic-ogd} to the convex set $C=\cX_{\alpha}$ gives us: $\sum_{t=1}^{T}\langle \hat g_{t}, y_{t}-y^{\star}\rangle \le \frac{\|y^{\star}-y'_{1}\|^{2}}{2\eta}+ \eta\sum_{t=1}^{T}\|\hat g_{t}-m_{t}\|^{2}$.
    Since both $y^{\star}$ and $y'_{1}$ lie in $\cX_{\alpha}\subseteq \cX$, we have
    $\|y^{\star}-y'_{1}\|^{2}\le D^{2}$, which gives us:
    \begin{equation*}
        \sum_{t=1}^{T}\langle \hat g_{t}, y_{t}-y^{\star}\rangle\le \frac{D^{2}}{2\eta}
        + \eta\sum_{t=1}^{T}\|\hat g_{t}-m_{t}\|^{2}.
    \end{equation*}
    Taking expectations of this expression gives the following bound using Corollary~\ref{cor:second-moment-play}:
    \begin{equation*}
    \mathbb{E}\left[\sum_{t=1}^{T}\langle \hat g_{t}, y_{t}-y^{\star}\rangle\right] \;\le\; \frac{D^{2}}{2\eta}\;+\; \eta\left(4d\,\bar S_{T}+ \left(\frac{d^{2}}{2}+4d\right)\beta^{2}\delta^{2}T\right).
    \end{equation*}

    From Lemma~\ref{lem:bias}, we know that
    $\mathbb{E}[\hat g_{t}\mid \mathcal{H}_{t-1}] = g_{t}^{y}+ b_{t}$, which implies
    $\mathbb{E}[g_{t}^{y}- \hat g_{t}\mid \mathcal{H}_{t-1}] = -b_{t}$. Therefore, we can use the tower property to find:
    \begin{align*}
        \mathbb{E}\big[\langle g_{t}^{y}-\hat g_{t}, y_{t}-y^{\star}\rangle\big] & \;=\; \mathbb{E}\Big[\big\langle \mathbb{E}[g_{t}^{y}-\hat g_{t}\mid \mathcal{H}_{t-1}],  y_{t}-y^{\star}\big\rangle\Big] \\
        & = -\mathbb{E}\big[\langle b_{t}, y_{t}-y^{\star}\rangle\big].
    \end{align*}
    By the Cauchy-Schwarz inequality and the bound on
    $\|b_{t}\| \le \frac{d}{2}\beta\delta$ shown in Lemma~\ref{lem:bias}, we have:
    \begin{equation*}
        -\langle b_{t}, y_{t}-y^{\star}\rangle \leq |\langle b_{t}, y_{t}-y^{\star}\rangle|\leq \|b_{t}\| \cdot \|y_{t}-y^{\star}
        \| \le \frac{d}{2}\beta\delta \cdot D.
    \end{equation*}
    Summing the above over $t=1,\dots,T$  gives us a total expected bias bound by $\frac{Dd\beta\delta
    T}{2}$.

    Substituting the above into the regret decomposition including the perturbation cost gives us the final bound:
    \begin{equation*}
        \mathbb{E}[\regret_{T}]\le\frac{D^{2}}{2\eta}+\eta\left( 4d\,\bar S_{T}+
        \left(\frac{d^{2}}{2}+4d\right)\beta^{2}\delta^{2}T \right) +\frac{Dd\beta\delta
        T}{2}+\left(L+\frac{LD}{r}\right)\delta T. \label{eq:main-exp-bound}
    \end{equation*}
\end{proof}

\subsection{Proof of Theorem~\ref{thm:two_point}}

\begin{proof}
    The proof starts with the expected regret bound established in Theorem~\ref{thm:exp-regret}, as indicated below:
    \begin{equation*}
        \mathbb{E}[\regret_{T}] \le \underbrace{\frac{D^{2}}{2\eta} + 4d\eta \bar{S}_T}
        _{\text{Term (A)}}+ \underbrace{\eta\left(\frac{d^{2}}{2}+4d\right)\beta^2\delta^2T}
        _{\text{Term (B)}}+ \underbrace{\left[ \frac{Dd\beta}{2} + \left(L+\frac{LD}{r}\right)
        \right] \delta T}_{\text{Term (C)}}.
    \end{equation*}

    \paragraph{Term (A):}
    Substituting the learning rate $\eta = \frac{D}{\sqrt{8d(\bar{S}_{T}+1)}}$ into
    it, we have:
    \begin{align*}
        \frac{D^{2}}{2\eta}+ 4d\eta \bar{S}_{T} & = \frac{D^{2}}{2}\cdot \frac{\sqrt{8d (\bar{S}_{T}+1)}}{D}+ 4d \bar{S}_{T}\cdot \frac{D}{\sqrt{8d (\bar{S}_{T}+1)}} \\
        & = \frac{D}{2}\sqrt{8d (\bar{S}_{T}+1)}+ D\sqrt{\frac{16 d^{2}\bar{S}_{T}^{2}}{8d (\bar{S}_{T}+1)}}\\
        & \leq D\sqrt{2d (\bar{S}_{T}+1)}+ D\sqrt{2d \bar{S}_{T}}\\
        &= O\left( D\sqrt{d\bar{S}_{T}}\right).
    \end{align*}

    \paragraph{Term (B):}
    Since $\delta \le \frac{\sqrt{\bar{S}_{T}+1}}{d\beta T}$, we
    substitute $\delta^{2}\le \frac{\bar{S}_{T}+1}{d^{2}\beta^{2}T^{2}}$ into Term
    (B):
    \begin{align*}
        \eta\left(\frac{d^{2}}{2}+4d\right)\beta^{2}\delta^{2}T & \le \eta\left(\frac{d^{2}}{2}+4d\right)\beta^{2}T \cdot \frac{\bar{S}_{T}+1}{d^{2}\beta^{2}T^{2}} \\
        & = \eta \left(\frac{1}{2}+ \frac{4}{d}\right) \frac{\bar{S}_{T}+1}{T}.
    \end{align*}
    Substituting $\eta = \frac{D}{\sqrt{8d (\bar{S}_{T}+1)}}$ gives us:
    \begin{align*}
        \frac{D}{\sqrt{8d (\bar{S}_{T}+1)}}\left(\frac{1}{2}+ \frac{4}{d}\right) \frac{\bar{S}_{T}+1}{T}= O\left( \frac{D \sqrt{\bar{S}_{T}}}{T}\right).
    \end{align*}

    \paragraph{Term (C):}
    Also substituting $\delta \le \frac{\sqrt{\bar{S}_{T}+1}}{d\beta T}$, we have:
    \begin{align*}
        \left[ \frac{Dd\beta}{2}+ \left(L+\frac{LD}{r}\right) \right] \delta T & \le \left[ \frac{Dd\beta}{2}+ \left(L+\frac{LD}{r}\right) \right] \frac{\sqrt{\bar{S}_{T}+1}}{d\beta T}\cdot T \\
        & = \left[ \frac{D}{2}+ \frac{L(1+D/r)}{d\beta}\right] \sqrt{\bar{S}_{T}+1}\\
        & = O\left( D \sqrt{\bar{S}_{T}}\right).
    \end{align*}

    The perturbation radius $\delta$ defined in Theorem~\ref{thm:two_point} is defined as the minimum of the three quantities: $\{ \frac{\sqrt{\bar{S}_{T}+1}}{d\beta T}, \frac{1}{\beta\sqrt{T}}, \frac{r}{2}\}$. The constraint $\delta\leq \frac{r}{2}$ ensures that there is sufficient room for random perturbations while maintaining feasibility. Indeed, if $\delta= r$, then $\alpha=\delta/r=1$ and consequently $\cX_\alpha=\cX$. In this case, to guarantee $x_t=y_t+\delta v_t\in \cX$ for $\|v\|=1$, the point $y_t$ would have to lie at the center of $\cX$, which is overly restrictive in general. Moreover, the condition $\delta\leq 1/(\beta\sqrt{T})$ is imposed to control the accumulation of the second-order error term. Specifically, this ensures that the cumulative contribution $\beta^2\delta^2 T$ remains uniformly bounded over the horizon $T$. Since both Terms (B) and (C) are monotonically increasing in $\delta$, if $\delta$ is constrained by the second and third terms, the final regret should be smaller than the result by substituting $\delta\leq \frac{\sqrt{\bar{S}_{T}+1}}{d\beta T}$.  To derive the upper bound here, it is sufficient for us to substitute the first term $\delta\leq \frac{\sqrt{\bar{S}_{T}+1}}{d\beta T}$

    Combining the bounds from Terms (A), (B), and (C), the total expected regret is
    written as:
    \begin{equation*}
        \mathbb{E}[\regret_{T}] = O\left( D \sqrt{d \bar S_{T}}\right).
    \end{equation*}
\end{proof}

\subsection{Omitted Details of Section~\ref{sec:comp_chiang}}\label{app:comp_chiang}

We first recall the definition of gradient variation proposed in~\cite{chiang2013beating} as:
\begin{equation*}
    D_T=\sum_{t=1}^T \max_{x\in\cX}\|\nabla f_t(x)-\nabla f_{t-1}(x)\|^2.
\end{equation*}
By specializing our proposed \tpvr\, with restricting the sample distribution over $\mathbb{S}^{d-1}$ to the standard basis vectors, i.e, $v_t\sim \mathrm{Unif}(\{e_1,\dotsc,e_d\})$, the variant of \tpvr\, can be written shown as the following Algorithm~\ref{alg:variant_tpvr}. Our notation $\Delta_{t,i_t}$ in \tpvr\, is written as $\Delta_{t,i_t}=2\delta(\widehat{v}_{t,i_t}-m_{t,i_t})$ for convenience here.

Let $\alpha_t=\max\{\tau<t: i_{\tau}=i_t\}$ with $\alpha_t=0$ if the set is empty. By the definition of $m_t$ whose $i$-th coordinate stores the most
recent finite-difference observation on that coordinate, we have $m_{t,i_t}=\widehat{v}_{\alpha_t,i_t}$.

\begin{lemma}
    Assume each $f_t$ is $\beta$-smooth. For any $y_t\in\cX_{\alpha}$ and any coordinate $i$, we have
    \begin{equation*}
        |\widehat{v}_{t,i}(y_t)-\nabla_i f_t(y_t)|\leq \frac{\beta\delta}{2}.
    \end{equation*}
    Then, 
    \begin{equation*}
        \sum_{t=1}^T \mathbb{E}\left[\|\widehat{g}_t-m_t\|^2\right]\leq 3d^2\mathbb{E}\left[\sum_{t=1}^T\widehat{D}_t\right]+\frac{3}{2}d^2\beta^2\delta^2T,
    \end{equation*}
    where $\widehat{D}_t=\|\nabla_{i_t} f_t(y_t)-\nabla_{i_t}f_{\alpha_t}(y_{\alpha_t})\|^2$.
\end{lemma}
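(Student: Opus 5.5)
The plan has two parts: a one-dimensional version of Lemma~\ref{lem:smoothness-remainder} for the pointwise bias claim, and a three-way splitting argument for the summed second-moment bound.

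\textbf{Pointwise bias.} Write $\widehat v_{t,i}-\nabla_i f_t(y_t)=r/(2\delta)$, where
\begin{equation*}
r=f_t(y_t+\delta e_i)-f_t(y_t-\delta e_i)-2\delta\nabla_i f_t(y_t).
\end{equation*}
Lemma~\ref{lem:smoothness-remainder} with $v_t=e_i$ gives $|r|\le\beta\delta^2$. This yields only $\beta\delta/2$ after dividing by $2\delta$, which is exactly the claimed bound, so the first claim follows immediately.

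\textbf{Reducing the second moment to a scalar.} With coordinate sampling, $\widehat g_t-m_t=d(\widehat v_{t,i_t}-m_{t,i_t})e_{i_t}$. Hence
\begin{equation*}
\|\widehat g_t-m_t\|^2=d^2(\widehat v_{t,i_t}-m_{t,i_t})^2 .
\end{equation*}
Using $m_{t,i_t}=\widehat v_{\alpha_t,i_t}$, I split the scalar difference into three pieces:
\begin{equation*}
\widehat v_{t,i_t}-\widehat v_{\alpha_t,i_t}
=\big(\widehat v_{t,i_t}-\nabla_{i_t}f_t(y_t)\big)+\big(\nabla_{i_t}f_t(y_t)-\nabla_{i_t}f_{\alpha_t}(y_{\alpha_t})\big)+\big(\nabla_{i_t}f_{\alpha_t}(y_{\alpha_t})-\widehat v_{\alpha_t,i_t}\big).
\end{equation*}
Applying $(a+b+c)^2\le 3(a^2+b^2+c^2)$ gives
\begin{equation*}
(\widehat v_{t,i_t}-m_{t,i_t})^2\le 3\widehat D_t+3\cdot 2\cdot\tfrac{\beta^2\delta^2}{4}.
\end{equation*}
Here the two bias pieces are each bounded via the first claim, once at time $t$ and once at time $\alpha_t$ at the point $y_{\alpha_t}$. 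The same estimate holds since $y_{\alpha_t}\in\cX_\alpha$. Multiplying by $d^2$ gives
\begin{equation*}
\|\widehat g_t-m_t\|^2\le 3d^2\widehat D_t+\tfrac32 d^2\beta^2\delta^2 ,
\end{equation*}
deterministically. Summing over $t$ and taking expectations then yields the stated inequality. No conditional-expectation argument is needed, because the bound holds pathwise.

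\textbf{Main obstacle: the boundary case $\alpha_t=0$.} When coordinate $i_t$ has never been sampled, $m_{t,i_t}$ is the initialization, not a finite-difference observation, so the second bias piece is absent. I would handle this by adopting the convention that $f_0$, $y_0$ and the initial $m_1$ are chosen so that $m_{1,i}=\nabla_i f_0(y_0)$; for instance, take $f_0$ linear with gradient $m_1$. With this choice the term $\nabla_{i_t}f_{\alpha_t}(y_{\alpha_t})-\widehat v_{\alpha_t,i_t}$ vanishes and the bound only improves. The key point to verify is that $\widehat D_t$ is interpreted with this convention, so the displayed inequality remains valid on every round.
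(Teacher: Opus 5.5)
Your proof is correct and follows essentially the same route as the paper: the one-dimensional specialization of Lemma~\ref{lem:smoothness-remainder}, the identity $\|\widehat g_t-m_t\|^2=d^2(\widehat v_{t,i_t}-m_{t,i_t})^2$ with $m_{t,i_t}=\widehat v_{\alpha_t,i_t}$, then the three-term split bounded by $(a+b+c)^2\le 3(a^2+b^2+c^2)$ and summed. Your written decomposition is cleaner than the paper's displayed one, and your convention for the never-sampled case $\alpha_t=0$ fills in a point the paper leaves implicit.
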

\begin{proof}
    Replacing the $v_t$ with $e_{t,i}$ in Lemma~\ref{lem:smoothness-remainder} and dividing $2\delta$, we have
    \begin{equation*}
        |\widehat{v}_{t,i}(y_t)-\nabla_i f_t(y_t)|\leq \frac{\beta\delta}{2}.
    \end{equation*}

    By the defintion of the esimator $\widehat{g}_t$, we have
\begin{equation*}
    \|\widehat{g}_t-m_t\|^2=d^2\|\widehat{v}_{t,i_t}-m_{t,i_t}\|^2
\end{equation*}
Then, we can decompose 
\begin{equation*}
    \widehat{v}_{t,i_t}-m_{t,i_t}=\widehat{v}_{t,i_t}-\widehat{v}_{\alpha_t,i}=(\nabla_{i_t} f_t(y_t)-\nabla_{\alpha_t} f_t(y_t))+(\widehat{v}_{t,i_t}-\nabla_{\alpha_t} f_t(y_t))+(\nabla_{i_t} f_t(y_t)-\widehat{v}_{\alpha_t,i}).
\end{equation*}

Hence, we have:
\begin{equation*}
    \|\widehat{v}_{t,i_t}-m_{t,i_t}\|^2\leq 3\|\nabla_{i_t} f_t(y_t)-\nabla_{\alpha_t} f_t(y_t)\|^2+\frac{3}{2}\beta^2\delta^2
\end{equation*}
\begin{equation*}
    \sum_{t=1}^T \mathbb{E}\left[\|\widehat{g}_t-m_t\|^2\right]\leq 3d^2\mathbb{E}\left[\sum_{t=1}^T\widehat{D}_t\right]+\frac{3}{2}d^2\beta^2\delta^2T,
\end{equation*}
where $\widehat{D}_t=\|\nabla_{i_t} f_t(y_t)-\nabla_{i_t}f_{\alpha_t}(y_{\alpha_t})\|^2$.
\end{proof}

By applying Lemmas 8 and 12 in~\cite{chiang2013beating}, we have
\begin{equation}\label{eq:hatD}
\mathbb{E}\left[\sum_{t=1}^T\widehat{D}_t\right]\leq 4d^2 D_T+8d\beta^2\log T\cdot\mathbb{E}\left[\sum_{t=2}^T\|y_t-y_{t-1}\|^2\right]
\end{equation}

\begin{lemma}
Let $y_t=\Pi_{\mathcal X_\alpha}(y'_t-\eta m_t)$ and
$y'_{t+1}=\Pi_{\mathcal X_\alpha}(y'_t-\eta \widehat g_t)$.
Then for any $y^\star\in\mathcal X_\alpha$,
\begin{align*}
\sum_{t=1}^T \langle \widehat g_t,\, y_t-y^\star\rangle &\le
\frac{\|y^\star-y'_1\|^2}{2\eta}
+\eta\sum_{t=1}^T \|\widehat g_t-m_t\|^2-\frac{1}{2\eta}\sum_{t=1}^T \Bigl(\|y_t-y'_t\|^2+\|y'_{t+1}-y_t\|^2\Bigr),\\
&\leq \frac{\|y^\star-y'_1\|^2}{2\eta}
+\eta\sum_{t=1}^T \|\widehat g_t-m_t\|^2-\frac{1}{4\eta}\sum_{t=2}^{T}\|y_t-y_{t-1}\|^2.
\end{align*} 
\end{lemma}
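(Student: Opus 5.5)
The plan is to reuse the per-round argument from the proof of Lemma~\ref{lem:optimistic-ogd}, this time keeping the negative terms that were dropped there. For the first inequality, I will again apply \eqref{eq:proj-step-ineq} twice at each round $t$.

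First, with $(x,g,x^+,z)=(y'_t,m_t,y_t,y'_{t+1})$, this gives \eqref{eq:step1}. Second, with $(x,g,x^+,z)=(y'_t,\widehat g_t,y'_{t+1},y^\star)$, this gives \eqref{eq:step2}. I then split
\begin{equation*}
\langle \widehat g_t,y_t-y^\star\rangle=\langle \widehat g_t,y'_{t+1}-y^\star\rangle+\langle \widehat g_t-m_t,y_t-y'_{t+1}\rangle+\langle m_t,y_t-y'_{t+1}\rangle .
\end{equation*}
The cross term is bounded by \eqref{eq:cross} using nonexpansiveness of the projection. The $\|y'_{t+1}-y'_t\|^2/(2\eta)$ terms cancel as before. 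Now, instead of discarding $-\frac{1}{2\eta}(\|y'_{t+1}-y_t\|^2+\|y_t-y'_t\|^2)$, I retain them. Telescoping $\|y^\star-y'_t\|^2-\|y^\star-y'_{t+1}\|^2$ and dropping $-\|y^\star-y'_{T+1}\|^2\le 0$ then yields the first line exactly.

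For the second inequality, it suffices to show
\begin{equation*}
\sum_{t=1}^T\bigl(\|y_t-y'_t\|^2+\|y'_{t+1}-y_t\|^2\bigr)\ \ge\ \frac12\sum_{t=2}^T\|y_t-y_{t-1}\|^2 .
\end{equation*}
For each $t\ge2$, the triangle inequality through the intermediate point $y'_t$ gives $\|y_t-y_{t-1}\|\le \|y_t-y'_t\|+\|y'_t-y_{t-1}\|$. Squaring and using $(a+b)^2\le 2a^2+2b^2$ gives
\begin{equation*}
\|y_t-y_{t-1}\|^2\le 2\|y_t-y'_t\|^2+2\|y'_t-y_{t-1}\|^2 .
\end{equation*}
I then sum over $t=2,\dots,T$. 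Each term $\|y_t-y'_t\|^2$ with $t\ge 2$ appears once. Each term $\|y'_{s+1}-y_s\|^2$ with $s=1,\dots,T-1$ also appears once (re-indexing $s=t-1$). Hence
\begin{equation*}
\sum_{t=2}^T\|y_t-y_{t-1}\|^2\le 2\sum_{t=1}^T\bigl(\|y_t-y'_t\|^2+\|y'_{t+1}-y_t\|^2\bigr),
\end{equation*}
where the remaining nonnegative terms ($t=1$ in the first family, $s=T$ in the second) only help. Multiplying by $-\frac{1}{4\eta}$ gives the claimed bound.

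The only delicate point is the index bookkeeping in this last step: each retained negative term must be used at most once when pairing consecutive centers $y_{t-1},y_t$ through $y'_t$. The chain $y_{t-1}\to y'_t\to y_t$ pairs $\|y'_t-y_{t-1}\|$, which comes from round $t-1$, with $\|y_t-y'_t\|$, which comes from round $t$, so no term is double-counted. Everything else is a direct re-run of Lemma~\ref{lem:optimistic-ogd}.
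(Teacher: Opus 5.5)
Your proposal is correct and follows essentially the same route as the paper. The first line comes from rerunning the proof of Lemma~\ref{lem:optimistic-ogd} while keeping $-\frac{1}{2\eta}\bigl(\|y_t-y'_t\|^2+\|y'_{t+1}-y_t\|^2\bigr)$, and the second from re-indexing so that each $\|y_t-y_{t-1}\|^2$ is bounded through $y'_t$ via $(a+b)^2\le 2a^2+2b^2$, discarding the unused boundary terms $\|y_1-y'_1\|^2$ and $\|y'_{T+1}-y_T\|^2$, exactly as the paper does.
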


\begin{proof}
    The first inequality above can be directly obtained by Lemma~\ref{lem:optimistic-ogd} by not dropping the two negative terms.

    Then, we have
    \begin{align*}
        \sum_{t=1}^T \Bigl(\|y_t-y'_t\|^2+\|y'_{t+1}-y_t\|^2\Bigr)&=\|y_1-y'_1\|^2+\sum_{t=2}^T(\|y_t-y'_t\|^2+\|y'_{t}-y_{t-1}\|^2)+\|y'_{T+1}-y_{T}\|^2\\
        &\geq \sum_{t=2}^T(\|y_t-y'_t\|^2+\|y'_{t}-y_{t-1}\|^2)\\
        &\geq \frac{1}{2}\sum_{t=2}^T \|y_t-y_{t-1}\|^2.
    \end{align*}
These complete the proof.
\end{proof}

By substituting the above lemmas and \eqref{eq:hatD} into the proof of Theorem~\ref{thm:two_point} as shown in Appendix~\ref{app:proof-expregret}, we have  
\begin{equation*}
        \mathbb{E}[\regret_{T}]\le\frac{D^{2}}{2\eta}+12\eta d^4 D_T+(\eta\cdot 24d^3\beta^2\log T-\frac{1}{4\eta})\sum_{t=2}^T \|y_t-y_{t-1}\|^2+\frac{3}{2}d^2\eta\beta^2\delta^2T +\frac{Dd\beta\delta
        T}{2}+\left(L+\frac{LD}{r}\right)\delta T. 
\end{equation*}

By tuning the parameters with
\begin{equation*}
    \eta=\min\left\{\frac{D}{4d^2\sqrt{D_T+1}},\frac{1}{16\beta d^{3/2}\sqrt{\log T}}\right\},\qquad 
    \delta=\min\left\{\frac{\sqrt{D_T+1}}{\beta T},\frac{1}{\beta\sqrt{T}},\frac{r}{2}\right\},
\end{equation*}
we have
\begin{equation*}
    \mathbb{E}[\regret_{T}]\le\widetilde{O}(Dd^2 \sqrt{D_T})
\end{equation*}
This recovers the result shown in \cite{chiang2013beating}.

\subsection{Useful Lemmas in Section~\ref{sec:tp-vr-adaptive}}

Let $T$ be the unknown time horizon. The algorithm runs in time phases, where the phases are indexed from $k=1, \ldots, K$. Each time phase has an increasing length of $H_{k}=2^{k-1}$. Let $I_{k}$ denote the set of rounds
executed in phase $k$.  We have $|I_{k}|\le H_{k}$ and the phases partition the
horizon such that $\sum_{k=1}^{K}|I_{k}| = T$, which implies that $K \le \lceil\log
_{2}T\rceil+1 = O(\log T)$. For each phase $k$, we define the \emph{phase-wise
expected prediction-sensitivity} as follows:
\begin{equation*}
    \bar S_{k}= \mathbb{E}\bigg[\sum_{t\in I_k}\|\nabla f_{t}(x_{t})-m_{t}\|^{2}\bigg].
\end{equation*}
From the definition, it can be seen that: $\sum_{k=1}^{K}\bar S_{k}= \bar S_{T}$. Within each time phase $k$, the sensitivity-doubling mechanism generates a set of epochs: $e=1,\ldots,E_{k}$, where each epoch has a sensitivity budget of $S_{k,e}=2^{e-1}$. Let us define the rounds executed in the epoch $(k,e)$ as $J_{k,e}\subseteq I_{k}$, and let the accumulated epoch residual be defined as follows:
\begin{equation*}
    R_{k,e}= \sum_{t\in J_{k,e}}\|\hat g_{t}-m_{t}\|^{2}.
\end{equation*}

\begin{lemma}
    \label{lem:oneshot-residual-anytime} Assume each function $f_{t}$ is $L$-Lipschitz
    on $\cX$ and the optimistic prediction $m_{t}$ satisfies $\|m_{t}\|\le L$. Then,
    for every round $t$, the estimator error is bounded by:
    \begin{equation*}
        \|\hat g_{t}-m_{t}\| \le 2dL.
    \end{equation*}
    For each epoch $(k,e)$, let $J_{k,e}$ be the set of rounds in this epoch. If the epoch terminates because the residual threshold is exceeded, let $\tau_{k,e}$ denote its last round and set
\begin{equation*}
    O_{k,e}=\{\tau_{k,e}\},
    \qquad
    J_{k,e}^{\circ}=J_{k,e}\setminus O_{k,e}.
\end{equation*}
If the epoch terminates because the time budget is reached, set $O_{k,e}=\emptyset$ and $J_{k,e}^{\circ}=J_{k,e}$. Then $|O_{k,e}|\le 1$ and
\begin{equation*}
    \sum_{t\in J_{k,e}^{\circ}}\|\hat g_t-m_t\|^2
    \le
    8dS_{k,e}.
\end{equation*}
Moreover, the full epoch residual satisfies
\begin{equation*}
    \sum_{t\in J_{k,e}}\|\hat g_t-m_t\|^2
    \le
    8dS_{k,e}+4d^2L^2.
\end{equation*}
\end{lemma}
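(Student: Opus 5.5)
The plan has three parts: a deterministic per-round bound on $\|\hat g_t-m_t\|$, then the within-epoch bound from the stopping rule of Algorithm~\ref{alg:tp-vr-adaptive}, then the one possible overshoot round.

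\emph{Per-round bound.} Recall $\hat g_t-m_t=\frac{d}{2\delta}\Delta_t v_t$ with $\|v_t\|=1$. Hence $\|\hat g_t-m_t\|=\frac{d}{2\delta}|\Delta_t|$, and it suffices to show $|\Delta_t|\le 4L\delta$. Write
\[
\Delta_t=\bigl(f_t(y_t+\delta v_t)-f_t(y_t-\delta v_t)\bigr)-2\delta\langle m_t,v_t\rangle .
\]
Both query points lie in $\cX$ (shown in Section~\ref{sec:setup}) and are at distance $2\delta$ apart. The Lipschitz property (Assumption~\ref{ass:lipschitz}) therefore bounds the first bracket by $2L\delta$. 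Cauchy--Schwarz together with $\|m_t\|\le L$ bounds the second term by $2L\delta$. This gives $|\Delta_t|\le 4L\delta$, so $\|\hat g_t-m_t\|\le 2dL$ and $\|\hat g_t-m_t\|^2\le 4d^2L^2$. The bound is deterministic and does not depend on $\delta$.

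\emph{Within-epoch bound.} Inside epoch $(k,e)$ the accumulator $R$ starts at $0$ when the epoch begins. This holds both at the start of a phase and right after a sensitivity doubling, because both reset $R$. After each round, $R$ equals the partial sum of $\|\hat g_\tau-m_\tau\|^2$ over the rounds of $J_{k,e}$ so far, and the epoch ends exactly when $R>8dS_{k,e}$ first occurs or when the time budget is exhausted.

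In the threshold case, every partial sum up to the round before $\tau_{k,e}$ satisfies $R\le 8dS_{k,e}$, since otherwise the epoch would have ended earlier. That sum is exactly the sum over $J_{k,e}^{\circ}$. In the time-budget case the test never fired, so the full sum over $J_{k,e}=J_{k,e}^\circ$ is at most $8dS_{k,e}$. By construction $|O_{k,e}|\le 1$ in both cases.

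\emph{Full epoch.} Adding the possible overshoot round $\tau_{k,e}$ contributes at most $4d^2L^2$ by the per-round bound. This yields
\[
\sum_{t\in J_{k,e}}\|\hat g_t-m_t\|^2\le 8dS_{k,e}+4d^2L^2 .
\]

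\emph{Main obstacle.} The argument itself is routine; the main care is bookkeeping. One must confirm that $R$ is reset precisely at epoch boundaries, and treat the edge case where the threshold is exceeded on the very last round of a phase. I would assign that round to $O_{k,e}$, which is consistent with the definitions given. One must also note that the per-round bound is uniform in $\delta$, so it survives the parameter changes triggered at doubling times.
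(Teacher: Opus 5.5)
Your proposal is correct and follows the paper's proof essentially step for step. Both use the same deterministic bound $|\Delta_t|\le 4L\delta$ from Lipschitzness plus Cauchy--Schwarz, then the same two-case analysis of the stopping rule, with a single overshoot round contributing at most $4d^2L^2$. Your explicit treatment of the case where the threshold is exceeded on the last round of a phase (assigning that round to $O_{k,e}$) is slightly more careful than the paper, whose time-limit case assumes the two stopping events never coincide.
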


\begin{proof}
    From the definition of the estimator $\hat g_{t} = m_{t} + \frac{d}{2\delta}\Delta
    _{t} v_{t}$ with
    \begin{equation*}
        \Delta_{t} = f_{t}(y_{t}+\delta v_{t}) - f_{t}(y_{t}-\delta v_{t}) - 2\delta
        \langle m_{t},v_{t}\rangle,
    \end{equation*}
    we have
    \begin{equation*}
        |\Delta_{t}|\le \big|f_{t}(y_{t}+\delta v_{t}) - f_{t}(y_{t}-\delta v_{t}
        )\big| + \big|2\delta\langle m_{t},v_{t}\rangle\big| \le 2L\delta + 2\delta
        \|m_{t}\|\|v_{t}\|.
    \end{equation*}
    Since $\|m_{t}\|\le L$ and $\|v_{t}\|=1$, we have $|\Delta_{t}| \le 4L\delta$.
    Substituting it back to $\hat g_{t} = m_{t} + \frac{d}{2\delta}\Delta
    _{t} v_{t}$, we obtain:
    \begin{equation*}
        \|\hat g_{t}-m_{t}\|= \left\| \frac{d}{2\delta}\Delta_{t} v_{t} \right\|
        =\frac{d}{2\delta}|\Delta_{t}| \le\frac{d}{2\delta}(4L\delta)=2dL.
    \end{equation*}

    When we consider the stopping rule for this epoch $(k,e)$. Let $\tau$ be the index of the last round executed in this epoch. The final round lasts until \emph{either} the accumulated residual $R_{k,e}$ becomes larger than the threshold \emph{or} we have reached the end of the time phase; therefore, we can analyze these two different scenarios by doing the following:
    \begin{itemize}
        \item \textbf{Case 1 (Time Limit):} If an epoch ends at the maximum length of the time phase $H_{k}$, the accumulated residual did not exceed the threshold; therefore:
        \begin{equation*}
            R_{k,e}\le 8d S_{k,e}< 8d S_{k,e}+ 4d^{2}L^{2}.
        \end{equation*}

        \item \textbf{Case 2 (Residual Exceed):} If an epoch exceeded the accumulated residual before reaching $H_{k}$, then we need to check out the last added term at round $\tau$. If we had met or exceeded the residual threshold before reaching $H_{k}$, then we can establish the preceding residual before adding term $\tau$: 
            \begin{equation*}
                \sum_{t \in J_{k,e} \setminus \{\tau\}}\|\hat g_{t} - m_{t}\|^{2}
                \le 8d S_{k,e}.
            \end{equation*}
            Adding the final term and applying the per-round as shown above, we have:
            \begin{equation*}
                R_{k,e}= \sum_{t \in J_{k,e} \setminus \{\tau\}}\|\hat g_{t} - m_{t}
                \|^{2} \;+\; \|\hat g_{\tau} - m_{\tau}\|^{2} \le 8d S_{k,e}+ 4d^{2}
                L^{2}.
            \end{equation*}
    \end{itemize}
    Combining both cases completes the proof.
\end{proof}

\begin{lemma}[Epoch Regret Bound with Fixed Parameters]
    \label{lem:epoch-regret-anytime} Assume each $f_{t}$ is convex, $L$-Lipschitz,
    and $\beta$-smooth on $\cX$.  Considering a specific epoch, for fixed time budget $H$ and prediction-sensitivity budget $S$, we set parameters $\eta=D/\sqrt{8dS}$ and $\delta
    =\min\{\frac{\sqrt{S}}{d\beta H},\frac{1}{\beta\sqrt{H}},\frac{r}{2}\}$. Let
    $J$ be the set of rounds in this epoch. The expected regret $\mathbb{E}[\regret(J)]$ on $J$
    is written as:
    \begin{equation*}
        \mathbb{E}[\regret(J)] \le 3\sqrt{2}\,D\sqrt{dS}+\frac{D}{2}\sqrt{S}+ \left(L+\frac{LD}{r}\right)\frac{\sqrt{S}}{d\beta}+LD.
    \end{equation*}
\end{lemma}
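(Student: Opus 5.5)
The plan is to rerun the proof of Theorem~\ref{thm:exp-regret} on the rounds of the epoch only. The one change is that the second-moment term is bounded pathwise using the stopping rule, instead of in expectation via Corollary~\ref{cor:second-moment-play}. Doing it pathwise avoids any problem with the random (stopping-time) length of the epoch.

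\textbf{Splitting off the last round.} Following Lemma~\ref{lem:oneshot-residual-anytime}, I split $J=J^{\circ}\cup O$ with $|O|\le 1$. Since $O$, when nonempty, is the last round of the epoch, $J^{\circ}$ is a contiguous prefix of $J$. For the single round in $O$, I use the crude bound $f_t(x_t)-f_t(u)\le L\|x_t-u\|\le LD$, which gives the additive $LD$ term. On $J^{\circ}$, fix the epoch-local comparator $x^\star\in\argmin_{x\in\cX}\sum_{t\in J}f_t(x)$ and set $y^\star=(1-\alpha)x^\star\in\cX_\alpha$. Lemma~\ref{lem:regret-decomposition}, applied over the rounds of $J^{\circ}$, bounds the regret by $\sum_{t\in J^\circ}(f_t(y_t)-f_t(y^\star))+(L+LD/r)\delta|J^{\circ}|$. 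Using $|J^\circ|\le H$ and $\delta\le \sqrt S/(d\beta H)$, the perturbation term is at most $(L+LD/r)\sqrt S/(d\beta)$.

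\textbf{Bounding the main sum.} By convexity I write $f_t(y_t)-f_t(y^\star)\le\langle\hat g_t,y_t-y^\star\rangle+\langle g_t^y-\hat g_t,y_t-y^\star\rangle$.

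For the first term, I apply Lemma~\ref{lem:optimistic-ogd} on $C=\cX_\alpha$ to the rounds of $J^{\circ}$. The epoch starts from $y'_t$ projected onto the current $\cX_\alpha$, so the initial distance is at most $D$. This gives
\[
\frac{D^2}{2\eta}+\eta\sum_{t\in J^\circ}\|\hat g_t-m_t\|^2\le \frac{D^2}{2\eta}+8dS\,\eta
\]
almost surely, by Lemma~\ref{lem:oneshot-residual-anytime}. With $\eta=D/\sqrt{8dS}$, the two pieces are $\sqrt2 D\sqrt{dS}$ and $2\sqrt2 D\sqrt{dS}$, which sum to the leading $3\sqrt2\,D\sqrt{dS}$.

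For the bias term, I use that the event $\{t\in J^\circ\}$ is determined before $v_t$ is drawn. Whether round $t$ belongs to the epoch and is not its terminating round depends only on residuals up to $t-1$, together with the fact that the epoch has not yet been stopped. The exception is the terminating round, which is already excluded into $O$. On this event, the tower property and Lemma~\ref{lem:bias} give a per-round contribution of at most $\frac d2\beta\delta D$. Summing over at most $H$ rounds and using $\delta\le\sqrt S/(d\beta H)$ yields $D\sqrt S/2$.

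\textbf{Main obstacle.} The delicate step is this measurability issue. Membership of $t$ in $J^\circ$ versus $O$ depends on $\hat g_t$ itself, and hence on $v_t$, so the conditional zero-mean property cannot be used naively on $J^\circ$. I would handle it by summing the bias term over the predictable set $\{t\in J\}$, which is $\mathcal H_{t-1}$-measurable. The terminating round's extra contribution $\langle g_t^y-\hat g_t,y_t-y^\star\rangle$ is then absorbed into the $LD$ allowance. Concretely, I treat the round in $O$ as contributing $f_t(y_t)-f_t(y^\star)$ plus its bias piece. Alternatively, I apply the optimistic inequality to all of $J$ and bound only the terminating round's residual through $\|y_t-y'_{t+1}\|\le D$, using $\langle\hat g_t-m_t,y_t-y'_{t+1}\rangle$. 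I would check which bookkeeping produces exactly $LD$ rather than a $dLD$-type constant, and adjust the split accordingly.
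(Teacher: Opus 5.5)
Your decomposition is the paper's, and all your constants agree with it: you split $J=J^{\circ}\cup O$, charge $LD$ to the terminating round, apply Lemma~\ref{lem:optimistic-ogd} pathwise on the prefix $J^{\circ}$ using $\sum_{t\in J^{\circ}}\|\hat g_t-m_t\|^2\le 8dS$, and control the bias and perturbation terms through $|J|\le H$ and $\delta\le\sqrt{S}/(d\beta H)$. The obstacle you raise is genuine, and the paper's proof does not resolve it. It applies Lemma~\ref{lem:bias} over $J^{\circ}$, writing $\bigl(\frac{Dd\beta}{2}+L+\frac{LD}{r}\bigr)\delta\,\mathbb{E}[|J^{\circ}|]$, as if $\{t\in J^{\circ}\}$ were $\mathcal{H}_{t-1}$-measurable. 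Its Term (A) is also written over $J$ but bounded by the $J^{\circ}$ residual. So, as written, neither argument establishes the additive constant $LD$.

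Your proposal has two measurability slips of its own. First, the sentence in your bias paragraph asserting that $\{t\in J^{\circ}\}$ is determined before $v_t$ is drawn is false, as you concede later; remove it. Second, the epoch-local comparator $\argmin_{x}\sum_{t\in J}f_t(x)$ depends on the random end of $J$, and hence on directions $v_t$ drawn inside the epoch. Then $y^\star$ is not $\mathcal{H}_{t-1}$-measurable, and the tower step for the bias fails even over the predictable set $J$. Use a fixed comparator $u\in\cX$ instead, e.g.\ the global minimizer of Lemma~\ref{lem:regret-decomposition}, which is deterministic for an oblivious loss sequence. This is all that summing epoch bounds requires.

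With a fixed comparator, your second repair works, but it yields $2dLD$, not $LD$. Run both Lemma~\ref{lem:bias} and the per-round inequality from the proof of Lemma~\ref{lem:optimistic-ogd} over the predictable set $J$. Bound the cross term $\langle\hat g_t-m_t,y_t-y'_{t+1}\rangle$ by $\eta\|\hat g_t-m_t\|^2$ for $t\in J^{\circ}$, and by $\|\hat g_\tau-m_\tau\|\,D\le 2dLD$ at the terminating round $\tau$ (Lemma~\ref{lem:oneshot-residual-anytime}). This gives $\frac{D^2}{2\eta}+8dS\eta+2dLD$ pathwise. The bias over $J$ is at most $D\sqrt{S}/2$ in expectation, and the perturbation cost is unchanged. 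The result is the stated bound with $LD$ replaced by $2dLD$.

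Your first repair (bias over $J$, optimistic bound over $J^{\circ}$, and $\langle\hat g_\tau,y_\tau-y^\star\rangle\le\|\hat g_\tau\|D$) gives $(2d+1)LD$ instead. I do not see a bookkeeping of this kind that recovers exactly $LD$. The term you would like to drop, $\mathbf{1}\{t\in O\}\langle\hat g_t-g_t^y,y_t-y^\star\rangle$, is selected precisely by the event that $\|\hat g_t-m_t\|^2$ is large, and the paper reaches $LD$ only by ignoring this. The loss is harmless downstream: it is an additive, $\bar S_T$-independent $O(dLD)$ per epoch, absorbed into the $\widetilde{O}(\cdot)$ of Theorem~\ref{thm:tp-vr-adaptive} just like the paper's $LD\,E_k$ terms.
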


\begin{proof}
    To show this result, we apply the regret decomposition as our proof for \tpvr\, derived in Theorem~\ref{thm:exp-regret},
    to the interval $J$ and split the interval $J$ into $J=J^{\circ}\cup O$. We first consider $t\in J^{\circ}$. Since the parameters $(\eta,\delta,\alpha)$ are
    fixed within the epoch, we have:
    \begin{equation}
        \mathbb{E}[\regret(J^{\circ})]\le \underbrace{\frac{D^{2}}{2\eta} + \eta\,\mathbb{E}\bigg[\sum_{t\in J}\|\hat
        g_t-m_t\|^2\bigg]}_{\text{Term (A)}}+\underbrace{\left( \frac{Dd\beta}{2}
        + L + \frac{LD}{r} \right) \delta\,\mathbb{E}[|J^{\circ}|]}_{\text{Term (B)}}.
    \end{equation}

    By Lemma~\ref{lem:oneshot-residual-anytime}, the accumulated residual is bounded by:
    \begin{equation*}
        \sum_{t\in J^{\circ}}\|\hat g_{t}-m_{t}\|^{2} \;\le\; 8dS.
    \end{equation*}
    Substituting $\eta = \frac{D}{\sqrt{8dS}}$, we have:
    \begin{equation*}
        \text{Term (A)}=3\sqrt{2}D\sqrt{dS}
    \end{equation*}
    Since the stopping rule states $|J^{\circ}| \le H$ and
    $\delta\leq \frac{\sqrt{S}}{d\beta H}$, we have
    \begin{align*}
        \frac{Dd\beta}{2}(\delta \mathbb{E}[|J^{\circ}|])            & \le \frac{Dd\beta}{2}(\delta H) \le \frac{Dd\beta}{2}\cdot \frac{\sqrt{S}}{d\beta}= \frac{D}{2}\sqrt{S}, \\
        \left(L+\frac{LD}{r}\right) (\delta \mathbb{E}[|J^{\circ}|]) & \le \left(L+\frac{LD}{r}\right) \frac{\sqrt{S}}{d\beta}.
    \end{align*}
    Besides, for $t\in O$, we have 
    \begin{equation*}
        \sum_{t\in O}(f_t(x_t)-f_t(x^\star))\leq LD|O|\leq LD.
    \end{equation*}
    Thus, all terms combined yield the conclusion of the proof.
\end{proof}

\begin{lemma}\label{lem:sum-epochs-in-phase}
Let $S_{k,1},\cdots,S_{k,E_k}$ denote the budgets of prediction sensitivity used during phase $k$ where $S_{k,e}=S_{\min}\cdot2^{e-1}$ for $e=1,\ldots,E_k$. Then, we have the following result:
\begin{equation*}
    \sum_{e=1}^{E_k}\sqrt{S_{k,e}}
    \le 4\sqrt{S_{k,E_k}}.
\end{equation*}
\end{lemma}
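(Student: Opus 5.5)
The inequality is a geometric-series estimate. Substituting $S_{k,e}=S_{\min}2^{e-1}$ gives
\begin{equation*}
\sqrt{S_{k,e}}=\sqrt{S_{\min}}\,(\sqrt{2})^{e-1}.
\end{equation*}
The sum $\sum_{e=1}^{E_k}\sqrt{S_{k,e}}$ is therefore a finite geometric series with ratio $\sqrt2>1$. It is dominated by its last term, and I would write it explicitly as
\begin{equation*}
\sum_{e=1}^{E_k}\sqrt{S_{\min}}\,(\sqrt2)^{e-1}=\sqrt{S_{\min}}\,\frac{(\sqrt2)^{E_k}-1}{\sqrt2-1}.
\end{equation*}

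Next I would compare this with the largest term. Since $(\sqrt2)^{E_k}=\sqrt2\,(\sqrt2)^{E_k-1}$ and $\sqrt{S_{k,E_k}}=\sqrt{S_{\min}}(\sqrt2)^{E_k-1}$, dropping the $-1$ in the numerator gives
\begin{equation*}
\sum_{e=1}^{E_k}\sqrt{S_{k,e}}\le \frac{\sqrt2}{\sqrt2-1}\sqrt{S_{k,E_k}}=(2+\sqrt2)\sqrt{S_{k,E_k}}.
\end{equation*}
The constant is $2+\sqrt2\approx 3.414$, which is at most $4$, so the claimed bound follows.

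An equivalent route is to reverse the indexing and bound by the infinite series:
\begin{equation*}
\sum_{j\ge0}\sqrt{S_{k,E_k}}\,2^{-j/2}=\sqrt{S_{k,E_k}}\big/\bigl(1-1/\sqrt2\bigr).
\end{equation*}
This yields the same constant $2+\sqrt2$. There is no genuine obstacle; the only step needing care is the constant arithmetic, namely checking $\sqrt2/(\sqrt2-1)=2+\sqrt2<4$. The argument does not depend on $S_{\min}$, on $E_k$, or on the algorithm itself, beyond the doubling structure of the budgets.
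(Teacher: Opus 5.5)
Your proof is correct and matches the paper's argument: sum the geometric series with ratio $\sqrt{2}$, drop the $-1$ in the numerator, and use $\sqrt{2}/(\sqrt{2}-1)=2+\sqrt{2}\le 4$. The reversed-index infinite-series bound you mention is an equivalent variant that yields the same constant.
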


\begin{proof}
Since $S_{k,e}=S_{\min}\cdot2^{e-1}$, we first have
\begin{equation*}
\begin{aligned}
    \sum_{e=1}^{E_k}\sqrt{S_{k,e}}
    &=
    \sqrt{S_{\min}}
    \sum_{e=1}^{E_k}2^{(e-1)/2}  \\
    &=
    \sqrt{S_{\min}}
    \sum_{j=0}^{E_k-1}(\sqrt{2})^j  \\
    &=
    \sqrt{S_{\min}}\,
    \frac{(\sqrt{2})^{E_k}-1}{\sqrt{2}-1}.
\end{aligned}
\end{equation*}
Using $(\sqrt{2})^{E_k}-1<(\sqrt{2})^{E_k}$, we obtain
\begin{equation*}
\begin{aligned}
    \sum_{e=1}^{E_k}\sqrt{S_{k,e}}
    &<
    \sqrt{S_{\min}}\,
    \frac{(\sqrt{2})^{E_k}}{\sqrt{2}-1}  \\
    &=
    \frac{\sqrt{2}}{\sqrt{2}-1}
    \sqrt{S_{\min}}(\sqrt{2})^{E_k-1}  \\
    &=
    (2+\sqrt{2})\sqrt{S_{k,E_k}}
    \le 4\sqrt{S_{k,E_k}} .
\end{aligned}
\end{equation*}
This completes the proof.
\end{proof}

\begin{lemma}
    \label{lem:finalS-phase} Let $R^{\mathrm{obs}}_{k}= \sum_{t\in I_k}\|\hat g
    _{t}-m_{t}\|^{2}$ denote the accumulated observable residual in phase $k$,
    and let $S_{k,E_k}$ be the final sensitivity budget used during phase $k$. Then, we have:
    \begin{equation*}
        S_{k,E_k}\le S_{\min} + \frac{R^{\mathrm{obs}}_{k}}{8d}.
    \end{equation*}
\end{lemma}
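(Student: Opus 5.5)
We prove the bound by a counting argument on the doubling events within phase $k$. If $E_k=1$, then $S_{k,E_k}=S_{\min}$ and the claim is trivial since $R^{\mathrm{obs}}_k\ge 0$. So assume $E_k\ge 2$. Then phase $k$ contains at least $E_k-1$ completed epochs $e=1,\dots,E_k-1$, each of which ended by the residual trigger of Algorithm~\ref{alg:tp-vr-adaptive} rather than by the time limit. This is because the time limit ends the whole phase, so only the last epoch $E_k$ can end that way.

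For each epoch $e\le E_k-1$, the trigger condition in Algorithm~\ref{alg:tp-vr-adaptive} gives
\begin{equation*}
R_{k,e}=\sum_{t\in J_{k,e}}\|\hat g_t-m_t\|^2>8d\,S_{k,e}.
\end{equation*}
The residual counter $R$ is reset to $0$ at each doubling, so the sets $J_{k,e}$ are disjoint subsets of $I_k$. Every summand is nonnegative. Summing over $e=1,\dots,E_k-1$ and dropping the last epoch's nonnegative contribution therefore gives
\begin{equation*}
R^{\mathrm{obs}}_k\ \ge\ \sum_{e=1}^{E_k-1}R_{k,e}\ >\ 8d\sum_{e=1}^{E_k-1}S_{\min}2^{e-1}=8d\,S_{\min}\big(2^{E_k-1}-1\big)=8d\big(S_{k,E_k}-S_{\min}\big).
\end{equation*}
Rearranging yields $S_{k,E_k}<S_{\min}+R^{\mathrm{obs}}_k/(8d)$, which is the claim.

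The only delicate point is the bookkeeping of the geometric sum. Because $S_{k,e}=S_{\min}2^{e-1}$, the sum of all budgets before the final one equals exactly $S_{k,E_k}-S_{\min}$. This is what produces the additive $S_{\min}$ rather than a multiplicative constant. We also need to confirm from the algorithm that the reset of $R$ makes the epoch residuals disjoint pieces of $R^{\mathrm{obs}}_k$, so that no round is counted twice.
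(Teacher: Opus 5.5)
Your proposal is correct and follows the paper's proof essentially line for line. You handle the trivial case $E_k=1$ separately; for $E_k\ge 2$ you sum the trigger inequalities $R_{k,e}>8dS_{k,e}$ over the first $E_k-1$ epochs and evaluate the geometric sum $S_{\min}(2^{E_k-1}-1)=S_{k,E_k}-S_{\min}$, and your remarks that only the final epoch can end by the time limit and that the $J_{k,e}$ are disjoint (because $R$ is reset at each doubling) spell out two points the paper uses implicitly.
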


\begin{proof}
    Given the number of epochs $E_k$ in phase $k$, we will consider the proofs in two cases.

    \textbf{$E_{k}=1$:} The prediction-sensitivity budget is at its starting value of $S_{k,1}=S_{\min}$.
    As $R^{\mathrm{obs}}_{k} \ge 0$, we see $S_{\min} \le S_{\min} + R^{\mathrm{obs}}_{k}/(8d)$ trivially.

    \textbf{$E_{k} \ge 2$:} The fact that the algorithm has progressed to epoch $E_{k}$
    implies that all previous epochs $e=1,\dots,E_{k}-1$ were terminated because
    the residual threshold has been exceeded. For all $e < E_{k}$, this gives $R_{k,e}> 8
    d S_{k,e}$. Summing these residuals, we obtain:
    \begin{equation*}
        R^{\mathrm{obs}}_{k} \ge \sum_{e=1}^{E_k-1}R_{k,e}> 8d \sum_{e=1}^{E_k-1}
        S_{k,e}.
    \end{equation*}
    Since $S_{k,e}= S_{\min}\cdot 2^{e-1}$, we obtain the telescopic series:
    \begin{equation*}
        \sum_{e=1}^{E_k-1}S_{k,e}= S_{\min}\cdot\sum_{e=1}^{E_k-1}2^{e-1}= S_{\min}\cdot(2^{E_k-1}- 1) = S_{k,E_k}
        - S_{\min}.
    \end{equation*}
    Substituting this back, we obtain: $R^{\mathrm{obs}}_{k} > 8d(S_{k,E_k}-S_{\min})$.
    Rearranging terms yields the desired bound.
\end{proof}

\begin{lemma}[Observed residual in a phase]
\label{lem:phase-residual-vs-Sbar}
Let $\regret_k^{\mathrm{obs}}=
\sum_{t\in I_k}\|\hat g_t-m_t\|^2$ be the accumulated observable residual in phase $k$. Suppose that the
sensitivity budgets in phase $k$ satisfy
\begin{equation*}
    S_{k,e}=S_{\min}2^{e-1},\qquad e=1,\ldots,E_k,
\end{equation*}
and that, within each epoch $(k,e)$, the perturbation radius is chosen as
\begin{equation*}
    \delta_{k,e}
    =
    \min\left\{
        \frac{\sqrt{S_{k,e}}}{d\beta H_k},
        \frac{1}{\beta\sqrt{H_k}},
        \frac r2
    \right\}.
\end{equation*}
Then, we have
\begin{equation*}
\mathbb E[\regret_k^{\mathrm{obs}}]
\le
\frac{
    4d\,\bar S_k+\frac{9}{2}\frac{S_{\min}}{H_k}
}{
    1-\frac{9}{16dH_k}
}.
\end{equation*}
In particular, since $d\ge 1$ and $H_k\ge 1$,
\begin{equation*}
\mathbb E[\regret_k^{\mathrm{obs}}]
\le
\frac{64}{7}d\,\bar S_k
+
\frac{72}{7}\frac{S_{\min}}{H_k}.
\end{equation*}
Consequently, the final sensitivity budget in phase $k$ satisfies
\begin{equation*}
\mathbb E[S_{k,E_k}]
\le
S_{\min}
+
\frac{1}{8d}
\cdot
\frac{
    4d\,\bar S_k+\frac{9}{2}\frac{S_{\min}}{H_k}
}{
    1-\frac{9}{16dH_k}
},
\end{equation*}
and, in particular,
\begin{equation*}
\mathbb E[S_{k,E_k}]
\le
\frac{8}{7}\bar S_k
+
\frac{16}{7}S_{\min}.
\end{equation*}
\end{lemma}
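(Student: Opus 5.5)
Summing the conditional second-moment bound over the rounds of phase $k$ leaves a bias term that must be controlled through $\delta_{k,e}$. Because $\delta_{k,e}$ depends on the random epoch budget, the bias has to be re-expressed in terms of the observable residual and absorbed into the left-hand side. Concretely, for each $t\in I_k$, Corollary~\ref{cor:second-moment-play} gives
\begin{equation*}
\mathbb E\left[\|\hat g_t-m_t\|^2\mid\mathcal H_{t-1}\right]\le 4d\,\mathbb E\left[\|g_t^x-m_t\|^2\mid\mathcal H_{t-1}\right]+\left(\tfrac{d^2}{2}+4d\right)\beta^2\delta_t^2 .
\end{equation*}
This is valid even though $\delta_t$ is random, because the epoch index at round $t$, and hence $\delta_t$, is $\mathcal H_{t-1}$-measurable. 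Summing over $t\in I_k$ and using the tower property yields $\mathbb E[R^{\mathrm{obs}}_k]\le 4d\bar S_k+(\tfrac{d^2}{2}+4d)\beta^2\,\mathbb E[\sum_{t\in I_k}\delta_t^2]$.

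For the bias, use $\delta_t\le \sqrt{S_{k,e(t)}}/(d\beta H_k)$ and $S_{k,e(t)}\le S_{k,E_k}$. Together with $|I_k|\le H_k$ this gives
\begin{equation*}
\left(\tfrac{d^2}{2}+4d\right)\beta^2\sum_{t\in I_k}\delta_t^2\le \left(\tfrac12+\tfrac4d\right)\frac{S_{k,E_k}}{H_k}\le \frac{9}{2}\,\frac{S_{k,E_k}}{H_k},
\end{equation*}
where the last step uses $d\ge1$. Lemma~\ref{lem:finalS-phase} then bounds $S_{k,E_k}\le S_{\min}+R^{\mathrm{obs}}_k/(8d)$, so the bias is at most $\frac{9}{2}\frac{S_{\min}}{H_k}+\frac{9}{16dH_k}R^{\mathrm{obs}}_k$. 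This bound holds pathwise, so taking expectations gives
\begin{equation*}
\mathbb E[R^{\mathrm{obs}}_k]\le 4d\bar S_k+\tfrac92\tfrac{S_{\min}}{H_k}+\tfrac{9}{16dH_k}\mathbb E[R^{\mathrm{obs}}_k].
\end{equation*}
Since $\frac{9}{16dH_k}\le\frac9{16}<1$, we can rearrange. Rearranging requires $\mathbb E[R^{\mathrm{obs}}_k]<\infty$, which holds by the per-round bound $\|\hat g_t-m_t\|\le 2dL$ from Lemma~\ref{lem:oneshot-residual-anytime}. This gives the first display, and bounding the denominator below by $7/16$ gives the numerical form $\frac{64}{7}d\bar S_k+\frac{72}{7}\frac{S_{\min}}{H_k}$.

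For the final-budget bounds, take expectations in Lemma~\ref{lem:finalS-phase} and substitute the bound on $\mathbb E[R^{\mathrm{obs}}_k]$. This gives $\mathbb E[S_{k,E_k}]\le S_{\min}+\frac{8}{7}\bar S_k+\frac{9}{7dH_k}S_{\min}\le \frac87\bar S_k+\frac{16}{7}S_{\min}$, again using $d,H_k\ge1$.

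The main obstacle is the circularity: the bias depends on the random final budget, which in turn depends on the residual being bounded. I handle this with the pathwise absorption above, which works because the bias coefficient $\frac{9}{16dH_k}$ is strictly less than one. One further detail to check is that a doubling inside the phase changes $\delta$ only for subsequent rounds, so the measurability claim used in the first step holds.
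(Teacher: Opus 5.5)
Your proposal is correct and follows the paper's proof essentially step for step. You use the same ingredients in the same order: the per-round conditional second-moment bound from Corollary~\ref{cor:second-moment-play}, the pathwise bias bound $\frac{9}{2}S_{k,E_k}/H_k$ via $S_{k,e}\le S_{k,E_k}$ and $|I_k|\le H_k$, Lemma~\ref{lem:finalS-phase} to replace $S_{k,E_k}$ by $S_{\min}+R^{\mathrm{obs}}_k/(8d)$, and absorption of the $\frac{9}{16dH_k}$ term into the left-hand side. The differences are cosmetic (you substitute Lemma~\ref{lem:finalS-phase} before taking expectations rather than after), and you correctly supply two rigor points the paper leaves implicit: the $\mathcal H_{t-1}$-measurability of the random radius $\delta_t$, and the finiteness of $\mathbb E[R^{\mathrm{obs}}_k]$ (from $\|\hat g_t-m_t\|\le 2dL$) that the rearrangement needs.
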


\begin{proof}
Let $Q_k=\mathbb E[\regret_k^{\mathrm{obs}}]$. From Corollary~\ref{cor:second-moment-play}, for every round $t\in I_k$, we have
\begin{equation*}
\mathbb E\!\left[
    \|\hat g_t-m_t\|^2
    \mid \mathcal H_{t-1}
\right]
\le
4d\|\nabla f_t(x_t)-m_t\|^2
+
(d^2/2+4d)\beta^2\delta_t^2,
\end{equation*}
where $\delta_t$ is the perturbation radius used at round $t$. Summing this
inequality over all rounds $t\in I_k$ and then taking total expectation gives
\begin{equation*}
\begin{aligned}
    Q_k
    &=
    \mathbb E\left[
        \sum_{t\in I_k}\|\hat g_t-m_t\|^2
    \right]  \\
    &\le
    4d\,
    \mathbb E\left[
        \sum_{t\in I_k}\|\nabla f_t(x_t)-m_t\|^2
    \right]
    +
    (d^2/2+4d)\beta^2
    \mathbb E\left[
        \sum_{t\in I_k}\delta_t^2
    \right]  \\
    &=
    4d\,\bar S_k
    +
    (d^2/2+4d)\beta^2
    \mathbb E\left[
        \sum_{t\in I_k}\delta_t^2
    \right].
\end{aligned}
\end{equation*}
We now decompose the last sum epoch by epoch. Since the perturbation radius is
fixed within each epoch $(k,e)$, we can write
\begin{equation*}
    \sum_{t\in I_k}\delta_t^2
    =
    \sum_{e=1}^{E_k}|J_{k,e}|\delta_{k,e}^2.
\end{equation*}
For every epoch $(k,e)$, by the definition of $\delta_{k,e}$,
\begin{equation*}
    \delta_{k,e}
    \le
    \frac{\sqrt{S_{k,e}}}{d\beta H_k}.
\end{equation*}
Therefore,
\begin{equation*}
\begin{aligned}
    (d^2/2+4d)\beta^2\delta_{k,e}^2
    &\le
    (d^2/2+4d)\beta^2
    \cdot
    \frac{S_{k,e}}{d^2\beta^2H_k^2}  \\
    &=
    \left(\frac12+\frac4d\right)
    \frac{S_{k,e}}{H_k^2}.
\end{aligned}
\end{equation*}
Since $d\ge1$, we have $\frac12+\frac4d\le \frac92$. Hence,
\begin{equation*}
    (d^2/2+4d)\beta^2\delta_{k,e}^2
    \le
    \frac92\frac{S_{k,e}}{H_k^2}.
\end{equation*}
Substituting this bound into the epoch-wise decomposition yields
\begin{equation*}
\begin{aligned}
    (d^2/2+4d)\beta^2
    \sum_{e=1}^{E_k}|J_{k,e}|\delta_{k,e}^2
    &\le
    \frac92
    \sum_{e=1}^{E_k}
    |J_{k,e}|\frac{S_{k,e}}{H_k^2}.
\end{aligned}
\end{equation*}
The budgets $S_{k,e}$ are nondecreasing in $e$, and hence
$S_{k,e}\le S_{k,E_k}$ for every $e\le E_k$. Moreover, the total number of
rounds in phase $k$ is at most $H_k$, so
\begin{equation*}
    \sum_{e=1}^{E_k}|J_{k,e}|=|I_k|\le H_k.
\end{equation*}
Therefore, pathwise,
\begin{equation*}
\begin{aligned}
    \sum_{e=1}^{E_k}
    |J_{k,e}|\frac{S_{k,e}}{H_k^2}
    &\le
    \sum_{e=1}^{E_k}
    |J_{k,e}|\frac{S_{k,E_k}}{H_k^2}  \\
    &=
    \frac{S_{k,E_k}}{H_k^2}
    \sum_{e=1}^{E_k}|J_{k,e}|  \\
    &\le
    \frac{S_{k,E_k}}{H_k}.
\end{aligned}
\end{equation*}
Combining the above estimates gives
\begin{equation*}
    Q_k
    \le
    4d\,\bar S_k
    +
    \frac{9}{2H_k}\mathbb E[S_{k,E_k}].
\end{equation*}

It remains to control the final sensitivity budget $S_{k,E_k}$. By
Lemma~\ref{lem:finalS-phase}, we have the pathwise bound
\begin{equation*}
    S_{k,E_k}
    \le
    S_{\min}
    +
    \frac{R_k^{\mathrm{obs}}}{8d}.
\end{equation*}
Taking expectation on both sides gives
\begin{equation*}
    \mathbb E[S_{k,E_k}]
    \le
    S_{\min}
    +
    \frac{Q_k}{8d}.
\end{equation*}
Substituting this inequality into the previous estimate for $Q_k$, we obtain
\begin{equation*}
\begin{aligned}
    Q_k
    \le
    4d\,\bar S_k
    +
    \frac{9}{2H_k}
    \left(
        S_{\min}
        +
        \frac{Q_k}{8d}
    \right)  =
    4d\,\bar S_k
    +
    \frac{9}{2}\frac{S_{\min}}{H_k}
    +
    \frac{9}{16dH_k}Q_k.
\end{aligned}
\end{equation*}
Moving the last term to the left-hand side gives
\begin{equation*}
    \left(
        1-\frac{9}{16dH_k}
    \right)Q_k
    \le
    4d\,\bar S_k
    +
    \frac{9}{2}\frac{S_{\min}}{H_k}.
\end{equation*}
Therefore,
\begin{equation*}
    Q_k
    \le
    \frac{
        4d\,\bar S_k+\frac{9}{2}\frac{S_{\min}}{H_k}
    }{
        1-\frac{9}{16dH_k}
    }.
\end{equation*}
This proves the first claimed bound.

Since $d\ge1$ and $H_k\ge1$, we have
\begin{equation*}
    1-\frac{9}{16dH_k}
    \ge
    1-\frac{9}{16}
    =
    \frac{7}{16}.
\end{equation*}
Thus,
\begin{equation*}
\begin{aligned}
    Q_k
    \le
    \frac{16}{7}
    \left(
        4d\,\bar S_k
        +
        \frac{9}{2}\frac{S_{\min}}{H_k}
    \right) =
    \frac{64}{7}d\,\bar S_k
    +
    \frac{72}{7}\frac{S_{\min}}{H_k}.
\end{aligned}
\end{equation*}
This proves the simplified residual bound.

Finally, applying Lemma~\ref{lem:finalS-phase} once more,
\begin{equation*}
    \mathbb E[S_{k,E_k}]
    \le
    S_{\min}
    +
    \frac{Q_k}{8d}.
\end{equation*}
Using the explicit bound on $Q_k$ gives
\begin{equation*}
\begin{aligned}
    \mathbb E[S_{k,E_k}]
    \le
    S_{\min}
    +
    \frac{1}{8d}
    \left(
        \frac{64}{7}d\,\bar S_k
        +
        \frac{72}{7}\frac{S_{\min}}{H_k}
    \right) =
    S_{\min}
    +
    \frac{8}{7}\bar S_k
    +
    \frac{9}{7dH_k}S_{\min}.
\end{aligned}
\end{equation*}
Since $dH_k\ge1$,
\begin{equation*}
    \frac{9}{7dH_k}S_{\min}
    \le
    \frac97S_{\min}.
\end{equation*}
Hence,
\begin{equation*}
    \mathbb E[S_{k,E_k}]
    \le
    \frac87\bar S_k
    +
    \frac{16}{7}S_{\min}.
\end{equation*}
This completes the proof.
\end{proof}

\begin{lemma}
    \label{lem:phase-regret} Let $\Lambda_{k}= \bar S_{k}+ 2S_{\min}$. The expected regret associated with phase $k$ is bounded by:
    \begin{equation*}
        \mathbb{E}[\regret(I_{k})] \le \widetilde O\left(D\sqrt{d\Lambda_k}
    \right).
    \end{equation*}
\end{lemma}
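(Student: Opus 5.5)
The plan is to decompose the phase regret into epoch regrets, bound each epoch by Lemma~\ref{lem:epoch-regret-anytime}, sum the epoch bounds with Lemma~\ref{lem:sum-epochs-in-phase}, and finally control the random final budget $S_{k,E_k}$ in expectation via Lemma~\ref{lem:phase-residual-vs-Sbar}.

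First I will write $\regret(I_k)=\sum_{e=1}^{E_k}\regret(J_{k,e})$ with comparator $x^\star$ fixed over the phase. In each epoch the parameters are fixed, and the epoch starts by projecting $y'_t$ onto the current shrunken set, so the analysis behind Theorem~\ref{thm:exp-regret} restarts with initial distance at most $D$. This step needs care: the epoch boundaries are stopping times. The clean route is to apply Lemma~\ref{lem:optimistic-ogd} pathwise on $J_{k,e}^{\circ}$, where the pathwise residual bound $8dS_{k,e}$ of Lemma~\ref{lem:oneshot-residual-anytime} makes the stability term deterministic. The bias term uses $\mathbb{E}[\hat g_t-g_t^y\mid\mathcal H_{t-1}]=b_t$, and this conditioning is legitimate because the event $t\in J_{k,e}$ is $\mathcal H_{t-1}$-measurable. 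The single overflow round contributes at most $LD$.

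This yields, conditionally on the epoch budgets,
\begin{equation*}
\regret(J_{k,e})\lesssim C_1 D\sqrt{dS_{k,e}}+C_2\Big(D+\tfrac{L(1+D/r)}{d\beta}\Big)\sqrt{S_{k,e}}+LD,
\end{equation*}
which is exactly Lemma~\ref{lem:epoch-regret-anytime}. Summing over $e$ and using Lemma~\ref{lem:sum-epochs-in-phase} gives
\begin{equation*}
\sum_e\sqrt{S_{k,e}}\le 4\sqrt{S_{k,E_k}},
\end{equation*}
so that
\begin{equation*}
\regret(I_k)\lesssim D\sqrt{d}\,\sqrt{S_{k,E_k}}+E_k\,LD .
\end{equation*}
The overflow term needs a bound on $E_k$. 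By the doubling rule, $S_{k,E_k}\le S_{\min}+R_k^{\mathrm{obs}}/(8d)$. The per-round residual bound $4d^2L^2$ then gives $R_k^{\mathrm{obs}}\le 4d^2L^2H_k$, hence $E_k\le \log_2(1+dL^2H_k/(2S_{\min}))+1=O(\log(dT))$. This logarithm is absorbed into the $\widetilde O$.

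Next I take expectations. By Jensen, $\mathbb{E}\sqrt{S_{k,E_k}}\le\sqrt{\mathbb{E}S_{k,E_k}}$, and Lemma~\ref{lem:phase-residual-vs-Sbar} gives $\mathbb{E}[S_{k,E_k}]\le\frac87\bar S_k+\frac{16}{7}S_{\min}\le\frac{16}{7}\Lambda_k$. Combining with $LD\le D\sqrt{d\Lambda_k}$ (since $\Lambda_k\ge 2S_{\min}\ge L^2$) gives
\begin{equation*}
\mathbb{E}[\regret(I_k)]\le \widetilde O\big(D\sqrt{d\Lambda_k}\big).
\end{equation*}

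The main obstacle is the randomness of the epoch structure. The Theorem~\ref{thm:exp-regret} argument bounds the stability term by an expectation over a fixed horizon, whereas here both the epoch lengths and the budgets are random. I would handle this by keeping the stability term pathwise through the stopping rule, and by expressing the bias via a sum of indicators $\mathbf 1\{t\in J_{k,e}\}$, which are predictable. Only the final Jensen step then requires expectations, and it touches only $S_{k,E_k}$.
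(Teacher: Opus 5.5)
Your proposal is correct and follows the paper's proof of this lemma essentially step for step. You sum Lemma~\ref{lem:epoch-regret-anytime} over the epochs, collapse $\sum_e\sqrt{S_{k,e}}\le 4\sqrt{S_{k,E_k}}$ with Lemma~\ref{lem:sum-epochs-in-phase}, and then apply Jensen and Lemma~\ref{lem:phase-residual-vs-Sbar}; your explicit bound $E_k=O(\log(dT))$ and the absorption $LD\le D\sqrt{d\Lambda_k}$ (from $\Lambda_k\ge 2S_{\min}\ge L^2$) make precise what the paper only asserts.

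One caution concerns your side re-derivation of Lemma~\ref{lem:epoch-regret-anytime}. Predictability of $\{t\in J_{k,e}\}$ does not extend to the good prefix $J_{k,e}^{\circ}$ on which you apply the pathwise stability bound, because whether $t$ is the overflow round depends on $v_t$. Reconciling the two sums leaves the term $\langle \hat g_\tau-g_\tau^y,\,y_\tau-y^\star\rangle$, whose evident bound is $2(d+1)LD$ rather than $LD$. The paper's proof of that lemma passes over the same point, so this does not affect the present argument once that lemma is taken as given.
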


\begin{proof}
    The regret in phase $k$ is the sum of the regrets incurred in each epoch $e=1,
    \dots,E_{k}$. Based on Lemma~\ref{lem:epoch-regret-anytime}, the regret for each
    epoch is bounded by terms proportional to $\sqrt{S_{k,e}}$ and $1/\sqrt{S_{k,e}}$.
    Summing these bounds over all epochs and applying the fact
    $\sum \sqrt{S_{k,e}}\le 4\sqrt{S_{k,E_k}}$
    from Lemma~\ref{lem:sum-epochs-in-phase}, we obtain:
    \begin{equation*}
        \sum_{e=1}^{E_k}\mathbb{E}[\regret(J_{k,e})] \le 12\sqrt{2}D\sqrt{d\,S_{k,E_k}}
        + 2 D\sqrt{S_{k,E_k}}+4\left(L+\frac{LD}{r}\right)\frac{\sqrt{S_{k,E_k}}}{d\beta}+LDE_k.
    \end{equation*}

    Taking the total expectation over $S_{k,E_k}$ and applying Jensen's inequality to the concave function $\sqrt{x}$,
    we have:
    \begin{equation*}
        \mathbb{E}[\regret(I_{k})] \le 12\sqrt{2}\, D\sqrt{d\,\mathbb{E}[S_{k,E_k}]}
        + 2 D\sqrt{\mathbb{E}[S_{k,E_k}]}+4\left(L+\frac{LD}{r}\right)\frac{\sqrt{\mathbb{E}[S_{k,E_k}]}}{d\beta}+LD\log T.
    \end{equation*}

    From Lemma~\ref{lem:phase-residual-vs-Sbar}, we have
    \begin{equation*}
        \mathbb{E}[S_{k,E_k}] \le \frac87\bar S_k+\frac{16}{7}S_{\min}. 
    \end{equation*}
    Substituting it into the regret bound, we have 
    \begin{equation*}
        \mathbb{E}[\regret(I_{k})] \le \widetilde{O}\left(D\sqrt{d\Lambda_k}\right)
    \end{equation*}
    This completes the proof.
\end{proof}

\subsection{Proof of Theorem~\ref{thm:tp-vr-adaptive}}
\begin{proof}
    The total time horizon $T$ is partitioned into $K$ phases, indexed by
    $k=1, \dots, K$. Since the time budget doubles ($H_{k} = 2^{k-1}$), the
    number of phases is bounded by
    $K \le \lceil \log_{2} T \rceil + 1 = O(\log T)$. The total expected regret is
    the sum of the expected regret in all phases:
    \begin{equation*}
        \mathbb{E}[\regret_{T}] = \sum_{k=1}^{K} \mathbb{E}[\regret(I_{k})].
    \end{equation*}
    From Lemma~\ref{lem:phase-regret}, in each phase $k$, we have:
    \begin{equation*}
        \mathbb{E}[\regret(I_{k})] \le \widetilde{O}\left( D\sqrt{d\,\Lambda_{k}}\right),
    \end{equation*}
    where $\Lambda_{k} = \bar S_{k}+ 2S_{\min}$ and $\bar S_{k} = \mathbb{E}[\sum_{t
    \in I_k}\|\nabla f_{t}(x_{t}) - m_{t}\|^{2}]$. Then, by summing it over $k=1,
    \dots, K$:
    \begin{equation*} 
        \mathbb{E}[\regret_{T}] \le O\left( D\sqrt{d}\sum_{k=1}^{K}
        \sqrt{\Lambda_{k}}\right).
    \end{equation*}
    Using the Cauchy-Schwarz inequality, i.e., $\sum_{k=1}^{K} \sqrt{x_{k}}\le \sqrt{K
    \sum_{k=1}^{K} x_{k}}$ on the first term:
    \begin{equation*}
        \sum_{k=1}^{K} \sqrt{\Lambda_{k}}\le \sqrt{K \sum_{k=1}^{K} (\bar S_{k}+2S_{\min})}= \sqrt{K (\bar S_{T} + 2KS_{\min})}\le \sqrt{K \bar S_{T}}+ K\sqrt{2S_{\min}}
        ,
    \end{equation*}
    where we apply the fact that $\sum_{k=1}^K \bar S_{k} = \bar S_{T}$. Substituting $K = O(\log
    T)$, this term becomes:
    \begin{equation*}
        \sum_{k=1}^{K} \sqrt{\Lambda_{k}}\le O\left(\sqrt{\bar S_{T} \log T}+
        \log T\right).
    \end{equation*}
    The total expected regret is
    bounded by:
    \begin{equation*}
        \mathbb{E}[\regret_{T}] \le O\left( D\sqrt{d\,\bar S_{T} \log T}+D\sqrt{d}\log T\right).
    \end{equation*}
\end{proof}

\subsection{Proof of Theorem~\ref{thm:lb-linear}}

\begin{proof}
The proof proceeds by constructing a specific hard instance within the convex set $\cX$.
There exist two points $x_+, x_- \in \cX$ such that $\|x_+ - x_-\| = D$.
Define the unit vector $v= (x_+ - x_-)/D$ and the midpoint $x_0 = (x_+ + x_-)/2$.
By convexity, the line segment $J = \{x_0 + \alpha v : \alpha \in [-D/2, D/2]\}$ is contained entirely in $\cX$.

Let $(\zeta_t)_{t=1}^T$ be a sequence of i.i.d.\ Rademacher random variables, i.e., $\mathbb{P}(\zeta_t = +1) = \mathbb{P}(\zeta_t = -1) = 1/2$. 
We will define the linear loss function sequence by:
\begin{equation*}
f_t(x) = \langle g_t, x\rangle, \qquad \text{where} \quad g_t = \mu \zeta_t v,
\end{equation*}
for a scaling parameter $\mu \in (0, L]$ yet to be determined. The gradients satisfy $\|\nabla f_t(x)\| = \|\mu \zeta_t v\| = \mu \le L$, ensuring the functions are $L$-Lipschitz.

We now verify the intrinsic prediction-error level. Let $\Pi$ be the class of all
admissible non-anticipating prediction policies. We next show that the zero predictor is optimal among all admissible
non-anticipating predictors in the conditional mean-square sense. For every
admissible policy $\pi$, the prediction $m_t^\pi$ is $\mathcal H_{t-1}$-measurable,
whereas $g_t=\mu\zeta_t v$ is independent of $\mathcal H_{t-1}$ and satisfies
$\mathbb E[g_t\mid\mathcal H_{t-1}]=0$. Hence, we have
\begin{align}
    \mathbb E\|g_t-m_t^\pi\|^2
    &=
    \mathbb E\|g_t\|^2
    +
    \mathbb E\|m_t^\pi\|^2
    -
    2\mathbb E\langle g_t,m_t^\pi\rangle \nonumber \\
    &=
    \mu^2
    +
    \mathbb E\|m_t^\pi\|^2
    -
    2\mathbb E\left[
        \left\langle
            \mathbb E[g_t\mid\mathcal H_{t-1}],
            m_t^\pi
        \right\rangle
    \right] \nonumber \\
    &=
    \mu^2+\mathbb E\|m_t^\pi\|^2
    \ge
    \mu^2\label{eq:expectation_lower}.
\end{align}

The lower bound in~\eqref{eq:expectation_lower} is attained by the
admissible prediction policy $\pi^0$ that outputs
$m_t^{\pi^0}\equiv 0$ for all $t$. Hence $m_t\equiv0$ is the
conditional mean-square optimal non-anticipating predictor for this hard
instance. Consequently,
\begin{equation*}
    \bar S_T^\star(\mathcal D_S,A)
    =
    \inf_{\pi\in\Pi}
    \bar S_T^\pi(\mathcal D_S,A)
    =
    T\mu^2 .
\end{equation*}
Choosing $\mu=\sqrt{\frac{S}{T}}$ and using $S\le L^2T$, we have $\mu\le L$, so the losses remain
$L$-Lipschitz. With this choice, we have
\begin{equation*}
    \bar S_T^\star(\mathcal D_S,A)=S .
\end{equation*}

It remains to prove the regret lower bound. Fix any admissible prediction policy
$\pi\in\Pi$, and let $x_t^\pi$ denote the point at which the algorithm incurs loss at
round $t$ when using this prediction policy. Since $x_t^\pi$ is
$\mathcal H_{t-1}$-measurable, while $\zeta_t$ is independent of $\mathcal H_{t-1}$
and has mean zero, we obtain
\begin{equation*}
\begin{aligned}
    \mathbb E[f_t(x_t^\pi)]
    &=
    \mathbb E
    \left[
        \mathbb E[
            \langle \mu\zeta_t v,x_t^\pi\rangle
            \mid \mathcal H_{t-1}
        ]
    \right]  \\
    &=
    \mu\,
    \mathbb E
    \left[
        \langle v,x_t^\pi\rangle
        \mathbb E[\zeta_t\mid\mathcal H_{t-1}]
    \right]
    =
    0.
\end{aligned}
\end{equation*}
Hence, it follows that
\begin{equation*}
    \mathbb E\Big[\sum_{t=1}^T f_t(x_t^\pi)\Big]=0.
\end{equation*}

We now lower bound the regret by showing that the best fixed comparator in
hindsight can achieve a substantially negative cumulative loss. Let $Z_T=\sum_{t=1}^T \zeta_t$. Since the segment $J=\{x_0+\alpha v:\alpha\in[-D/2,D/2]\}$ is contained in $\cX$, the best comparator over $\cX$ is at least as good as the
best comparator restricted to $J$. Hence,
\begin{equation}
    \min_{x\in\cX}\sum_{t=1}^T f_t(x)
    \le
    \min_{\alpha\in[-D/2,D/2]}
    \sum_{t=1}^T f_t(x_0+\alpha v).
    \label{eq:restrict_to_segment}
\end{equation}
For any $\alpha\in[-D/2,D/2]$, using $f_t(x)=\langle \mu\zeta_t v,x\rangle$ gives
\begin{equation}
\begin{aligned}
    \sum_{t=1}^T f_t(x_0+\alpha v)
    &=
    \sum_{t=1}^T
    \langle \mu\zeta_t v,x_0+\alpha v\rangle  \\
    &=
    \mu\langle v,x_0\rangle Z_T
    +
    \mu\alpha Z_T .
\end{aligned}
\label{eq:loss_on_segment}
\end{equation}
The first term in~\eqref{eq:loss_on_segment} does not depend on $\alpha$.
Therefore, minimizing over $\alpha$ only requires minimizing the linear term
$\mu\alpha Z_T$ over the interval $[-D/2,D/2]$. The minimizer is attained at the
endpoint opposite to the sign of $Z_T$:
\begin{equation*}
    \alpha^\star
    =
    -\frac D2\,\operatorname{sign}(Z_T),
\end{equation*}
with the convention $\operatorname{sign}(0)=0$. Substituting this choice into
\eqref{eq:loss_on_segment} yields
\begin{equation}
    \min_{\alpha\in[-D/2,D/2]}
    \sum_{t=1}^T f_t(x_0+\alpha v)
    \le
    \mu\langle v,x_0\rangle Z_T
    -
    \frac{\mu D}{2}|Z_T|.
    \label{eq:segment_min_loss}
\end{equation}
Combining~\eqref{eq:restrict_to_segment} and~\eqref{eq:segment_min_loss}, we obtain
\begin{equation}
    \min_{x\in\cX}\sum_{t=1}^T f_t(x)
    \le
    \mu\langle v,x_0\rangle Z_T
    -
    \frac{\mu D}{2}|Z_T|.
    \label{eq:comparator_loss_upper}
\end{equation}

We now take expectations. Since $\mathbb E[Z_T]=0$, the first term on the
right-hand side of~\eqref{eq:comparator_loss_upper} vanishes. Moreover, by
Khintchine's inequality~\cite[Lemma~A.9]{cesa2006prediction}, there exists a universal constant $c_0>0$ such that
\begin{equation*}
    \mathbb E|Z_T|
    =
    \mathbb E\left|\sum_{t=1}^T\zeta_t\right|
    \ge
    c_0\sqrt T .
\end{equation*}
Therefore, we have
\begin{equation}
    \mathbb E\left[
        \min_{x\in\cX}
        \sum_{t=1}^T f_t(x)
    \right]
    \le
    -
    \frac{c_0\mu D}{2}\sqrt T .
    \label{eq:expected_comparator_loss}
\end{equation}

Recall that for any admissible prediction policy $\pi\in\Pi$, the learner's
decision $x_t^\pi$ is $\mathcal H_{t-1}$-measurable, while $\zeta_t$ is independent
of $\mathcal H_{t-1}$ and has mean zero. Hence we have already shown that
\begin{equation*}
    \mathbb E\sum_{t=1}^T f_t(x_t^\pi)=0 .
\end{equation*}
Thus, for every admissible prediction policy $\pi\in\Pi$,
\begin{equation}
\begin{aligned}
    \mathbb E[\regret_T(\cA)]
    &=
    \mathbb E\sum_{t=1}^T f_t(x_t^\pi)
    -
    \mathbb E\left[
        \min_{x\in\cX}
        \sum_{t=1}^T f_t(x)
    \right]  \\
    &\ge
    0
    -
    \left(
        -\frac{c_0\mu D}{2}\sqrt T
    \right)  \\
    &=
    \frac{c_0\mu D}{2}\sqrt T .
\end{aligned}
\label{eq:regret_lower_before_scaling}
\end{equation}
Finally, substituting $\mu=\sqrt{S/T}$ into
\eqref{eq:regret_lower_before_scaling} gives
\begin{equation*}
    \mathbb E[\regret_T(\cA)]
    \ge
    \frac{c_0D}{2}\sqrt S .
\end{equation*}
The theorem follows with $c=c_0/2$.

\end{proof}

\begin{remark}[Nonzero Bayes predictors]
The lower bound is not tied to the special case in which the Bayes-optimal
predictor is zero. Indeed, one may add any predictable component to the
gradient sequence without changing the residual hardness. For example, on a
two-dimensional product domain, let $\nabla f_t = \mu e_1 + \sigma \zeta_t e_2$,
where $\mu e_1$ is known before round $t$, while $\zeta_t$ is an
independent Rademacher sign. The Bayes-optimal non-anticipating predictor is
$m_t^\star=\mu e_1$, and hence
$\sum_{t=1}^T \|\nabla f_t-m_t^\star\|^2=T\sigma^2$. The predictable component $\mu e_1$ can be perfectly removed by the
predictor, but the residual component $\sigma\zeta_t e_2$ still induces the
same standard randomized linear lower bound, yielding
\begin{equation*}
    \mathbb E[\regret_T]\ge cD\sigma\sqrt T
    =
    cD\sqrt{S}.
\end{equation*}
Thus the lower bound should be interpreted as a lower bound on the
unpredictable residual, not merely on the total gradient energy.
\end{remark}

    \section{Omitted Details of Section~\ref{sec:implications}}
\label{app:implications}

\subsection{Useful Lemmas in Section~\ref{sec:implications}}

\begin{lemma}[Dynamic optimistic projected-gradient inequality]
    \label{lem:dynamic_optimistic_ineq} Suppose that the sequences $\{y_{t}\}_{t=1}^{T}$ and $\{y'_{t}\}_{t=1}^{T}$ were generated using the optimistic update as rule Algorithm~\ref{alg:tp-vr-opt} with stepsize $\eta > 0$ on the set $\mathcal{X}_{\alpha}$. Then, for any comparator sequence $(u'_{t})_{t=1}^{T} \subseteq \mathcal{X}_{\alpha}$, we have the inequality:
    \begin{equation*}
        \sum_{t=1}^{T} \langle \hat g_{t},\, y_{t}-u'_{t}\rangle \le\frac{D^{2}+2D\,P_{T}(u'_{1:T})}{2\eta}
        +\eta\sum_{t=1}^{T} \|\hat g_{t}-m_{t}\|^{2},
    \end{equation*}
    where $D$ is the diameter of the domain ($\|\cX_{\alpha}\|\leq\|\cX\|\leq D$) and $P_{T}(u'_{1:T})=\sum_{t=2}^{T}\|u'_{t}-u'_{t-1}\|$ is the path-length of $\{u'_{t}\}_{t=1}^T$.
\end{lemma}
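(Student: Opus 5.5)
The plan is to reuse the per-round analysis from the proof of Lemma~\ref{lem:optimistic-ogd} verbatim, but with a time-varying comparator $u'_t$ in place of the fixed $y^\star$. Applying \eqref{eq:proj-step-ineq} with $(x,g,x^+,z)=(y'_t,m_t,y_t,y'_{t+1})$ and with $(x,g,x^+,z)=(y'_t,\hat g_t,y'_{t+1},u'_t)$ gives two inequalities. Together with the cross-term bound \eqref{eq:cross}, these combine, after the $\|y'_{t+1}-y'_t\|^2$ terms cancel and the other nonpositive terms are dropped, into the per-round inequality
\begin{equation*}
\langle \hat g_t, y_t-u'_t\rangle \le \frac{\|u'_t-y'_t\|^2-\|u'_t-y'_{t+1}\|^2}{2\eta}+\eta\|\hat g_t-m_t\|^2 .
\end{equation*}
This step is identical to the static case, since the comparator only enters through the second projection inequality, which holds for any $z\in\cX_\alpha$.

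Summing over $t$, the first term no longer telescopes directly. I would rewrite it as
\begin{equation*}
\sum_{t=1}^T\bigl(\|u'_t-y'_t\|^2-\|u'_t-y'_{t+1}\|^2\bigr)=\|u'_1-y'_1\|^2-\|u'_T-y'_{T+1}\|^2+\sum_{t=2}^{T}\bigl(\|u'_t-y'_t\|^2-\|u'_{t-1}-y'_t\|^2\bigr).
\end{equation*}
This is done by shifting the index of the negative terms. I then bound each correction term by the difference-of-squares identity
\begin{equation*}
\|a\|^2-\|b\|^2=\langle a-b,a+b\rangle\le \|a-b\|\,(\|a\|+\|b\|),
\end{equation*}
with $a=u'_t-y'_t$ and $b=u'_{t-1}-y'_t$. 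Here $a-b=u'_t-u'_{t-1}$, and $\|a\|,\|b\|\le D$ because all points lie in $\cX_\alpha\subseteq\cX$. Each correction term is therefore at most $2D\|u'_t-u'_{t-1}\|$, so their sum is at most $2DP_T(u'_{1:T})$. For the boundary terms, I bound $\|u'_1-y'_1\|^2\le D^2$ and drop the nonpositive $-\|u'_T-y'_{T+1}\|^2$. Dividing by $2\eta$ and adding $\eta\sum_t\|\hat g_t-m_t\|^2$ yields the claim.

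The main point needing care is the reindexing of the telescoping sum, so that the mismatch pairs $u'_t$ and $u'_{t-1}$ at the \emph{same} iterate $y'_t$. Only then does the difference depend on the comparator movement alone and not on iterate movement. Everything else is inherited from Lemma~\ref{lem:optimistic-ogd}, including feasibility ($y'_t\in\cX_\alpha$ by construction of the projections).
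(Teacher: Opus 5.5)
Your proposal is correct and follows the paper's proof essentially step for step. Both use the per-round optimistic inequality with $u'_t$ in place of $y^\star$, the same reindexing of the potential into boundary terms plus $\sum_{t=2}^T(\|u'_t-y'_t\|^2-\|u'_{t-1}-y'_t\|^2)$, and the same difference-of-squares bound $2D\|u'_t-u'_{t-1}\|$ per correction term; your point that the per-round bound must be taken from inside the proof of Lemma~\ref{lem:optimistic-ogd} (before summing), not from its summed statement with a fixed comparator, is slightly more precise than the paper's citation.
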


\begin{proof}
    For any fixed round $t$ and any fixed comparator $u'_{t} \in \mathcal{X}_{\alpha}$,
    Lemma~\ref{lem:optimistic-ogd} gives us the standard one-step analysis of the optimistic update for round $t$, which yields the following inequality:
    \begin{equation*}
        \langle \hat g_{t},\, y_{t}-u'_{t}\rangle \le\frac{\|u'_{t}-y'_{t}\|^{2}-\|u'_{t}-y'_{t+1}\|^{2}}{2\eta}
        +\eta\|\hat g_{t}-m_{t}\|^{2}.
    \end{equation*}
    Summing both sides of this inequality over all rounds gives us:
    \begin{equation*}
        \sum_{t=1}^{T} \langle \hat g_{t},\, y_{t}-u'_{t}\rangle \le\frac{1}{2\eta}
        \underbrace{\sum_{t=1}^T\Bigl(\|u'_t-y'_t\|^2-\|u'_t-y'_{t+1}\|^2\Bigr)}
        _{\Phi}+\eta\sum_{t=1}^{T}\|\hat g_{t}-m_{t}\|^{2}.
    \end{equation*}
    Now let us compute the potential term $\Phi$. The terms can be rearranged so that $y'_{t+1}$ can be combined:
    \begin{align*}
        \Phi & =\|u'_{1}-y'_{1}\|^{2} - \|u'_{T}-y'_{T+1}\|^{2} + \sum_{t=1}^{T-1}\|u'_{t}-y'_{t+1}\|^{2} - \sum_{t=2}^{T} \|u'_{t}-y'_{t}\|^{2} \\
        & =\|u'_{1}-y'_{1}\|^{2} - \|u'_{T}-y'_{T+1}\|^{2}+ \sum_{t=2}^{T} \Bigl( \|u'_{t}-y'_{t}\|^{2} - \|u'_{t-1}-y'_{t}\|^{2} \Bigr).
    \end{align*}

    Note that $\|u'_{T}-y'_{T+1}\|^{2}\le 0$ is non-positive, so we can omit it from $\Phi$. The upper bound $\|u'_{1}-y'_{1}\|^{2}\le D^{2}$. To compute the summation term, use the equality $\|a\|^{2}-\|b\|^{2}=\langle a-b,\,a+b\rangle$. Thus, for $a=u'_{t}-y'_{t}$ and $b=u'_{t-1}-y'_{t}$, we find that $a-b=u'_{t}-u'_{t-1}$ and $a+b=(u'_{t}-y'_{t})+(u'_{t-1}-y'_{t})$. So we get:
    \begin{align*}
        \|u'_{t}-y'_{t}\|^{2}-\|u'_{t-1}-y'_{t}\|^{2} & =\langle u'_{t}-u'_{t-1},\, (u'_{t}-y'_{t})+(u'_{t-1}-y'_{t})\rangle\\
        & \le\|u'_{t}-u'_{t-1}\| \cdot \Bigl(\|u'_{t}-y'_{t}\|+\|u'_{t-1}-y'_{t}\|\Bigr) \\
        & \le\|u'_{t}-u'_{t-1}\| \cdot (D+D) =2D\|u'_{t}-u'_{t-1}\|.
    \end{align*}
    Using the fact that $u'_{t},u'_{t-1},y'_{t}\in \cX_{\alpha}$, we obtain that the diameter of $\cX_{\alpha}$ is at most $D$. By summing this inequality over $t=2,\ldots,T$, we obtain 
    \begin{equation*}
        \Phi \le D^{2} + 2D \sum_{t=2}^{T} \|u'_{t}-u'_{t-1}\| = D^{2} +
        2D\,P_{T}(u'_{1:T}).
    \end{equation*}
    These complete the proof.
\end{proof}

\begin{lemma}
    \label{lem:epoch-dynamic-optimistic} Let $\mathcal{\cX}_{\alpha} \subseteq \mathbb{R}
    ^{d}$ be a convex set with Euclidean diameter at most $D$. We consider the
    optimistic updates applied to $\mathcal{X}_{\alpha}$:
    \begin{equation*}
        y_{t} = \Pi_{\mathcal{X}_\alpha}(y'_{t} - \eta m_{t}), \qquad y'_{t+1}= \Pi
        _{\mathcal{X}_\alpha}(y'_{t} - \eta \hat g_{t}),
    \end{equation*}
    where $\eta>0$. For any choice of a comparator sequence $\{u_t'\}_{t \in J} \subseteq \mathcal{X}_{\alpha}$ with respect to index set $J$,
    \begin{equation*}
        \sum_{t \in J}\langle \hat g_{t}, y_{t} - u'_{t}\rangle \le\frac{\|u'_{t_0}-y'_{t_0}\|^{2}}{2\eta}
        +\eta\sum_{t \in J}\|\hat g_{t}-m_{t}\|^{2} +\frac{D}{\eta}\sum_{t\in J,
        t>t_0}\|u'_{t}-u'_{t-1}\|,
    \end{equation*}
    where $t_{0} = \min J$.
\end{lemma}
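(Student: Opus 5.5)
The argument is the windowed version of Lemma~\ref{lem:dynamic_optimistic_ineq}. It rests on the per-round inequality established inside the proof of Lemma~\ref{lem:optimistic-ogd}. For every round $t$ and every fixed $z\in\cX_\alpha$,
\begin{equation*}
\langle \hat g_t, y_t - z\rangle \le \frac{\|z-y'_t\|^2-\|z-y'_{t+1}\|^2}{2\eta}+\eta\|\hat g_t-m_t\|^2 .
\end{equation*}
This follows by combining \eqref{eq:step1}, \eqref{eq:step2} and \eqref{eq:cross} and dropping the nonpositive terms. Note that it uses only $y'_t\in\cX_\alpha$, not any global initialization, so it holds for each $t\in J$ with $z=u'_t$.

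I will take $J$ to be a contiguous block of rounds $\{t_0,\dots,t_1\}$, as it is for an epoch, with the updates running consecutively inside it. Summing the per-round inequality over $t\in J$ gives
\begin{equation*}
\sum_{t\in J}\langle \hat g_t,y_t-u'_t\rangle \le \frac{1}{2\eta}\Phi_J + \eta\sum_{t\in J}\|\hat g_t-m_t\|^2,
\qquad
\Phi_J=\sum_{t=t_0}^{t_1}\Bigl(\|u'_t-y'_t\|^2-\|u'_t-y'_{t+1}\|^2\Bigr).
\end{equation*}

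I then reindex $\Phi_J$ exactly as in the proof of Lemma~\ref{lem:dynamic_optimistic_ineq}:
\begin{equation*}
\Phi_J=\|u'_{t_0}-y'_{t_0}\|^2-\|u'_{t_1}-y'_{t_1+1}\|^2+\sum_{t=t_0+1}^{t_1}\Bigl(\|u'_t-y'_t\|^2-\|u'_{t-1}-y'_t\|^2\Bigr).
\end{equation*}
The second term is nonpositive and is dropped. For each summand I use $\|a\|^2-\|b\|^2=\langle a-b,a+b\rangle$ with Cauchy--Schwarz, together with the fact that $\cX_\alpha$ has diameter at most $D$. This bounds each summand by $2D\|u'_t-u'_{t-1}\|$.

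Dividing by $2\eta$ turns the sum into $\frac{D}{\eta}\sum_{t\in J,t>t_0}\|u'_t-u'_{t-1}\|$. Unlike the global lemma, I keep the leading term as $\|u'_{t_0}-y'_{t_0}\|^2/(2\eta)$ rather than bounding it by $D^2$, because that is the form the statement asks for.

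The only delicate point is the bookkeeping at the epoch boundary. At the start of an epoch in \tpvrplus\ or \tpvrplusplus, $y'_{t_0}$ is obtained by projecting onto a possibly new set $\cX_\alpha$. The inequality must therefore be applied with the fixed $\alpha$ and $\eta$ of that epoch, and with $y'_{t_0}$ taken to be the post-projection iterate, which lies in $\cX_\alpha$. The comparators $u'_t$ must also lie in that same $\cX_\alpha$. Once this is fixed, the telescoping is internal to $J$, and no terms leak across the boundary of $J$. If $J$ were not contiguous, the telescoping would break, so I will state the contiguity assumption explicitly.
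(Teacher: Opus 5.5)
Your proposal is correct and follows the paper's own route: the paper states that the proof is that of Lemma~\ref{lem:dynamic_optimistic_ineq} with the sum restricted to $J$, which is exactly your argument (per-round inequality, telescoping over $J$, bounding each cross-boundary difference by $2D\|u'_t-u'_{t-1}\|$, and keeping the initial term $\|u'_{t_0}-y'_{t_0}\|^2/(2\eta)$ instead of bounding it by $D^2$). Your explicit assumptions — that $J$ is contiguous and that $\eta$ and $\cX_\alpha$ are fixed on $J$ — are already implicit in the statement, since $u'_{t-1}$ must be defined for every $t\in J$ with $t>t_0$, and they hold for the epochs and good prefixes where the lemma is applied.
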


\begin{proof}
    The proof is the same as in Lemma~\ref{lem:dynamic_optimistic_ineq}, except that we take our sum over the index set $J$, not over all rounds.
\end{proof}

\begin{definition}[Epoch Path-length]
    For any epoch $(k, e)$ in \tpvr++, let $J_{k, e}$ be the set of the active rounds. The \emph{epoch path-length} $P_{k, e}$ is defined to be the total movement of the comparator sequence across the indices of $J_{k,e}$. 
    \begin{equation*}
        P_{k,e}= \sum_{t \in J_{k,e} \setminus \{\min J_{k,e}\}}\|u_{t} - u_{t-1}\|.
    \end{equation*}
    For the good prefix $J_{k,e}^{\circ}$, we use the same notation $P_{k,e}$ as an upper bound on its path-length, since the path-length on $J_{k,e}^{\circ}$ is no larger than that on $J_{k,e}$.
\end{definition}

\begin{lemma}[Optimistic Hedge Regret]
    \label{lem:optimistic-hedge-geometric} The Optimistic Hedge algorithm is defined on the simplex $\Delta_{N}$ with a prior distribution $w_1$ and a step-size $\epsilon$. The loss vector at time $t$ is denoted by $\ell_{t}$ and the prediction vector at time $t$ is given by $M_{t}$. The regret to any expert $i^{\star}$ at any time $T$ is bounded by:
    \begin{equation*}
        \sum_{t=1}^{T} (\langle p_{t}, \ell_{t} \rangle - \ell_{t,i^\star}) \le \frac{\ln(1/w_{1,i^\star})}{\varepsilon}
        + \frac{\varepsilon}{2} \sum_{t=1}^{T} \|\ell_{t} - M_{t}\|_{\infty}^{2}.
    \end{equation*}
    Specifically, for the setting of Algorithm~\ref{alg:tp-vr-opt-pp} where $\ell_{t,i}
    = \langle \hat{g}_{t}, y_{t,i}\rangle$, and $M_{t,i}= \langle m_{t}, y_{t,i}\rangle$, we have the following:
    \begin{equation*}
        \|\ell_{t} - M_{t}\|_{\infty}^{2} \le D^{2} \|\hat{g}_{t} - m_{t}\|_{2}^{2}.
    \end{equation*}
\end{lemma}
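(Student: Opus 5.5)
The statement has two parts. The first is a standard optimistic FTRL bound on the simplex, and the second is a one-line Cauchy--Schwarz estimate. I would treat them separately.

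\textbf{Optimistic Hedge as optimistic FTRL.} I would first rewrite the update of Algorithm~\ref{alg:tp-vr-opt-pp} in closed form. After $t-1$ rounds the weights are $w_{t,i}=w_{1,i}\exp(-\varepsilon\sum_{s<t}\ell_{s,i})$, and the played distribution is
\begin{equation*}
p_t\propto w_{1,i}\exp\Bigl(-\varepsilon\bigl(\textstyle\sum_{s<t}\ell_{s,i}+M_{t,i}\bigr)\Bigr),
\end{equation*}
with $M_{t,i}=\hat\ell_{t,i}$. This is exactly optimistic FTRL on $\Delta_N$ with regularizer $\psi(p)=\frac1\varepsilon\mathrm{KL}(p\,\|\,w_1)$, cumulative losses $L_{t-1}=\sum_{s<t}\ell_s$, and hint $M_t$:
\begin{equation*}
p_t=\argmin_{p\in\Delta_N}\ \langle L_{t-1}+M_t,p\rangle+\psi(p).
\end{equation*}
Introduce the auxiliary non-optimistic iterate $q_{t+1}=\argmin_p\langle L_t,p\rangle+\psi(p)$. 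By Pinsker's inequality, $\psi$ is $\frac1\varepsilon$-strongly convex with respect to $\|\cdot\|_1$.

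\textbf{Standard optimistic FTRL lemma (be-the-leader plus strong convexity).} For any $u\in\Delta_N$, this lemma gives
\begin{equation*}
\sum_{t=1}^T\langle p_t-u,\ell_t\rangle\le \psi(u)-\min\psi+\sum_{t=1}^T\Bigl(\langle p_t-q_{t+1},\ell_t-M_t\rangle-\tfrac{1}{2\varepsilon}\|p_t-q_{t+1}\|_1^2\Bigr).
\end{equation*}
The two ingredients are the following:
\begin{itemize}
\item The stability of a strongly convex minimization under the perturbation $\ell_t-M_t$, which is what links $p_t$ and $q_{t+1}$.
\item A telescoping of the FTRL potential.
\end{itemize}

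\textbf{Main obstacle.} The only delicate step is tracking the correct constant so that the negative $-\frac{1}{2\varepsilon}\|\cdot\|_1^2$ term is retained. Keeping it is what yields $\varepsilon/2$ rather than $\varepsilon$. I would follow the Rakhlin--Sridharan or Orabona treatment of optimistic FTRL here.

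\textbf{Finishing the first inequality.} Hölder's inequality and Young's inequality bound each summand by
\begin{equation*}
\|p_t-q_{t+1}\|_1\|\ell_t-M_t\|_\infty-\tfrac{1}{2\varepsilon}\|p_t-q_{t+1}\|_1^2\le\tfrac{\varepsilon}{2}\|\ell_t-M_t\|_\infty^2 .
\end{equation*}
For the regularizer term, take $u=e_{i^\star}$. Since $\min\psi=0$ at $p=w_1$, we get $\psi(e_{i^\star})-\min\psi=\frac1\varepsilon\ln(1/w_{1,i^\star})$. This gives the first inequality. It holds at every $T$, because nothing depends on the horizon.

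\textbf{Second inequality.} For each $i$,
\begin{equation*}
|\ell_{t,i}-M_{t,i}|=|\langle\hat g_t-m_t,y_{t,i}\rangle|\le\|\hat g_t-m_t\|_2\,\|y_{t,i}\|_2 .
\end{equation*}
Since $y_{t,i}\in\cX_\alpha\subseteq\cX$ and $0\in\cX$, we have $\|y_{t,i}\|=\|y_{t,i}-0\|\le D$. Taking the maximum over $i$ and squaring gives $\|\ell_t-M_t\|_\infty^2\le D^2\|\hat g_t-m_t\|_2^2$.
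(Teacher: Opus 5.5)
Your proof is correct, but it reaches the first inequality by a different route than the paper.

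\textbf{The paper's route.} The paper treats the meta-learner as optimistic mirror descent with two Bregman steps from the current weights, $p_t=\argmin_p\{\langle M_t,p\rangle+\frac1\varepsilon D_\psi(p,w_t)\}$ and $w_{t+1}=\argmin_w\{\langle \ell_t,w\rangle+\frac1\varepsilon D_\psi(w,w_t)\}$. It applies the three-point inequality once for $w_{t+1}$ against $e_{i^\star}$ and once for $p_t$ against $w_{t+1}$, cancels $D_\psi(w_{t+1},w_t)$, and telescopes $D_\psi(e_{i^\star},w_t)$. The leftover $-\frac1\varepsilon D_\psi(w_{t+1},p_t)$ becomes $-\frac1{2\varepsilon}\|w_{t+1}-p_t\|_1^2$ by Pinsker.

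\textbf{Your route.} You use the closed form of the iterates to recognize optimistic FTRL with the fixed regularizer $\frac1\varepsilon\mathrm{KL}(\cdot\,\|\,w_1)$, and then run a global-potential (be-the-leader) argument. The lemma you quote holds exactly as you state it. With $\Phi_t=\psi+\langle L_{t-1},\cdot\rangle$, test the optimality of $p_t$ at $q_{t+1}$ and use $\Phi_t(p_t)\ge\Phi_t(q_t)$. This gives $\langle M_t,p_t-q_{t+1}\rangle+\langle \ell_t,q_{t+1}\rangle\le\Phi_{t+1}(q_{t+1})-\Phi_t(q_t)-\frac1{2\varepsilon}\|p_t-q_{t+1}\|_1^2$, which telescopes to $\Phi_{T+1}(u)-\min\psi$. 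So the $\frac1{2\varepsilon}$ constant you were worried about is right.

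\textbf{Comparison.} Entropic OMD and FTRL coincide on the simplex, so your $q_{t+1}$ is exactly the paper's $w_{t+1}$. The final H\"older--Young step is therefore literally the same in both proofs, and the second inequality is handled identically. Your version makes the prior term appear directly as $\psi(e_{i^\star})-\min\psi$, with no KL telescoping to track. It also extends cleanly to nonincreasing step sizes $\varepsilon_t$, where KL-based OMD telescoping becomes awkward. The paper's version instead gives a purely local per-round inequality that needs no closed form. It also mirrors the primal three-point analysis the paper already uses for the expert-level optimistic projected-gradient updates.
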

\begin{proof}
    We note that the meta-algorithm is an example of Optimistic Mirror Descent (OMD) employing the regularizer $\psi(w)=\sum_{i=1}^{N} w_{i} \ln w_{i}$ on the probability simplex $\Delta_{N}$. The Bregman Divergence associated with this will be the KL Divergence, defined as $D_{\psi}(p,q) = \sum_{i=1}^{N} p_{i} \ln(p_{i}/q_{i})$. We can interpret the computations performed by the algorithm as an equivalent two-step OMD process: First, we predict 
    \begin{equation*}
    p_{t} = \argmin_{p \in \Delta_N}\{ \langle M_{t}
    , p \rangle + \frac{1}{\varepsilon}D_{\psi}(p, w_{t}) \}
    \end{equation*}
    and then update $w_{t+1}= \argmin_{w \in \Delta_N}\{ \langle \ell_{t}, w \rangle + \frac{1}{\varepsilon}
    D_{\psi}(w, w_{t}) \}$.

    The regret of the optimistic meta-algorithm at time $t$ is given by $\langle p_{t} - e_{i^\star},\ell_{t} \rangle$, where $e_{i^\star} \in \Delta_{N}$ is the standard basis vector corresponding to expert $i^\star$. We can decompose the meta regret as:  
    \begin{equation*}
        \langle \ell_{t}, p_{t} - e_{i^\star} \rangle = \langle \ell_{t}, p_{t} - w_{t+1}\rangle + \langle \ell_{t}, w_{t+1}- e_{i^\star} \rangle.
    \end{equation*} 
    The optimality condition for $w_{t+1}$ leads to 
    \begin{equation*}
        \langle \ell_{t}, w_{t+1}- e_{i^\star} \rangle \le \frac{1}{\varepsilon}(D_{\psi}(e_{i^\star}, w_{t}) - D_{\psi}(e_{i^\star}, w_{t+1}) - D_{\psi}(w_{t+1}, w_{t})).
    \end{equation*} 
    Plugging this term into the decomposition allow us to conclude that
    \begin{equation*}
        \langle p_{t} - e_{i^\star}, \ell_{t} \rangle \le \frac{D_{\psi}(e_{i^\star}, w_{t}) - D_{\psi}(e_{i^\star},
        w_{t+1})}{\varepsilon}+ \langle p_{t} - w_{t+1}, \ell_{t} - M_{t} \rangle
        + \langle p_{t} - w_{t+1}, M_{t} \rangle - \frac{1}{\varepsilon}D_{\psi}(
        w_{t+1}, w_{t}).
    \end{equation*}
    Similarly, the optimality condition for the prediction $p_{t}$ gives 
    \begin{equation*}
        \langle p_{t} - w_{t+1}, M_{t} \rangle \le \frac{1}{\varepsilon}(D_{\psi}(w_{t+1}, w_{t}) - D
    _{\psi}(w_{t+1}, p_{t}) - D_{\psi}(p_{t}, w_{t})).
    \end{equation*}
    Then, by canceling $\frac{1}{\varepsilon}D_{\psi}(w_{t+1}, w_{t})$ and dropping the non-positive term $-\frac{1}{\varepsilon}D_{\psi}(p_{t}, w_{t})$ we have
    \begin{equation*}
        \langle p_{t} - e_{i^\star}, \ell_{t} \rangle \le \frac{D_{\psi}(e_{i^\star}, w_{t}) - D_{\psi}(e_{i^\star},
        w_{t+1})}{\varepsilon}+ \langle p_{t} - w_{t+1}, \ell_{t} - M_{t} \rangle
        - \frac{1}{\varepsilon}D_{\psi}(w_{t+1}, p_{t}).
    \end{equation*}
    Next, we bound the two terms   
    \begin{equation*}
    \langle p_{t} - w_{t+1}, \ell_{t} - M_{t} \rangle - \frac{1}{\varepsilon}D_{\psi}(w_{t+1}, p_{t}).
    \end{equation*}

    The regularizer $\psi$ is $1$-strongly convex with respect to the $\ell_{1}$-norm on the simplex and by Pinsker's inequality (Lemma~\ref{lem:pinsker}), we have  
    \begin{equation*}
    D_{\psi}(w_{t+1}, p_{t}) \ge \frac{1}{2}\|w_{t+1}- p_{t}\|_{1}^{2}.
    \end{equation*} 
    By Holder's inequality, we have:
    \begin{equation*}
    \langle p_{t} - w_{t+1}, \ell_{t} - M_{t} \rangle \le \|p_{t} - w_{t+1}\|_{1} \|\ell_{t} - M_{t}\|_{\infty}.
    \end{equation*}
    Using the Young's inequality $ab \le \frac{a^{2}}{2\lambda}+ \frac{\lambda b^{2}}{2}$ with $a=\|p_{t} - w_{t+1}\|_{1}$,
    $b=\|\ell_{t} - M_{t}\|_{\infty}$, and $\lambda = \varepsilon$, we obtain:
    \begin{equation*}
        \langle p_{t} - w_{t+1}, \ell_{t} - M_{t} \rangle \le \frac{1}{2\varepsilon}
        \|p_{t} - w_{t+1}\|_{1}^{2} + \frac{\varepsilon}{2}\|\ell_{t} - M_{t}\|_{\infty}
        ^{2}.
    \end{equation*}
    Substituting this inequality for $\langle p_{t}-w_{t+1},\ell_{t}-M_{t}\rangle$ into the decomposition of the regret yields the following by canceling the non-positive term $\frac{1}{2\varepsilon}\|p_{t} - w_{t+1}\|_{1}^{2}$ as:
    \begin{equation*}
        \langle p_{t} - u, \ell_{t} \rangle \le \frac{D_{\psi}(u, w_{t}) - D_{\psi}(u,
        w_{t+1})}{\varepsilon}+ \frac{\varepsilon}{2} \|\ell_{t} - M_{t}\|_{\infty}^{2}.
    \end{equation*}
    When sum over $t=1,\dots,T$, the first term gives rise to a telescoping sum:
    \begin{equation*}
    \frac{1}{\varepsilon}(D_{\psi}(u, w_{1}) - D_{\psi}(u, w_{T+1})) \le \frac{D_{\psi}(u,
    w_{1})}{\varepsilon}.
    \end{equation*} 
    
    Since $D_{\psi}(e_{i^\star}, w_{1}) = \ln(1/w_{1,i^\star})$, we
    establish the first inequality of the lemma, where $\langle \ell_{t}, p_{t} - e_{i^\star} \rangle=\langle p_{t}, \ell_{t} \rangle - \ell_{t,i^\star}$.

    Next, we further bound  $\|\ell_{t} - M_{t}\|_{\infty}^{2}$. By definition,
    \begin{equation*}
        \ell_{t,i}= \langle \hat{g}_{t}, y_{t,i}\rangle, \qquad
    M_{t,i}= \langle m_{t}, y_{t,i}\rangle.
    \end{equation*}

    The $\ell_{\infty}$-norm is therefore $\|\ell_{t}-M_{t}\|_{\infty} = \max_{i}|\langle \hat{g}_{t}-m_{t}, y_{t,i}\rangle|$. Using Cauchy-Schwarz inequality gives $|\langle \hat{g}_{t}-m_{t},y_{t,i}\rangle| \le \|\hat{g}_{t}-m_{t}\|_{2}\|y_{t,i}\|_{2}$. Since $0 \in \cX_{\alpha}$ and all the expert instances $y_{t,i}$ lies in $\mathcal{X}_{\alpha}$, thus $\|y_{t,i}\|_{2} \le D$ for all $i$. We see consequently that for all time horizons $t$, $\|\ell_{t}-M_{t}\|_{\infty} \le D\|\hat{g}_{t}-m_{t}\|_{2}$. Therefore, by squaring our result, we have $\|\ell_{t} - M_{t}\|_{\infty}^{2} \le D^{2}\|\hat{g}_{t}-m_{t}\|_{2}^{2}$. These complete the proof.
\end{proof}

\begin{lemma}[Meta-Expert Epoch Regret]
\label{lem:dyn-epoch}
Fix an epoch $(k,e)$, where we write $J_{k,e}$ to denote the time interval associated with an epoch, according to the Meta-Expert algorithm in Algorithm~\ref{alg:tp-vr-opt-pp}. Let $R_{k,e} = \sum_{t \in J_{k,e}} \|\hat{g}_t - m_t\|^2$ be the total prediction error for that epoch. We also introduce $P_{k,e}$ as the length of the epoch path and denote the sensitivity budget for this epoch by $S_{k,e}$. 
The expected dynamic epoch regret can be bounded as follows:
\begin{align*}
\mathbb{E}\bigl[\regret^{\mathrm{dyn}}(J_{k,e})\bigr]
&\le 4 D \sqrt{8d S_{k,e} \ln N} + \frac{3}{\sqrt{2}}\sqrt{R_{k,e}^{\circ}\cdot(D^2 + 2D P_{k,e})} \\&+ 3D+\frac{D}{2}\sqrt{S_{k,e}}+\left(L+\frac{LD}{r}\right)\frac{\sqrt{S_{k,e}}}{d\beta}+LD.
\end{align*}
\end{lemma}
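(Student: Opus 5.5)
We follow the template of Lemma~\ref{lem:epoch-regret-anytime}, adding a meta-expert layer. Split $J_{k,e}=J_{k,e}^{\circ}\cup O_{k,e}$ as in Lemma~\ref{lem:oneshot-residual-anytime}. On the single overflow round in $O_{k,e}$ we bound the instantaneous dynamic regret crudely by $LD$. On $J_{k,e}^{\circ}$ we first apply the perturbation/shrinkage reduction of Lemma~\ref{lem:regret-decomposition} roundwise with comparators $u'_t=(1-\alpha)u_t\in\cX_\alpha$. This gives
\[
f_t(x_t)-f_t(u_t)\le f_t(y_t)-f_t(u'_t)+(L+LD/r)\delta .
\]
Convexity and the bias argument from the proof of Theorem~\ref{thm:exp-regret} then reduce the problem to bounding $\mathbb E\sum_{t\in J^{\circ}}\langle \hat g_t, y_t-u'_t\rangle$. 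The bias part contributes $\frac{Dd\beta}{2}\delta|J^\circ|$, and the perturbation part contributes $(L+LD/r)\delta|J^\circ|$. Since $\delta\le \sqrt{S_{k,e}}/(d\beta H)$ and $|J^\circ|\le H$, these yield the terms $\frac D2\sqrt{S_{k,e}}$ and $(L+LD/r)\sqrt{S_{k,e}}/(d\beta)$.

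Next we write $y_t=\sum_i p_{t,i}y_{t,i}$, so $\langle \hat g_t,y_t\rangle=\langle p_t,\ell_t\rangle$, and split for any fixed expert $i$:
\[
\sum_{t}\langle \hat g_t,y_t-u'_t\rangle=\sum_t\bigl(\langle p_t,\ell_t\rangle-\ell_{t,i}\bigr)+\sum_t\langle\hat g_t,y_{t,i}-u'_t\rangle .
\]
The meta term is handled by Lemma~\ref{lem:optimistic-hedge-geometric}. We have $\ln(1/w_{1,i})\le 2\ln(i+1)\lesssim \ln N$ and $\|\ell_t-M_t\|_\infty^2\le D^2\|\hat g_t-m_t\|^2$, so the meta term is at most $\frac{\ln(1/w_{1,i})}{\varepsilon}+\frac{\varepsilon D^2}{2}R^{\circ}_{k,e}$. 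With $\varepsilon=\sqrt{\ln N}/(D\sqrt{8dS})$ and $R^\circ\le 8dS$, both pieces are $O(D\sqrt{8dS\ln N})$, which gives the $4D\sqrt{8dS_{k,e}\ln N}$ term. For the expert term, Lemma~\ref{lem:epoch-dynamic-optimistic} bounds it by
\[
\frac{D^2+2DP_{k,e}}{2\eta_i}+\eta_i R^{\circ}_{k,e}.
\]

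The remaining step is choosing $i$. Define the ideal stepsize $\eta^\star=\sqrt{(D^2+2DP_{k,e})/(2R^\circ_{k,e})}$, clipped to $[\eta_0,\eta_N]$. The grid covers this range because $\eta_0=D/\sqrt{16dS}$, and $\eta_N\ge DH$ is at least any useful stepsize since $P_{k,e}\le DH$. Two cases arise.
\begin{itemize}
\item If $\eta^\star$ lies in the grid range, pick $\eta_i\le\eta^\star\le 2\eta_i$. This gives at most $\tfrac{3}{\sqrt2}\sqrt{R^\circ(D^2+2DP)}$.
\item If $\eta^\star<\eta_0$, use $\eta_1=\eta_0$. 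Then $\eta_0R^\circ\le \eta^\star R^\circ$ is dominated by the same square root.
\item If $\eta^\star>\eta_N$ (e.g.\ $R^\circ$ tiny), use $\eta_N\ge DH$. The term $(D^2+2DP)/(2\eta_N)\le (D^2+2D^2H)/(2DH)\le 3D/2$ is absorbed into the additive $3D$, while $\eta_N R^\circ\le\eta^\star R^\circ$.
\end{itemize}

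A subtlety is that the choice of $i$ depends on the realized $R^\circ$ and $P$. This is acceptable because the pathwise inequalities hold simultaneously for all $i$, so we can minimize over $i$ before taking expectations. The bias term, by contrast, needs a conditional expectation with $y_t-u'_t$ being $\mathcal H_{t-1}$-measurable, which is true.

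The main obstacle is this grid-covering/case analysis, particularly the endpoint cases and the $3D$ constant. A secondary obstacle is making sure the aggregated $y_t$ lies in $\cX_\alpha$ (by convexity) so that the bias and perturbation bounds still apply. Summing all pieces gives the stated bound; the random $R^\circ_{k,e}$ is left inside the expectation as in the statement.
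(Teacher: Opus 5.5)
Your argument follows the paper's proof step for step:
\begin{itemize}
\item the same $J^{\circ}_{k,e}\cup O_{k,e}$ split, with an $LD$ charge on the overflow round;
\item the same shrinkage and bias reduction;
\item the same meta/expert decomposition through Lemma~\ref{lem:optimistic-hedge-geometric} and Lemma~\ref{lem:epoch-dynamic-optimistic};
\item the same geometric-grid case analysis.
\end{itemize}
Write $A_{k,e}=D^2+2DP_{k,e}$. In the top case you bound $\eta_N R^{\circ}_{k,e}\le \eta^\star R^{\circ}_{k,e}$, whereas the paper uses $R^{\circ}_{k,e}<A_{k,e}/(2\eta_N^2)$; either fits the stated bound. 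Your remark that one may minimize over $i$ pathwise before taking expectations is correct, and the paper leaves it implicit.

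The one step that fails is your lower-end bullet. If $\eta^\star<\eta_0$ and you use $\eta_1=\eta_0$, then $\eta_0R^{\circ}_{k,e}>\eta^\star R^{\circ}_{k,e}$, so the inequality you wrote is reversed. Only the $A_{k,e}/(2\eta_0)$ half is controlled; the variance half is not dominated by the square root. What is missing is the observation the paper makes explicitly: this case is empty. By Lemma~\ref{lem:oneshot-residual-anytime}, $R^{\circ}_{k,e}\le 8dS_{k,e}$, and $A_{k,e}\ge D^2$. Hence $\eta^\star=\sqrt{A_{k,e}/(2R^{\circ}_{k,e})}\ge D/\sqrt{16dS_{k,e}}=\eta_0$, which is exactly what the choice of $\eta_0$ is designed for. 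Replace the bullet by this remark and the rest of your argument goes through.

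A smaller caution, which applies equally to the paper's proof, concerns your measurability claim. It is true that $y_t-u'_t$ is $\mathcal H_{t-1}$-measurable. The indicator $\mathbf 1\{t\in J^{\circ}_{k,e}\}$, however, is not. Whether $t$ is the overflow round depends on $\|\hat g_t-m_t\|^2$, and hence on $v_t$. The tower-property bias bound is therefore valid on $J_{k,e}$, whose membership is determined by $\mathcal H_{t-1}$, but not directly on $J^{\circ}_{k,e}$. Passing between the two costs the linearized bias of the single overflow round $\tau$, which is at most $D\bigl(\|g^y_\tau-m_\tau\|+\|\hat g_\tau-m_\tau\|\bigr)\le 2(d+1)LD$. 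This adds an $O(dLD)$ term to the stated bound but does not change your approach.
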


\begin{proof}
The proof follows from Lemma~\ref{lem:epoch-regret-anytime}. If the epoch terminates because the phase time budget is reached, define $O_{k,e}=\emptyset, J_{k,e}^{\circ}=J_{k,e}$.
Thus $O_{k,e}$ contains at most one round, and $J_{k,e}^{\circ}$ is the
good prefix of the epoch. Define the good-prefix observable residual as
$R_{k,e}^{\circ}=\sum_{t\in J_{k,e}^{\circ}}\|\hat g_t-m_t\|^2$. We decompose as follows:
\begin{equation*}
    \regret^{\mathrm{dyn}}_{T}(J_{k,e})=\regret^{\mathrm{dyn}}_{T}(J_{k,e}^{\circ})+\regret^{\mathrm{dyn}}_{T}(O_{k,e}).
\end{equation*}

The $\regret^{\mathrm{dyn}}_{T}(O_{k,e})$ part can be bounded as
\begin{equation*}
    \regret^{\mathrm{dyn}}_{T}(O_{k,e})\leq LD.
\end{equation*}
Then, we consider the $\regret^{\mathrm{dyn}}_{T}(J_{k,e}^{\circ})$ part as follows:
\begin{equation*}
 \regret^{\mathrm{dyn}}_{T}(J_{k,e}^{\circ}) \le\sum_{t \in J_{k,e}^{\circ}} \langle \hat{g}_t, y_t - u'_t \rangle+ \Bigl(\frac{Dd\beta}{2}+L+\frac{LD}{r}\Bigr)\delta |J_{k,e}^{\circ}|.
\end{equation*}
The choice of $\delta$ that is selected for \tpvrplusplus\, is identical to the that selected for \tpvrplus, and thus we have:
\begin{equation*}
    \Bigl(\frac{Dd\beta}{2}+L+\frac{LD}{r}\Bigr)\delta |J_{k,e}^{\circ}|\leq \frac{D}{2}\sqrt{S_{k,e}}+\left(L+\frac{LD}{r}\right)\frac{\sqrt{S_{k,e}}}{d\beta}
\end{equation*}

Using the general form of $\ell_t(y_t)=\langle \hat{g}_t, y_t\rangle$ for use in \tpvrplusplus, we can further decompose the $\sum_{t \in J_{k,e}^{\circ}} \langle \hat{g}_t, y_t - u'_t \rangle$ term into Meta-Regret (A) and Best Expert Regret (B):
\begin{equation*}
\sum_{t \in J_{k,e}^{\circ}} \langle \hat{g}_t, y_t - u'_t \rangle
= \underbrace{\sum_{t \in J_{k,e}^{\circ}} (\ell_t(y_t) - \ell_t(y_{t, i^\star}))}_{\text{(A)}} + \underbrace{\sum_{t \in J_{k,e}^{\circ}} (\ell_t(y_{t, i^\star}) - \ell_t(u'_t))}_{\text{(B)}},
\end{equation*}
where $i^\star$ is the index of the best expert in the grid.

\textbf{Term (A) Meta Regret:}
The algorithm uses the prior $w_{1,i} \propto \frac{1}{i(i+1)}$ and step size $\varepsilon=\frac{\sqrt{\ln N}}{D\sqrt{V_{\max}}}$, where $V_{\max}=8dS_{k,e}$. For any expert $i^\star \le N$, we know that the prior satisfies $\ln(1/w_{1,i^\star}) \le 2\ln(N+1)$. As a consequence, by applying the bound given as Lemma~\ref{lem:optimistic-hedge-geometric}, we have:
\begin{equation*}
\text{Term (A)} \le \frac{2\ln(N+1)}{\varepsilon} + \frac{\varepsilon}{2} D^2 R_{k,e}^{\circ}.
\end{equation*}
Substituting $\varepsilon = \frac{\sqrt{\ln N}}{D\sqrt{V_{\max}}}$:
\begin{align*}
\text{Term (A)} &= \frac{2\ln(N+1) D \sqrt{V_{\max}}}{\sqrt{\ln N}} + \frac{\sqrt{\ln N}}{2 D \sqrt{V_{\max}}} D^2 R_{k,e}^{\circ}
\\&= D \sqrt{V_{\max}} \left( \frac{2\ln(N+1)}{\sqrt{\ln N}} + \frac{R_{k,e}^{\circ} \sqrt{\ln N}}{2 V_{\max}} \right).
\end{align*}

We also note that epochs terminate if the accumulated error is greater than $8dS_{k,e}$. As demonstrated in Lemma~\ref{lem:oneshot-residual-anytime}, we have 
\begin{equation*}
R_{k,e}^{\circ} \le 8dS_{k,e} = V_{\max}.
\end{equation*}
Thus, the fraction $\frac{R_{k,e}^{\circ}}{V_{\max}} \le 1$. Furthermore, since $N \ge 2$, $\ln(N+1)\le 1.6\ln N$, therefore $\frac{2\ln(N+1)}{\sqrt{\ln N}}\le 3.2\sqrt{\ln N}$.
Thus, we combine these:
\begin{equation*}
\text{Term (A)} \le D \sqrt{V_{\max}} \left( 3.2\sqrt{\ln N} + 0.5\sqrt{\ln N} \right)
= 4 D \sqrt{V_{\max} \ln N}.
\end{equation*}
Finally, by the definition of $V_{\max}$, we know:
\begin{equation*}
\text{Term (A)} \le 4 D \sqrt{8dS_{k,e} \ln N}.
\end{equation*}

\textbf{Term (B) Expert Regret:}
Let $A_{k,e}=D^{2}+2DP_{k,e}$. By Lemma~\ref{lem:epoch-dynamic-optimistic}, we can bound the regret of any specific expert $i$ by $R_{i}(\eta_{i})\le\frac{A_{k,e}}{2\eta_{i}}+\eta_{i}R_{k,e}^{\circ}$.
The optimal step size is defined by $\eta^{\star}=\sqrt{\frac{A_{k,e}}{2R_{k,e}}}$.

Let us analyze cases, but first verify that our grid indicates that we have covered the region for the optimal step size from below.
We have seen that the termination conditions of epochs are governed by the alteration schedule being sufficient to exceed the error budget, that is $R_{k,e}^{\circ}\le V_{\max}$. Thus, we can also observe that $A_{k,e}\ge D^{2}$, which implies that the optimal step size is lower-bounded by:
\begin{equation*}
    \eta^\star = \sqrt{\frac{A_{k,e}}{2R_{k,e}^{\circ}}} \ge \sqrt{\frac{D^2}{2 V_{\max}}} = \eta_0.
\end{equation*}
Therefore, we can begin by analyzing two cases based on the upper bound of our grid $\eta_{\max}$.

\begin{itemize}
    \item \textbf{Case 1 (Inside Grid Range, $\eta_0 \le \eta^\star \le \eta_{\max}$):}
     As the grid is geometric with ratio 2, $\eta_{i}=\eta_{0}2^{i-1}$, we know that there exists an expert $i^\star \in \{1,\ldots,N\}$ such that $\eta_{i^\star}\le\eta^{\star}\le2\eta_{i^\star}$. For this specific expert, we have that:
    \begin{equation*}
    \text{Term (B)} \le \frac{A_{k,e}}{2\eta_{i^\star}} + \eta_{i^\star} R_{k,e}^{\circ}
    \le \frac{A_{k,e}}{2(\eta^\star/2)} + \eta^\star R_{k,e}^{\circ}
    = \frac{A_{k,e}}{\eta^\star} + \eta^\star R_{k,e}^{\circ}.
    \end{equation*}
    Substituting $\eta^\star = \sqrt{\frac{A_{k,e}}{2R_{k,e}^{\circ}}}$ gives:
    \begin{equation*}
    \text{Term (B)} \le \sqrt{2 A_{k,e} R_{k,e}^{\circ}} + \sqrt{\frac{A_{k,e} R_{k,e}^{\circ}}{2}} 
    = \left(\sqrt{2} + \frac{1}{\sqrt{2}}\right)\sqrt{A_{k,e} R_{k,e}^{\circ}} 
    = \frac{3}{\sqrt{2}}\sqrt{A_{k,e} R_{k,e}^{\circ}}.
    \end{equation*}

    \item \textbf{Case 2 (Exceeds Grid Range, $\eta^\star > \eta_{\max}$):}
    The condition $\eta^{\star}>\eta_{\max}$ indicates $\sqrt{\frac{A_{k,e}}{2R_{k,e}^{\circ}}} > \eta_{\max}$, which implies $R_{k,e}^{\circ} < \frac{A_{k,e}}{2\eta_{\max}^2}$.
    In this case, we consider the largest expert $i=N$ having step size $\eta_{\max}$. Term (B) for this expert becomes:
    \begin{equation*}
    \text{Term (B)} \le \frac{A_{k,e}}{2\eta_{\max}} + \eta_{\max} R_{k,e}^{\circ}
    < \frac{A_{k,e}}{2\eta_{\max}} + \eta_{\max} \left( \frac{A_{k,e}}{2\eta_{\max}^2} \right)
    = \frac{A_{k,e}}{\eta_{\max}}.
    \end{equation*}
    We now utilize our specific choice of $N = \lceil \log_2(DH/\eta_0) \rceil + 1$. This construction guarantees:
    \begin{equation*}
    \eta_{\max} = \eta_0 2^{N-1} \ge \eta_0 2^{\log_2(DH/\eta_0)} = DH.
    \end{equation*}
    Given that $H$ can be thought of as representing the total number of steps associated with the worst-case charging of the path length, then we have that $P_{k,e}\leq HD$. Combining this with the fact that $\eta_{\max}\ge H$, we have:
    \begin{equation*}
    \text{Term (B)} < \frac{D^2 + 2D P_{k,e}}{DH} 
    \le \frac{D^2 + 2D(HD)}{DH} 
    = \frac{D}{H} + 2D 
    \le 3D.
    \end{equation*}
\end{itemize}

Combining the two components gives us our overall regret, therefore completing the proof.
\end{proof}

\begin{lemma}[Phase Regret Aggregation]
\label{lem:dyn-phase}
Let $k$ be the phase, and $I_k$ be the corresponding round sequence, which can be partitioned into $J_{k,1}, J_{k,2}, \dots , J_{k,E_k}$ through the doubling condition of the prediction-sensitivity budget being given by $S_{k,e}=S_{\min}\cdot2^{e-1}$ with $e=1,2,\dotsc,E_k$. The phase-wise cumulative expected prediction error is denoted as $\bar S_k$, and the total path length taken by phase $k$ is denoted as $P_k$. The expected dynamic regret for phase $k$ is bounded by:

\begin{equation*}
\mathbb{E}\left[\regret^{\mathrm{dyn}}(I_k)\right]
\le\widetilde{O}\!\left(\sqrt{d\bar S_k\,(D^2 + D P_k)}\right).
\end{equation*}
\end{lemma}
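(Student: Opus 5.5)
The plan is to sum the epoch bound of Lemma~\ref{lem:dyn-epoch} over $e=1,\dots,E_k$ and then control the random quantities $S_{k,E_k}$, $R^{\circ}_{k,e}$, $E_k$ and $N$ in expectation. The argument mirrors the static case, Lemma~\ref{lem:phase-regret}.

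\textbf{Step 1: pathwise sum over epochs.} Summing Lemma~\ref{lem:dyn-epoch} over the epochs of phase $k$ gives a sum of three kinds of terms.
\begin{itemize}
\item The $\sqrt{S_{k,e}}$-type terms, namely $4D\sqrt{8dS_{k,e}\ln N}$, $\frac D2\sqrt{S_{k,e}}$ and $(L+LD/r)\sqrt{S_{k,e}}/(d\beta)$. By Lemma~\ref{lem:sum-epochs-in-phase} these sum to at most a constant times $\sqrt{S_{k,E_k}}$, multiplied by the same coefficients. Here $N=O(\log(DH_k\sqrt{dS}/D))=O(\log(dH_kS_{k,E_k}))$ is only logarithmic, so it can be absorbed into $\widetilde O$.
\item The constant terms $3D+LD$ per epoch. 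These contribute $(3D+LD)E_k$.
\item The cross terms $\frac{3}{\sqrt2}\sqrt{R^{\circ}_{k,e}(D^2+2DP_{k,e})}$.
\end{itemize}
For the cross terms I apply Cauchy--Schwarz across epochs:
\[
\sum_e\sqrt{R^{\circ}_{k,e}(D^2+2DP_{k,e})}\le\sqrt{\Big(\sum_e R^{\circ}_{k,e}\Big)\Big(E_kD^2+2D\sum_eP_{k,e}\Big)}\le\sqrt{R^{\mathrm{obs}}_k\,(E_kD^2+2DP_k)}.
\]
This uses $\sum_eP_{k,e}\le P_k$, which holds because the epochs are disjoint consecutive blocks of $I_k$, and $\sum_e R^{\circ}_{k,e}\le R^{\mathrm{obs}}_k$. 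Since $E_k\le 1+\log_2(S_{k,E_k}/S_{\min})$ is logarithmic, I bound $E_kD^2+2DP_k\le E_k(D^2+2DP_k)$.

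\textbf{Step 2: expectations.} I take expectations and apply Jensen to the concave square roots, treating the $\log$ factors $E_k$ and $\ln N$ as bounded by a deterministic $O(\log(dH_k(\bar S_k+S_{\min})))$, or crudely $O(\log(dT))$.
\begin{itemize}
\item For the $\sqrt{S_{k,E_k}}$ terms, Lemma~\ref{lem:phase-residual-vs-Sbar} gives $\mathbb E[S_{k,E_k}]\le \frac87\bar S_k+\frac{16}7S_{\min}$.
\item For the cross term, the same lemma gives $\mathbb E[R^{\mathrm{obs}}_k]\le\frac{64}{7}d\bar S_k+\frac{72}{7}S_{\min}$. Since the comparator is fixed, $P_k$ is deterministic, so $\mathbb E\sqrt{R^{\mathrm{obs}}_k(D^2+2DP_k)}\le\sqrt{\mathbb E[R^{\mathrm{obs}}_k](D^2+2DP_k)}$.
\end{itemize}
Collecting terms, the cross term is $\widetilde O(\sqrt{d(\bar S_k+S_{\min})(D^2+DP_k)})$. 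The $\sqrt{S_{k,E_k}}$ terms are $\widetilde O(D\sqrt{d(\bar S_k+S_{\min})})$, which is dominated by the cross term because $D^2\le D^2+DP_k$. The additive $LD\,E_k$ is logarithmic. With $S_{\min}$ a constant, as in Lemma~\ref{lem:phase-regret}, this yields the stated $\widetilde O(\sqrt{d\bar S_k(D^2+DP_k)})$, with the additive lower-order and logarithmic terms hidden.

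\textbf{Main obstacle.} The delicate point is handling the randomness of $E_k$ and $N$ multiplying random square roots. I would first bound them deterministically. $N$ depends on $S$ only through $\eta_0$, and $S\le S_{\min}+R^{\mathrm{obs}}_k/(8d)\le S_{\min}+dL^2H_k/2$ pathwise, using $\|\hat g_t-m_t\|\le2dL$ from Lemma~\ref{lem:oneshot-residual-anytime}. Hence both $E_k$ and $\ln N$ are $O(\log(dLH_k))$ deterministically, and they can be pulled out of the expectation before applying Jensen.

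A second subtlety is that Lemma~\ref{lem:dyn-epoch} is stated with $R^{\circ}_{k,e}$ inside an expectation only implicitly. I would apply it pathwise, conditionally on the epoch's start, and sum before taking the outer expectation.
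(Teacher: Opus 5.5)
Your proposal follows the paper's proof essentially step for step: sum Lemma~\ref{lem:dyn-epoch}, use Lemma~\ref{lem:sum-epochs-in-phase} for the $\sqrt{S_{k,e}}$ terms, apply Cauchy--Schwarz across epochs with $\sum_eP_{k,e}\le P_k$ and $\sum_eR^{\circ}_{k,e}\le R^{\mathrm{obs}}_k$, then use Jensen and Lemma~\ref{lem:phase-residual-vs-Sbar}; your pathwise bound $S_{k,E_k}\le S_{\min}+dL^2H_k/2$, which gives deterministic control of $E_k$ and $\ln N$, makes rigorous a step the paper only asserts. The one difference is that you drop the $1/H_k$ in $\mathbb E[R^{\mathrm{obs}}_k]\le\frac{64}{7}d\bar S_k+\frac{72}{7}S_{\min}/H_k$, which leaves an additive $\sqrt{dS_{\min}(D^2+DP_k)}$ that grows with $P_k$ and is hidden only under the paper's ``$\bar S_k+1$'' convention, whereas the paper keeps the $1/H_k$ and uses $P_k\le DH_k$ to make this contribution the $P_k$-free $O(D\sqrt{dS_{\min}})$ (although the paper's own final display reverts to the same looser form $\sqrt{d\,\mathbb E[S_{k,E_k}](D^2+DP_k)}$, so the stated $\widetilde O$ claim is unaffected).
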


\begin{proof}
Let the epochs be collected into $I_k$, and thus we have a summation over the different path elements:
\begin{equation*}
\regret^{\mathrm{dyn}}(I_k) = \sum_{e=1}^{E_k} \regret^{\mathrm{dyn}}(J_{k,e}).
\end{equation*}
Using Lemma~\ref{lem:dyn-epoch}, we can write:
\begin{align*}
\sum_{e=1}^{E_k} \mathbb{E}[\regret^{\mathrm{dyn}}(J_{k,e})] 
&\le \mathbb{E}\Bigl[\sum_{e=1}^{E_k}  \underbrace{4 D \sqrt{V_{k,e} \ln N}}_{\text{Term (A)}} + \underbrace{\frac{3}{\sqrt{2}}\sqrt{R_{k,e}^{\circ} (D^2 + 2D P_{k,e})}}_{\text{Term (B)}}\Bigr] \\&+ \underbrace{(3D+LD)E_k}_{\text{Term (C)}}+ \mathbb{E}\Bigl[\underbrace{\frac{D}{2}\sqrt{S_{k,e}}+\left(L+\frac{LD}{r}\right)\frac{\sqrt{S_{k,e}}}{d\beta}}_{\text{Term (D)}}
\Bigr],
\end{align*}
where we denote $V_{k,e} = 8d S_{k,e}$.

For Term (A): Using Lemma~\ref{lem:sum-epochs-in-phase}, we have $\sum_{e=1}^{E_k}\sqrt{S_{k,e}}\leq 4\sqrt{S_{k,E_k}}$, thus we obtain:
\begin{equation*}
    \sum_{e=1}^{E_k} 4 D \sqrt{V_{k,e} \ln N}\le 16 D \sqrt{8d S_{k,E_k} \ln N}.
\end{equation*}

For Term (B): 
Denote $A_{k,e}=D^2 + 2D P_{k,e}$. By
Cauchy--Schwarz, we have
\begin{equation*}
\begin{aligned}
    \sum_{e=1}^{E_k}
    \sqrt{A_{k,e}R_{k,e}^{\circ}}
    &\le
    \sqrt{
        \sum_{e=1}^{E_k}A_{k,e}
    }
    \sqrt{
        \sum_{e=1}^{E_k}R_{k,e}^{\circ}
    }.
\end{aligned}
\end{equation*}
We first bound the path-length factor. Therefore, we have
\begin{equation*}
\begin{aligned}
    \sum_{e=1}^{E_k}A_{k,e}=
    \sum_{e=1}^{E_k}(D^2+2DP_{k,e})=
    E_kD^2+2D\sum_{e=1}^{E_k}P_{k,e}.
\end{aligned}
\end{equation*}
The sum of the epoch path-lengths is at most the phase path-length, i.e.,  
$\sum_{e=1}^{E_k}P_{k,e}\le P_k$. Moreover, $E_k$ is logarithmic in the phase length and sensitivity range, and
therefore it is absorbed by $\widetilde O(\cdot)$. Hence, pathwise,
\begin{equation*}
    \sum_{e=1}^{E_k}A_{k,e}
    \le
    \widetilde O(D^2+DP_k).
\end{equation*}
For the residual factor, since $J_{k,e}^{\circ}\subseteq J_{k,e}$, we have
\begin{equation*}
    \sum_{e=1}^{E_k}R_{k,e}^{\circ}
    \le
    \sum_{e=1}^{E_k}
    \sum_{t\in J_{k,e}}\|\hat g_t-m_t\|^2
    =
    R_k^{\mathrm{obs}}.
\end{equation*}
Combining these estimates yields
\begin{equation*}
\begin{aligned}
    \sum_{e=1}^{E_k}
    \sqrt{A_{k,e}R_{k,e}^{\circ}}
    &\le
    \widetilde O\left(
        \sqrt{(D^2+DP_k)R_k^{\mathrm{obs}}}
    \right).
\end{aligned}
\end{equation*}
Taking expectation and applying Jensen's inequality to the concave function
$\sqrt{x}$ gives
\begin{equation*}
\begin{aligned}
    \mathbb E\left[
        \sum_{e=1}^{E_k}
        \sqrt{A_{k,e}R_{k,e}^{\circ}}
    \right]
    &\le
    \widetilde O\left(
        \sqrt{
            (D^2+DP_k)\,
            \mathbb E[R_k^{\mathrm{obs}}]
        }
    \right).
\end{aligned}
\end{equation*}
By Lemma~\ref{lem:phase-residual-vs-Sbar}, we have
\begin{equation*}
    \mathbb E[R_k^{\mathrm{obs}}]
    \le
    \frac{64}{7}d\bar S_k
    +
    \frac{72}{7}\frac{S_{\min}}{H_k}.
\end{equation*}
Therefore, it follows that
\begin{align*}
    \mathbb E\left[
        \sum_{e=1}^{E_k}
        \sqrt{A_{k,e}R_{k,e}^{\circ}}
    \right]
    &\le
    \widetilde O\left(
        \sqrt{
            (D^2+DP_k)
            \left(
                d\bar S_k+\frac{S_{\min}}{H_k}
            \right)
        }
    \right) \\
    &\le
    \widetilde O\left(
        \sqrt{d\bar S_k(D^2+DP_k)}
        +
        \sqrt{
            \frac{S_{\min}}{H_k}(D^2+DP_k)
        }
    \right).
\end{align*}

It remains to simplify the second term. Since all comparators lie in
$\mathcal X$, whose diameter is at most $D$, the movement in one round is at
most $D$. Since phase $k$ has length at most $H_k$, we have $P_k\le DH_k$.
Thus, we have
\begin{equation*}
\begin{aligned}
    \sqrt{
        \frac{S_{\min}}{H_k}(D^2+DP_k)
    }
    &\le
    \sqrt{
        \frac{S_{\min}}{H_k}(D^2+D^2H_k)
    }  \\
    &=
    D\sqrt{
        S_{\min}\left(\frac{1}{H_k}+1\right)
    }  \\
    &\le
    \sqrt2\,D\sqrt{S_{\min}}.
\end{aligned}
\end{equation*}
Since $d\ge1$, this is bounded by $\sqrt2\,D\sqrt{dS_{\min}}$. Therefore the best-expert contribution is also bounded by
\begin{equation*}
    \widetilde O\left(
        \sqrt{d\bar S_k(D^2+DP_k)}
        +
        D\sqrt{dS_{\min}}
    \right).
\end{equation*}

For Term (D): From above and using $\sum_{e=1}^{E_k} \sqrt{S_{k,e}} \le 4\sqrt{S_{k,E_k}}$, we have
\begin{equation*}
\sum_{e=1}^{E_k} \frac{D}{2}\sqrt{S_{k,e}}+\left(L+\frac{LD}{r}\right)\frac{\sqrt{S_{k,e}}}{d\beta} \le \widetilde{O}\left( D\sqrt{S_{k,E_k}}\right).
\end{equation*}

Combining Terms (A), (B), (C) and (D), and noting that terms with $E_k\leq O(\log T)$ are absorbed into $\widetilde{O}$, we have:
\begin{equation*}
\mathbb{E}\bigl[\mathfrak{R}^{\mathrm{dyn}}(I_k)\bigr] \le \widetilde{O}\left( \sqrt{d\mathbb{E}[S_{k,E_k}] (D^2 + D P_k)}\right).
\end{equation*}
Substituting the condition $\mathbb{E}[S_{k,E_k}] = O(\bar{S}_k + S_{\min})$ as shown in Lemma~\ref{lem:phase-regret}, the dominant term becomes:
\begin{equation*}
\mathbb{E}\bigl[\mathfrak{R}^{\mathrm{dyn}}(I_k)\bigr] \le \widetilde{O}\left( \sqrt{d\bar{S}_k (D^2 + D P_k)} \right).
\end{equation*}
\end{proof}

\subsection{Proof of Theorem~\ref{thm:dynamic_regret}}
\begin{proof}
    The regret of the $t$-th round can be expressed using the following decomposition:
    \begin{equation*}
        f_{t}(x_{t})-f_{t}(u_{t}) =\underbrace{f_t(x_t)-f_t(y_t)}_{\text{Term (A)}}
        +\underbrace{f_t(y_t)-f_t(u'_t)}_{\text{Term (B)}}+\underbrace{f_t(u'_t)-f_t(u_t)}
        _{\text{Term (C)}}.
    \end{equation*}
    The Terms (A) and (C) can be bounded using the $L$-Lipschitz property of the function $f_{t}$, as described below:
    \begin{itemize}
        \item For (A): Since $x_{t} = y_{t} + \delta v_{t}$ with
            $\|v_{t}\|_{2}=1$, we have $\|x_{t}-y_{t}\|_{2} = \delta$, implying
            $f_{t}(x_{t})-f_{t}(y_{t}) \le L\delta$.

        \item For (C): Since $u'_{t} = (1-\alpha)u_{t}$ with $\alpha = \delta/r$,
            we have $\|u'_{t}-u_{t}\|_{2} = \alpha\|u_{t}\|_{2} \le \alpha D = \frac{D}{r}
            \delta$. Thus, $f_{t}(u'_{t})-f_{t}(u_{t}) \le \frac{LD}{r}\delta$.
    \end{itemize}
    Summing over all rounds $t=1,\dotsc,T$, we have:
    \begin{equation*}
        \regret^{\mathrm{dyn}}_{T}(u_{1:T}) \le\sum_{t=1}^{T} \bigl(f_{t}(y_{t})-
        f_{t}(u'_{t})\bigr)+ \Bigl(L+\frac{LD}{r}\Bigr)\delta T.
    \end{equation*}

    Let $g_{t}^{y} = \nabla f_{t}(y_{t})$. By the convexity of $f_{t}$,
    \begin{equation*}
        f_{t}(y_{t})-f_{t}(u'_{t}) \le \langle g_{t}^{y},\, y_{t}-u'_{t} \rangle.
    \end{equation*}
    Thus, we can rewrite our expression for regret given as:
    \begin{equation*}
        \regret^{\mathrm{dyn}}_{T}(u_{1:T}) \le\sum_{t=1}^{T} \langle g_{t}^{y},\,
        y_{t}-u'_{t} \rangle +\Bigl(L+\frac{LD}{r}\Bigr)\delta T.
    \end{equation*}

    We also know that:
    \begin{equation*}
        \langle g_{t}^{y},\, y_{t}-u'_{t} \rangle =\langle \hat g_{t},\, y_{t}-u'
        _{t} \rangle +\langle g_{t}^{y}-\hat g_{t},\, y_{t}-u'_{t} \rangle.
    \end{equation*}
    For the first term on the right-hand side, we can use Lemma~\ref{lem:dynamic_optimistic_ineq} to apply the bound. 

    For the second term, we apply Lemma~\ref{lem:bias}. Recall that $\mathbb{E}[\hat
    g_{t} \mid \mathcal{H}_{t-1}] = g_{t}^{y} + b_{t}$ with $\|b_{t}\|_{2} \le \frac{d}{2}
    \beta\delta$. Taking an expectation of both sides gives:
    \begin{equation*}
        \mathbb{E}\Bigl[\langle g_{t}^{y}-\hat g_{t},\, y_{t}-u'_{t} \rangle\Bigr] = \mathbb{E}\Bigl[\langle -b_{t},\, y_{t}-u'_{t} \rangle\Bigr] \le \mathbb{E}
        \Bigl[\|b_{t}\|_{2} \|y_{t}-u'_{t}\|_{2}\Bigr] \le \frac{Dd\beta\delta}{2},
    \end{equation*}
    where we used $\|y_{t}-u'_{t}\|_{2} \le D$. To obtain the total contribution of this term, we sum over all rounds $t$:
    \begin{equation}
        \mathbb{E}\left[\sum_{t=1}^{T} \langle g_{t}^{y}-\hat g_{t},\, y_{t}-u'_{t}
        \rangle\right] \;\le\; \frac{Dd\beta\delta T}{2}. \label{eq:dyn_bias_term}
    \end{equation}

    Therefore, we have
    \begin{equation*}
        \mathbb{E}[\regret^{\mathrm{dyn}}_{T}] \le\frac{D^{2}+2DP_{T}}{2\eta}+\eta
        \sum_{t=1}^{T} \mathbb{E}\|\hat g_{t}-m_{t}\|_{2}^{2} +\frac{Dd\beta\delta
        T}{2}+\Bigl(L+\frac{LD}{r}\Bigr)\delta T.
    \end{equation*}
    Finally, we can bound $\sum_{t=1}^{T}\mathbb{E}\|\hat g_{t}-m_{t}\|^{2}$ using Corollary~\ref{cor:second-moment-play}. Plugging in the bound derived from the second moment expression gives us the main result:
    \begin{align*}
        \mathbb{E}\bigl[\regret^{\mathrm{dyn}}_{T}(u_{1:T})\bigr] \le  \frac{D^{2}+ 2D P_{T}}{2\eta}+\eta \Bigl( 4d\,\bar{S}_{T}+ (d^{2}/2 + 4d)\beta^{2}\delta^{2}T \Bigr) \nonumber+\frac{D d \beta \delta T}{2}+ \Bigl( L + \frac{LD}{r}\Bigr) \delta T.
    \end{align*}

    By applying bounds on $\eta$ and $\delta$ suitably chosen as in Theorem~\ref{thm:dynamic_regret}, we obtain our final bound on the regret.
\end{proof}

\subsection{Proof of Theorem~\ref{thm:tp-vr-opt-pp}}
\begin{proof}
We divide the total time horizon into $K$ phases $\{I_1,I_2,\ldots,I_K\}$. The expected dynamic regret is equal to the sum of the expected dynamic regrets in all $K$ phases.
Let $\regret_k = \mathbb{E}\big[\regret^{\mathrm{dyn}}(I_k)\big]$. By using the linearity of expectation, we have:

\begin{equation*}
\mathbb{E}\bigl[\regret_T^{\mathrm{dyn}}(u_{1:T})\bigr] = \sum_{k=1}^K \regret_k.
\end{equation*}
From Lemma~\ref{lem:dyn-phase}, we know the following for phase $k$:
\begin{equation*}
\regret_k \le C \left(\sqrt{d\bar{S}_k(D^2 + D P_k)}\right) \cdot \text{polylog}(T),
\end{equation*}
where $C$ is a universal constant, $\bar{S}_k = \mathbb{E}[\sum_{t \in I_k} \|\nabla f_t(x_t) - m_t\|^2]$ is the cumulative prediction error in phase $k$, and $P_k$ is the path-length of the comparator in phase $k$.

Further, we can write:
\begin{align*}
\sum_{k=1}^K \regret_k 
&\le \widetilde{O}(1) \sum_{k=1}^K \sqrt{d\bar{S}_k(D^2 + D P_k)} \\
&\le \widetilde{O}(1) \sqrt{\sum_{k=1}^K d\bar{S}_k} \cdot \sqrt{\sum_{k=1}^K (D^2 + D P_k)}.
\end{align*}

In $\sum_{k=1}^K d\bar{S}_k$, we can replace the cumulative prediction errors with the overall prediction error $\bar{S}_T$, and all constant terms sum to $Kd^2$. Therefore,
\begin{equation*}
    \sum_{k=1}^K d\bar{S}_k = d\bar{S}_T.
\end{equation*}
Similarly, we see that the path lengths are additive across non-overlapping intervals, therefore $\sum_{k=1}^K P_k = P_T$, and the $K$ terms from the diameter squared sum to $K D^2$:
\begin{equation*}
    \sum_{k=1}^K (D^2 + D P_k) = K D^2 + D P_T \le O(D^2 \log T + D P_T).
\end{equation*}

We substitute the results back into the inequality to arrive at the following:
\begin{align*}
\mathbb{E}\bigl[\regret_T^{\mathrm{dyn}}(u_{1:T})\bigr] 
&\le \widetilde{O}\left( \sqrt{d\bar{S}_T (\log T \cdot D^2 + D P_T)} \right).
\end{align*}
We simplify the expression to the leading order term:
\begin{equation*}
\mathbb{E}\bigl[\regret_T^{\mathrm{dyn}}(u_{1:T})\bigr] \le \widetilde{O}\left( \sqrt{d \bar{S}_T (D^2 + D P_T)} \right).
\end{equation*}

\end{proof}

}

    \bibliographystyle{IEEEtran}
    \bibliography{ref2}

\end{document}